\newcommand{\maxs}[1]{{\color{blue}{#1}}}
\algnewcommand\algorithmicinput{\textbf{INPUT:}}
\algnewcommand\INPUT{\item[\algorithmicinput]}
\algnewcommand\algorithmicoutput{\textbf{OUTPUT:}}
\algnewcommand\OUTPUT{\item[\algorithmicoutput]}
\newcommand{\calV}{\mathcal{V}}
\newcommand{\calM}{\mathcal{M}}
\newcommand{\calD}{\mathcal{D}}
\newcommand{\calL}{\mathcal{L}}
\newcommand{\calF}{\mathcal{F}}
\newcommand{\fraka}{\mathfrak{a}}
\newcommand{\calS}{\mathcal{S}}
\newcommand{\calA}{\mathcal{A}}
\newcommand{\calP}{\mathcal{P}}
\newcommand{\calX}{\mathcal{X}}
\newcommand{\calG}{\mathcal{G}}
\newcommand{\TV}{\mathrm{TV}}
\newcommand{\calN}{\mathcal{N}}
\newcommand{\Prit}{\mathbf{P}}
\newcommand{\Qit}{\mathbf{Q}}
\newcommand{\KL}{\mathsf{KL}}
\newcommand{\Sym}{\mathbb{S}}
\newcommand{\Alg}{\mathsf{Alg}}
\newcommand{\rmd}{\mathrm{d}}
\newcommand{\GOE}{\mathrm{GOE}}
\newcommand{\vj}{v^{(j)}}
\newcommand{\vtil}{\widetilde{v}}
\newcommand{\wtil}{\widetilde{w}}
\newcommand{\vi}{v^{(i)}}
\newcommand{\vk}{v^{(k)}}
\newcommand{\wk}{w^{(k)}}
\newcommand{\vkplus}{v^{(k+1)}}
\newcommand{\viplus}{v^{(i+1)}}
\newcommand{\vT}{v^{(T)}}
\newcommand{\wone}{w^{(1)}}
\newcommand{\viminus}{v^{(i-1)}}
\newcommand{\wi}{w^{(i)}}
\newcommand{\vone}{v^{(1)}}
\newcommand{\vktil}{\vtil^{(k)}}
\newcommand{\wktil}{\wtil^{(k)}}
\newcommand{\wjtil}{\wtil^{(j)}}
\newcommand{\wjplustil}{\wtil^{(j+1)}}
\newcommand{\vjplustil}{\vtil^{(j+1)}}
\newcommand{\wonetil}{\wtil^{(1)}}
\newcommand{\wiminustil}{\wtil^{(i-1)}}
\newcommand{\viminustil}{\vtil^{(i-1)}}
\newcommand{\witil}{\wtil^{(i)}}
\newcommand{\vonetil}{\vtil^{(1)}}
\newcommand{\R}{\mathbb{R}}
\newcommand{\I}{\mathbb{I}}
\newcommand{\Exp}{\mathbb{E}}
\newcommand{\Q}{\mathbb{Q}}
\newcommand{\Var}{\mathrm{Var}}
\renewcommand{\Pr}{\mathbb{P}}
 \theoremstyle{plain}
\theoremstyle{plain}
\newtheorem{thm}{Theorem}[section]
\newtheorem{lem}[thm]{Lemma}
\newtheorem{cor}[thm]{Corollary}
\newtheorem{prop}[thm]{Proposition}
\theoremstyle{definition}
\newtheorem{defn}{Definition}[section]
\newtheorem{exmp}{Example}[section]
\newtheorem{rem}{Remark}[section]
\newtheorem{obs}{Observation}[section]
\title{\Large \bf On the Gap Between Strict-Saddles and True Convexity: An $\Omega(\log d)$ Lower Bound for Eigenvector Approximation}
\author{ Max Simchowitz\thanks{UC Berkeley, CA. msimchow@berkeley.edu. } \and
Ahmed El Alaoui\thanks{UC Berkeley, CA. elalaoui@berkeley.edu. }
\and
Benjamin Recht\thanks{UC Berkeley, CA.  brecht@berkeley.edu.} }
\date{}
\begin{document}
\begin{titlepage}
\maketitle{}
\begin{abstract}

We prove a \emph{query complexity} lower bound on rank-one principal component analysis (PCA). We consider an oracle model where, given a symmetric matrix $M \in \R^{d \times d}$, an algorithm is allowed to make $T$ \emph{exact} queries of the form $w^{(i)} = Mv^{(i)}$ for $i \in \{1,\dots,T\}$, where $v^{(i)}$ is drawn from a distribution which depends arbitrarily on the past queries and measurements $\{v^{(j)},w^{(j)}\}_{1 \le j \le i-1}$. We show that for a small constant $\epsilon$, any adaptive, randomized algorithm which can find a unit vector $\widehat{v}$ for which $\widehat{v}^{\top}M\widehat{v} \ge (1-\epsilon)\|M\|$, with even small probability, must make $T = \Omega(\log d)$ queries. In addition to settling a widely-held folk conjecture, this bound demonstrates a fundamental gap between convex optimization and ``strict-saddle'' non-convex optimization~\cite{jin2017escape,ge2015escaping,lee2016gradient} of which PCA is a canonical example: in the former, first-order methods can have dimension-free iteration complexity, whereas in PCA, the iteration complexity of gradient-based methods must necessarily grow with the dimension. Our argument proceeds via a reduction to estimating the rank-one spike in a deformed Wigner model. We establish lower bounds for this model by developing a ``truncated'' analogue of the $\chi^2$ Bayes-risk lower bound of Chen et al.~\cite{chen2016bayes}. 


\end{abstract}
\end{titlepage}

\section{Introduction}
A major open problem in machine learning and optimization is identifying classes of non-convex problems that admit efficient optimization procedures. Motivated by the empirical successes of matrix factorization/completion~\cite{zhou2008large,rennie2005fast,lee1999learning}, sparse coding~\cite{olshausen1997sparse}, phase retrieval~\cite{fienup1982phase} and deep neural networks~\cite{krizhevsky2012imagenet,bahdanau2014neural}, a growing body of theoretical work has demonstrated that many gradient and local-search heuristics - inspired by convex optimization -  enjoy sound theoretical guarantees in a wide variety of non-convex problems~\cite{candes2015phase,boumal2016nonconvex,netrapalli2013phase,janzamin2015beating,arora2012computing,arora2015simple,sun2016geometric,sun2017complete,bhojanapalli2016dropping,zheng2015convergent,tu2015low}. Notably,  Ge et al.~\cite{ge2015escaping} introduced a polynomial-time noisy gradient algorithm for computing approximate local-minima of non-convex objectives which have the ``strict-saddle property'': that is, objectives whose first-order stationary points are either local minima, or saddle points at which the Hessian has a strictly negative eigenvalue. It has since been shown that many well-studied non-convex problems can be formulated as ``strict saddle'' objectives whose local minimizers are all globally optimal (or near-optimal)~\cite{sun2015nonconvex,lee2016gradient,ge2016matrix,bhojanapalli2016dropping}, thereby admitting efficient optimization by local search.

Recently, Jin et al.~\cite{jin2017escape} proposed a gradient algorithm which finds an approximate local minimum of a strict saddle objective in a number of iterations which matches first-order methods for comparable convex problems, up to poly-logarithmic factors in the dimension. This might seem to suggest that, from the perspective of first-order optimization, strict saddle objectives and truly convex problems are identical. But there is a caveat: unlike the algorithm proposed by Jin et al.~\cite{jin2017escape}, the iteration complexity of first order methods for optimizing truly convex functions typically \emph{has no explicit dependence on the ambient dimension}~\cite{bubeck2015convex,nesterov1998introductory}. This begs the question: 
\begin{quote} 
Does the iteration complexity of first order methods for strict saddle problems necessarily depend on the ambient dimension? Stated otherwise, is there a gap in the complexity of first-order optimization for ``almost-convex'' and ``truly convex'' problems? 
\end{quote}

This paper answers the above questions in the affirmative by considering perhaps the simplest and most benign strict saddle problem: approximating the top eigenvector of a symmetric matrix $M \in \R^{d\times d}$, also known as rank-one PCA. The latter is best cast as a strict-saddle problem with objective function to be maximized $v \mapsto v^{\top}Mv$, subject to the smooth equality constraint $\|v\|^2 = 1$~\cite{ge2015escaping,lee2016gradient}. We show that the gradient query complexity of rank-one PCA necessarily scales with the ambient dimension, even in the ``easy'' regime where the eigengap is bounded away from zero. 

More precisely, we consider an oracle model, where given a symmetric matrix $M \in \R^{d \times d}$, an algorithm is allowed to make $T$ \emph{exact} queries of the form $w^{(i)} = Mv^{(i)}$ for $i \in \{1,\dots,T\}$, where $v^{(i)}$ is drawn from a distribution which may depend arbitrarily on the past queries $\{v^{(j)},w^{(j)}\}_{1 \le j \le i-1}$; these queries are precisely the rescaled gradients of the objective $v \mapsto v^{\top}Mv$. We show (Theorem~\ref{MainTheorem}) that any adaptive, randomized algorithm which finds a unit vector $\widehat{v}$ for which $\widehat{v}^{\top}M\widehat{v} \ge \Omega(\|M\|)$ for any symmetric matrix $M$ whose second-eigenvalue is at most $\gamma$ times its leading eigenvalue in magnitude \emph{must} make at least $T = \Omega(\log d / \log (1/\gamma))$ queries. This matches the performance of the power method and Lanczos algorithms as long as $\gamma$ is bounded away from one. 

In fact, we show that if $T$ is bounded by a small constant times $\log d/\log (1/\gamma)$, then the probability of finding a unit vector $\widehat{v}$ with objective value at least $\Omega(\|M\|)$ is as small as $e^{-d^{\Omega(1)}}$.  We also show (Theorem~\ref{MainTheoremDetection}) that given any $\lambda \ge 2 + \Omega(1)$, it takes $\Omega(\log d / \log \lambda )$ adaptive queries to test if the operator norm of $M$ is above the threshold $\lambda$, or below $2 + o(1)$. Our lower bounds are based on the widely studied deformed Wigner random matrix model~\cite{lelarge2016fundamental,feral2007largest}, suggesting that the $\log d$ factor should be regarded as necessary for ``typical'' symmetric matrices $M$, not just for some exceptionally adversarial instances. 
\subsection{Proof Techniques}
We reduce the problem of top-eigenvector computation to adaptively estimating the rank-one component $\theta$ of the deformation $M =  \lambda \theta\theta^{\top} + W$ of a Wigner matrix $W$~\cite{anderson2010introduction,feral2007largest,lelarge2016fundamental} in our query model. Here, $\lambda$ controls the eigengap, and $\theta$ is drawn uniformly from the unit sphere. Unlike many lower bounds for active learning~\cite{garivier2016optimal,agarwal2009information,jamieson2012query}, it is insufficient to assume that the algorithm may take the most informative measurements in hindsight, since that would entail estimating $\theta$ with only $O(1)$-measurements. As a first pass, we use a recursive application of Fano's method, similar to the strategy adopted in Price and Woodruff~\cite{price2013lower} for proving lower bounds on adaptive estimation. This method bounds the rate at which information is accumulated by controlling the information obtained from the $i$-th measurement in terms of the information gained from measurements $1,\dots,i-1$. Unfortunately, in our setting, this technique can only establish a lower bound of $\Omega(\log d/\log \log d)$ queries.

To sharpen our results, we adopt an argument based on a $\chi^2$-divergence analogue of Fano's inequality, introduced in Chen et al.~\cite{chen2016bayes}. But whereas the $\KL$-divergence computations in Fano's inequality allow us to decompose the information obtained at each round into a sum, the adaptivity of the algorithm introduces correlations between the likelihood ratios that appear in the $\chi^2$ computations. Thus, we need to carefully truncate the distributions that arise in our lower bound construction, and restrict them to some carefully-defined ``good events''. This permit us to bound the rate of information-accumulation. 

In general, the probability of these good events conditioned on the spike $\theta$ may vary, and thus treating the truncated probabilities as conditional distributions introduces serious complications. To simplify things, we observe that the theory of $f$-divergences, from which Fano's inequality and the $\chi^2$-analogue in Chen et al.~\cite{chen2016bayes} are derived, can be generalized straightforwardly to non-normalized measures, i.e., truncated probability distributions. We therefore derive a general version of the Bayes-risk lower bound from Chen et al.~\cite{chen2016bayes} for non-normalized distributions, which, when specialized to $\chi^2$, enables us to prove a sharp lower bound of $\Omega(\log d)$ queries. 

To prove the lower bound on testing the spectral norm on $\|M\|$, we reduce the problem to that of testing the null hypothesis $M = W$ for a Wigner matrix $W$, against an alternative hypothesis  $ M = \lambda \theta \theta^{\top} + W $, where $\theta$ is drawn uniformly on the sphere and some sufficiently positive $\lambda$. The bound mainly follows from Pinsker's inequality (similarly to the combinatorial hypothesis testing lower bound in Addario-Berry et al.~\cite{addario2010combinatorial}) but again with the added nuance of the need to truncate our likelihood ratios due to the adaptivity of the algorithm. 
\subsection{Related Work}


\textbf{Oracle Lower Bounds for Optimizations.} In their seminal work, Nemirovskii and Yudin~\cite{nemirovskii1983problem} established lower bounds on the number of calls an algorithm must make to a gradient-oracle in order to approximately optimize a convex function. While they match known upper bounds in terms of dependence on relevant parameters (accuracy, condition number, Lipschitz constant), the constructions are regarded as brittle~\cite{Shamir_LB_No_Dim}: the construction considers a worst-case initialization, and makes the strong assumption that the point whose gradient is queried lies in the affine space spanned by the gradients queried up to that iterate. Arjevani and Shamir~\cite{arjevani2016oracle} addresses some of the weakness of the lower bounds~\cite{nemirovskii1983problem} (e.g., allowing some randomization), but at the expense of placing more restrictive assumptions of the class of optimization algorithms considered. In contrast, our lower bound places no assumptions on how the algorithm chooses to make its successive queries. 

In recent years, lower bounds have been established for stochastic convex optimization ~\cite{agarwal2009information,jamieson2012query} where each gradient- or function-value oracle query is corrupted with i.i.d.\ noise. While these lower bounds are information-theoretic, and thus unconditional, they do not hold in the setting considered in this work, where we are allowed to make exact, noiseless queries. As mentioned above, the proof strategy for proving lower bounds on the exact-oracle model is quite different than in the noisy-oracle setting.

\noindent\textbf{Active Learning and Adaptive Data Analysis.}  
Our proof techniques casts the eigenvector-computation as a type of sequential estimation problem, which have been studied at length in the context of sparse recovery and active adaptive compressed sensing~\cite{arias2013fundamental,price2013lower,castro2017adaptive,castro2014adaptive}. Due to the noiseless oracle model, our setting is most similar to~\cite{price2013lower}, whereas~\cite{arias2013fundamental,castro2017adaptive,castro2014adaptive} study measurement noise. Our setting also exhibits similarities to the stochastic linear bandit problem~\cite{soare2014best}. More broadly, query complexity has received much recent attention in the context of communication-complexity~\cite{anshu2017lifting,nelson2017optimal}, in which lower bounds on query complexity imply corresponding bounds against communication via lifting theorems. Similar ideas also arise in understanding the implication of memory-constraints on statistical learning~\cite{steinhardt2015memory,steinhardt2015minimax,shamir2014fundamental}. 

\textbf{Local Search for Non-Convex Optimization.}
As mentioned in the introduction, there has been a flurry of recent work establishing the efficacy and correctness of local search algorithms in numerous non-convex problems, including Dictionary Learning~\cite{arora2015simple,sun2017complete}, Matrix Factorization~\cite{bhojanapalli2016dropping,tu2015low,zheng2015convergent}, Matrix Completion~\cite{ge2016matrix,bhojanapalli2016global}, Phase Retrieval~\cite{boumal2016nonconvex,sun2016geometric,candes2015phase}, and training neural networks~\cite{janzamin2015beating}. Particular attention has been devoted to avoiding saddle points in nonconvex landscapes~\cite{lee2016gradient,ge2015escaping,jin2017escape,sun2015nonconvex}, which, without further regularity assumptions, are known to render the task of finding even local minimizers computationally hard~\cite{murty1987some}. Recent work has also considered second-order algorithms for non-convex optimization~\cite{sun2017complete,agarwal2016finding,carmon2016gradient}. However, to the best of the authors' knowledge, the lower bounds presented in this paper are the first which show a gap in the iteration complexity of first-order methods for convex and ``benign'' non-convex objectives.

\textbf{PCA, Low-Rank Matrix Approximation and Norm-Estimation.}
The growing interest in non-convexity has also spurred new results in eigenvector computation, motivated in part by the striking ressemblence between eigenvector approximation algorithms (e.g.\ the power method, Lanczos Algorithm~\cite{demmel1997applied}, Oja's algorithm~\cite{shamir2015fast}, and newer, variance-reduced stochastic gradient approaches~\cite{garber2016faster,shamir2015fast}) and analogous first-order convex optimization procedures. Recent works have also studied PCA in the streaming~\cite{shamir2015fast}, communication-bounded~\cite{garber2017communication,balcan2016communication}, and online learning settings~\cite{garber2015online}. More generally, eigenvector approximation is widely regarded as a fundamental algorithmic primitive in machine learning~\cite{jolliffe2002principal}, numerical linear algebra~\cite{demmel1997applied}, optimization, and numerous graph-related learning problems~\cite{spielman2007spectral,page1999pagerank,ng2001spectral}. While lower bounds have been established for PCA in the memory- and communication-limited settings~\cite{boutsidis2016optimal,shamir2014fundamental}, we are unaware of lower bounds that pertain to the noiseless query model studied in this work. 

Rank-one PCA may be regarded one of the simplest low-rank matrix approximation problems~\cite{horn2012matrix}. The numerical linear algebra community has studied low-rank matrix approximation far more broadly, with an eye towards computation-, memory-, and communication-efficient algorithms \cite{nelson2014deterministic,nelson2014time,razenshteyn2016weighted}, as well as algorithms which take advantage of the sparsity of their inputs~\cite{clarkson2017low,song2016low,clarkson2013low}. Previous work has also studied the problem of estimating functions of a matrix's singular values~\cite{li2016approximating}, including the special case of estimating Schatten $p$-norms~\cite{li2014sketching}. To the best of our knowledge, lower bounds for sketching concern the cases where the sketches are chosen non-adaptively.



\section{Statement of Main Results}
Let $\|\cdot\|$ denote the $2$-norm on $\R^{d}$, and let $\calS^{d-1} := \{x\in \R^{d}:\|x\| = 1\}$ denote the unit sphere. Let $\Sym^{d\times d}$ denote the set of symmetric $d\times d$ matrices, and for $M \in \Sym^{d\times d}$, we let $\lambda_1(M) \ge \lambda_2(M) \ge \dots \ge \lambda_d(M)$ denote its eigenvalues in decreasing order, $v_1(M),v_2(M), \dots, v_{d}(M)$ denote the corresponding eigenvectors, and overload $\|M\|$ to denote its operator norm.
\begin{defn}[Eigenratio]For $\gamma \in [0,1)$, we define the set of matrices with positive leading eigenvector and bounded \emph{eigenratio} between its first and second eigenvalues:
\begin{eqnarray}
\calM_{\gamma}:= \left\{M \in \Sym^{d\times d} :  \lambda_1(M) = \|M\| >0,  \frac{|\lambda_j(M)|}{\lambda_1(M)} \le \gamma \quad \forall j\ge 2\right\}.
\end{eqnarray}
\end{defn}
The iteration complexity of rank-one PCA is typically stated in terms of the eigengap $1 - \gamma$. This work instead focuses on lower bounds which hold when the eigengap is close to $1$, motivating our parameterization in terms of the eigenratio instead. We now define our query model:
\begin{defn}[Query Model]\label{Query_def} And \emph{adaptive query algorithm} $\Alg$ with \emph{query complexity} $T \in \mathbb{N}$ is an algorithm which, for rounds $i \in [T]$, queries an oracle with a vector $v^{(i)}$, and receives a noiseless response  $w^{(i)} = Mv^{(i)}$. At the end $T$ rounds, the algorithm returns a vector $\widehat{v} = v^{(T+1)} \in \calS^{d-1}$. The queries $v^{(i)}$ and output $\widehat{v}$ are allowed to be randomized and adaptive, in that $v^{(i)}$ is a function of $\{(v^{(1)},w^{(1)}),\dots,(v^{(i-1)},w^{(i-1)})\}$, as well as some initial random seed. We say that $\Alg$ is \emph{deterministic} if, for all $i \in [T+1]$, $v^{(i)}$ is a deterministic function of $\{(v^{(1)},w^{(1)}),\dots,(v^{(i-1)},w^{(i-1)})\}$. We say that $\Alg$ is \emph{non-adaptive} if, for all $i \in [T]$ the distribution of $v^{(i)}$ is independent of the observations $\{(w^{(1)}),\dots,w^{(i-1)})\}$, (but $\widehat{v}$ may dependent on past observations.) 
\end{defn}
\begin{exmp}
The Power Method and Lanczos algorithms~\cite{demmel1997applied} are both randomized, adaptive query methods. Even though the iterates $v^{(i)}$ of the Lanczos and power methods converge to the top eigenvector at different rates, they are nearly identical algorithms from our query-complexity perspective: both identify $M$ on the Krylov space $v^{(1)},Mv^{(1)},\dots,M^{T-1}v^{(1)}$. The only difference is that the Lanczos algorithm selects $\widehat{v}$ in a more intelligent manner than the power method. Running the power method from a deterministic initialization would be a \emph{non-randomized} algorithm. Any non-randomized algorithm, even an adaptive one, must take $d$ queries in the worse case, since $d-1$ queries can only identify a matrix up to a $d-1$ dimensional subspace. Randomized, but non-adaptive algorithms need to take $\Omega(d)$ queries as well, as established formally in Li et al.~\cite{li2014sketching}. 
\end{exmp}
\subsection{Lower Bound for Estimation}
\noindent We let $\Pr_{\Alg(M)}$ denote probability taken with respect to the randomness of $\Alg$ and a fixed $M$ as input, and $\Pr_{\Alg,M\sim \calD}$ denote probability with respect to $\Alg$ and $M$ drawn from a distribution $\calD$. The main result of this work is the following distributional lower bound:
\begin{thm}[Main Theorem] \label{MainTheorem} There exists universal positive constants $\gamma_0$, $\epsilon_0$, $d_0$, $c_1$ and $c_2$ such that the following holds: for all $\gamma \le \gamma_0$, $d \ge d_0$ and $\epsilon \le \epsilon_0$, there exists a distribution $\calD$ supported on $\calM_{\gamma}$ such that the output $\widehat{v}$ of any adaptive query algorithm $\Alg$ with query complexity $T$ satisfies 
\begin{eqnarray}
\Pr_{\Alg,M\sim \calD}\big[ \langle \widehat{v}, M\widehat{v} \rangle \ge (1-\epsilon)\|M\| \big]  \le c_1 \exp\left\{ -c_2d \cdot \left(\gamma/\gamma_0\right)^{2T}\right\}.
\end{eqnarray}
\end{thm}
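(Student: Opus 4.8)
I would take $\calD$ to be the law of $M = \lambda\,\theta\theta^{\top} + W$, where $W$ is a $\GOE$ matrix normalized so that $\|W\| \to 2$, $\theta$ is uniform on $\calS^{d-1}$, and $\lambda$ is a large enough constant multiple of $1/\gamma$, conditioned on the event $\{M \in \calM_{\gamma}\}$. Standard random-matrix concentration (operator-norm concentration $\|W\| \le 2 + o(1)$ with probability $1 - e^{-\Omega(d)}$, eigenvalue rigidity, and the BBP transition) shows this event has probability $1 - o(1)$, so the conditioning only rescales probabilities by $1 + o(1)$; on it, $\lambda_1(M) = \lambda + \lambda^{-1} + o(1)$, $|\lambda_j(M)| \le \gamma\lambda_1(M)$ for $j \ge 2$, and $\langle v_1(M),\theta\rangle^2 = 1 - \lambda^{-2} + o(1)$. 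Hence any unit $\widehat{v}$ with $\langle \widehat{v}, M\widehat{v}\rangle \ge (1-\epsilon)\|M\|$ must satisfy $\langle\widehat{v}, v_1(M)\rangle^2 \ge 1 - O\!\big(\tfrac{\epsilon}{1-\gamma}\big)$ and therefore $\langle\widehat{v},\theta\rangle^2 \ge \delta$ for a universal constant $\delta > 0$ once $\epsilon,\gamma$ are small. It thus suffices to show $\Pr_{\Alg,\, M\sim\calD}\big[\langle\widehat{v},\theta\rangle^2 \ge \delta\big] \le c_1\exp\!\big(-c_2 d\,(\gamma/\gamma_0)^{2T}\big)$.

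\textbf{Posterior mass of the success cap via a $\chi^2$ Bayes-risk bound.} Write $\mathcal{O}_T = (v^{(i)},w^{(i)})_{i \le T}$ for the transcript, $\calF_i = \spn\{v^{(1)},w^{(1)},\dots,v^{(i-1)},w^{(i-1)},v^{(i)}\}$ (so $\dim\calF_i \le 2i$), and $V_i = \spn\{v^{(1)},\dots,v^{(i-1)}\}$. Since $\widehat{v} = v^{(T+1)}$ is $\mathcal{O}_T$-measurable and, by rotational invariance of the prior within $\calF_{T+1}^{\perp}$, the conditional law of $\Proj_{\calF_{T+1}^{\perp}}\theta$ is isotropic, one gets $\langle\widehat{v},\theta\rangle^2 \le \|\Proj_{\calF_{T+1}}\theta\|^2 + \widetilde O(1/\sqrt d)$; thus success forces $\theta$ into a spherical cap of uniform measure $e^{-\Omega(d)}$ that is determined by $\mathcal{O}_T$. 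The plan is then the standard Bayes-risk route: since $\widehat{v}$ depends only on $\mathcal{O}_T$, Cauchy--Schwarz against the posterior gives $\Pr[\text{success}] \le e^{-\Omega(d)}\sqrt{1 + I_{\chi^2}(\theta;\mathcal{O}_T)}$, so everything reduces to bounding the $\chi^2$-information, or more precisely the second-moment quantity $\Exp\!\big[L_\theta L_{\theta'}\big]$ for $\theta,\theta'$ drawn independently from the prior, where $L_\theta = dP_{\mathcal{O}_T\mid\theta}/dP_{\mathcal{O}_T\mid \lambda=0}$. The key structural input is that, because $v^{(i)}$ is determined by $(w^{(1)},\dots,w^{(i-1)})$ and $W$ is $\GOE$, the conditional law of $w^{(i)}$ given the past is Gaussian with a covariance depending only on the queries, and $\theta$ enters this conditional law solely through the rank-one mean shift $\lambda\,\langle \Proj_{V_i^{\perp}}\theta,\, \Proj_{V_i^{\perp}}v^{(i)}\rangle\,\Proj_{V_i^{\perp}}\theta$ — the spike's projection onto the single genuinely new direction explored at round $i$. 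Consequently $L_\theta$ factors into per-round Gaussian likelihood ratios, and $\Exp[L_\theta L_{\theta'}]$ becomes the null-model expectation of a product of round-$i$ factors of the form $\exp\big(\langle\delta^{(i)}_\theta, \Sigma_i^{-1}\delta^{(i)}_{\theta'}\rangle\big)$. Heuristically, under the null there is no signal to lock onto, so the correlation $\|\Proj_{\calF_i}\theta\|$ stays at $O(\sqrt{i/d})$, each per-round exponent is $O(\lambda^2 i/\sqrt d)$, and the product is $1 + o(1)$ throughout the regime $T = O(\log d)$; the claimed exponent then comes from the exponentially small cap measure, with $\lambda \asymp \gamma_0/\gamma$ producing the $(\gamma/\gamma_0)^{2T}$ factor.

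\textbf{The obstacle, and the truncation fix.} The difficulty, and the reason the previous paragraph is only a heuristic, is that the parameters appearing in the per-round factors ($\Proj_{V_i^{\perp}}v^{(i)}$, $\Sigma_i$, $\delta^{(i)}_\theta$) are themselves functions of the earlier responses, so $\Exp[L_\theta L_{\theta'}]$ does not telescope into a product of independent per-round expectations, and the naive bound in fact diverges because an adaptive algorithm can occasionally — on a low-probability event — drive a single per-round factor very large. I would handle this by fixing a ``good event'' at each round — roughly, that $\|\Proj_{\calF_i}\theta\|$ and $|\langle\Proj_{V_i^{\perp}}\theta,\Proj_{V_i^{\perp}}\theta'\rangle|$ have not exceeded their typical scales and that $\Proj_{V_i^{\perp}}v^{(i)}$ and the fresh part of $w^{(i)}$ are nondegenerate — on which each per-round exponent is \emph{deterministically} bounded, so that the \emph{truncated} second moment telescopes and stays $1 + o(1)$; one must then separately control the probability of the complementary bad event $\calG^c$, which (now working under the planted law, where the correlation $\|\Proj_{\calF_i}\theta\|$ can in principle grow, but by a power-method-type argument only geometrically at rate $O(\lambda)$ from the base scale $1/\sqrt d$) is bounded by Gaussian concentration applied inductively over the $T$ rounds, giving $\Pr[\calG^c] \le c_1\exp(-c_2 d\,(\gamma/\gamma_0)^{2T})$ — and this term is what governs the final bound, the good-event contribution being only $e^{-\Omega(d)}$. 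The bookkeeping nuisance is that $\Pr[\calG \mid \theta]$ depends on $\theta$, so the truncated conditional laws are sub-probability measures, not probabilities; I would therefore generalize the $f$-divergence formalism — in particular the $\chi^2$ Bayes-risk inequality of Chen et al.~\cite{chen2016bayes} — to arbitrary non-normalized measures and apply that version, which makes the truncation seamless. I expect the design and analysis of these good events — choosing them so that simultaneously the per-round exponents are controlled on $\calG$, the bad-event probability decays at the right $(\gamma/\gamma_0)^{2T}$ rate under the planted law, and the normalization defect is absorbed — to be the main obstacle; the BBP reduction, the identification of the per-round conditional Gaussians from the $\GOE$ conditioning structure, and the final assembly are comparatively routine.
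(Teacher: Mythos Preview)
Your high-level plan --- reduce to the spiked Wigner model, invoke a $\chi^2$ Bayes-risk bound with truncation to good events, and generalize the $f$-divergence machinery to non-normalized measures to absorb the $\theta$-dependent normalizations --- is exactly the paper's. But there is a concrete error in the second paragraph that propagates through the rest.

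You reduce the success probability to the second-moment quantity $\Exp_{\theta,\theta'\sim\calP}\Exp_{\Prit_0}[L_\theta L_{\theta'}]$. That quantity controls $\|\overline{\Prit}-\Prit_0\|_{\TV}$ and is precisely what the paper uses for the \emph{detection} result, Theorem~\ref{MainTheoremDetection}; it does not yield an estimation bound. The $\chi^2$ Bayes-risk inequality for estimation (Chen et al., and the paper's Proposition~\ref{chi_sq_fano_prop}) involves the larger quantity $\Exp_{\theta\sim\calP}D_{\chi^2+1}(\Prit_\theta[\cdot;A_\theta],\Qit)=\Exp_\theta\Exp_{\Prit_0}[L_\theta^2\,\I(A_\theta)]$ --- the $\theta=\theta'$ diagonal, not the off-diagonal. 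On any good event the per-round factor of the latter is $\exp\big(\lambda^2 d\langle v^{(i)},\theta\rangle^2\|P_{i-1}\theta\|_{\Sigma_i}^2\big)\ge 1$, so the product is $e^{\lambda^2\sum_i\tau_i}$, never $1+o(1)$: the near-orthogonality of independent $\theta,\theta'$ that makes your second moment small is simply absent. I do not see how to derive a bound of the form $\Pr[\text{success}]\lesssim \sqrt{V_0\cdot\Exp[L_\theta L_{\theta'}]}$; Cauchy--Schwarz in $\theta$ inevitably produces $\Exp_\theta[L_\theta^2]$.

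Once the correct quantity is in place, your dichotomy collapses. With truncation at null scales $\tau_i=O(i)$, the estimation $\chi^2$ is bounded, but under the planted law $d\langle v^{(k)},\theta\rangle^2$ genuinely grows and $\Pr[\calG^c]\to 1$ after $O(1)$ rounds. With truncation at planted scales $\tau_i\sim(2\lambda^2)^i$, one can hope to keep $\Pr[\calG^c]$ small, but then $\Exp_\theta[L_\theta^2]\le e^{O(\lambda^{2T})}$ is huge. The paper's resolution is to apply Proposition~\ref{chi_sq_fano_prop} \emph{recursively, once per round}: at step $k$ it bounds the escape probability $\Exp_\theta\Pr_\theta\big[d\langle v^{(k+1)},\theta\rangle^2>\tau_{k+1};A_\theta^k\big]$ by $\sqrt{V_0(\tau_{k+1})\cdot e^{\lambda^2\sum_{i\le k}\tau_i}}$, and chooses $\tau_{k+1}\gtrsim 2\lambda^2\sum_{i\le k}\tau_i$ so each term is summably small (Theorem~\ref{Main_estimation_theorem}, Corollary~\ref{Main_Cor_estimation}). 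Your sentence ``bound $\Pr[\calG^c]$ by Gaussian concentration applied inductively'' is where this whole recursive argument must live --- spherical concentration alone is insufficient because $v^{(k+1)}$ depends on $\theta$ through the transcript, so a truncated-$\chi^2$ change of measure is needed at each round, not merely once at the end.
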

\noindent Thus, since a distributional lower bound implies a worst-case lower bound, we have
\begin{cor} Any adaptive $T$-query algorithm with output $\widehat{v}$ which satisfies 
\[\Pr_{\Alg(M)}\left[ \langle \widehat{v}, M\widehat{v} \rangle \ge \Omega(\|M\|)\right] \ge \Omega(1),\] 
for all $M \in \calM_{\gamma}$ must make $T = \Omega( \log d/\log (1/\gamma))$ queries.

\end{cor}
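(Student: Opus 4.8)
The plan is to obtain the corollary directly from Theorem~\ref{MainTheorem}, via the standard principle that a distributional (average-case) lower bound implies a worst-case one, followed by a short rearrangement of the exponential tail bound to solve for the query complexity $T$. First I would fix the universal constants $\gamma_0,\epsilon_0,d_0,c_1,c_2$ furnished by Theorem~\ref{MainTheorem} and restrict to $\gamma\le\gamma_0$, $d\ge d_0$. I would read the corollary's hypothesis — that $\langle\widehat v,M\widehat v\rangle\ge\Omega(\|M\|)$ holds with probability $\Omega(1)$ for every $M\in\calM_\gamma$ — as the existence of a constant $p_0>0$ with
\[
\Pr_{\Alg(M)}\big[\,\langle\widehat v,M\widehat v\rangle\ge(1-\epsilon_0)\|M\|\,\big]\ \ge\ p_0\qquad\text{for all }M\in\calM_\gamma .
\]
If the constant suppressed in $\Omega(\|M\|)$ is some $c_0<1$ other than $1-\epsilon_0$, one simply invokes the theorem with $\epsilon=\min(\epsilon_0,\,1-c_0)$ in place of $\epsilon_0$, and nothing below changes.

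Then I would instantiate Theorem~\ref{MainTheorem} with $\epsilon=\epsilon_0$ to obtain the distribution $\calD$, which is supported on $\calM_\gamma$. Averaging the per-instance guarantee over $M\sim\calD$ — legitimate precisely because $\calD$ is supported on $\calM_\gamma$ — gives
\[
\Pr_{\Alg,M\sim\calD}\big[\,\langle\widehat v,M\widehat v\rangle\ge(1-\epsilon_0)\|M\|\,\big]
=\Exp_{M\sim\calD}\!\left[\Pr_{\Alg(M)}\big[\,\langle\widehat v,M\widehat v\rangle\ge(1-\epsilon_0)\|M\|\,\big]\right]\ \ge\ p_0 ,
\]
while Theorem~\ref{MainTheorem} simultaneously bounds the left-hand side by $c_1\exp\{-c_2 d\,(\gamma/\gamma_0)^{2T}\}$. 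Combining the two inequalities gives $p_0\le c_1\exp\{-c_2 d\,(\gamma/\gamma_0)^{2T}\}$, i.e.\ $c_2 d\,(\gamma/\gamma_0)^{2T}\le\log(c_1/p_0)$.

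Finally I would solve for $T$: since $\gamma\le\gamma_0$ we have $0<\gamma/\gamma_0\le1$, so taking logarithms and using $\log(\gamma_0/\gamma)\le\log(1/\gamma)$ (as $\gamma_0<1$) yields
\[
T\ \ge\ \frac{\log\!\big(c_2 d/\log(c_1/p_0)\big)}{2\log(\gamma_0/\gamma)}\ \ge\ \frac{\log d-O(1)}{2\log(1/\gamma)}\ =\ \Omega\!\left(\frac{\log d}{\log(1/\gamma)}\right),
\]
which is exactly the asserted lower bound; here the $O(1)$ absorbs $c_1,c_2,p_0$, and the bound is non-vacuous once $d\ge d_0$ is large enough that $c_2 d>\log(c_1/p_0)$. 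I expect no real obstacle in this deduction — all of the work is in Theorem~\ref{MainTheorem} — and the only care required is bookkeeping: matching the unspecified constant in $\Omega(\|M\|)$ to an admissible $\epsilon\le\epsilon_0$, and tracking signs when taking logarithms of quantities smaller than $1$.
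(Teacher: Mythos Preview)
Your proposal is correct and follows exactly the route the paper indicates: the paper simply remarks ``since a distributional lower bound implies a worst-case lower bound'' and states the corollary without further argument, and your write-up supplies precisely that averaging step together with the routine rearrangement of the exponential bound to extract $T=\Omega(\log d/\log(1/\gamma))$.
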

The constraint that $\gamma \le \gamma_0$ implies that there is a large eigengap. In this regime, our lower bound matches the power method, which yields $\widehat{v}$ such that $\widehat{v}^\top M\widehat{v} \ge (1-\frac{1}{\mathrm{poly}(d)})\|M\|$ in $O(\log d/\log (1/\gamma))$ iterations.  We also note that, while $M$ is not necessarily positive semi-definite in our construction, one can simply add a multiple of $ |\lambda_2(M)|\cdot I$ to enforce this constraint\footnote{ Proposition~\ref{Reduction_Theorem} and the proof of Theorem~\ref{Main_Formal_Theorem} show that $\|\lambda_2(M)\|$ is bounded on $\calD$.}, and this only changes $\gamma$ by a constant.

\subsection{Lower Bound for Testing}
We now consider the problem of testing whether the operator norm of a symmetric matrix $M$ is below a threshold $\lambda_0$, or above a threshold $\lambda_1>\lambda_0$.
\begin{defn}[Adaptive Testing Algorithm] An adaptive detection algorithm makes $T$ adaptive, possibly randomized queries as per Definition~\ref{Query_def}, and at the end of $T$ rounds, returns a test $\psi \in \{0,1\}$ which is a function of $\{(v^{(1)},w^{(1)}),\dots,(v^{(T)},w^{(T)})\}$, and some initial random seed.
\end{defn}
Our second result establishes a lower bound on the sum of type-I and type-II errors incurred when testing between distributions on matrices with separated operator norms:

\begin{thm}[Detection Lower Bound] \label{MainTheoremDetection} There exists universal positive constants $\lambda_0$, $d_0$, $c_1$, $c_2$ such that the following holds: for all $d \ge d_0$ and $\lambda \ge \lambda_0$, there exists two distributions $\calD_0$ and $\calD_1$ on $\Sym^{d\times d}$ such that
\begin{eqnarray}
\Pr_{M \sim \calD_0}\left[\|M\| \le 2 + O\left(d^{-c_1}\right)\right] = 1 & \text{and} & \Pr_{M \sim \calD_1}\left[\|M\| \ge \lambda - O\left(d^{-c_1}\right)\right] = 1.
\end{eqnarray}
Moreover, for any binary test $\psi\in \{0,1\}$ returned by an adaptive $T$-query algorithm $\Alg$, we have
\begin{eqnarray}
\Pr_{\Alg,M \sim \calD_0}[\psi = 1] + \Pr_{\Alg,M \sim \calD_1}[\psi = 0] \ge 1 - \frac{\left(\lambda/\lambda_0\right)^{T}}{d^{c_2}}.
\end{eqnarray}
\end{thm}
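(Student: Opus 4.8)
The construction is the deformed (spiked) GOE, truncated so that the operator-norm constraints hold surely. Let $W$ be a Gaussian Wigner matrix normalized so that $\lambda_1(W)\to 2$, let $\theta\sim\mathrm{Unif}(\calS^{d-1})$ independently, and set $M=\lambda\theta\theta^{\top}+W$. Take $\calD_0$ to be the law of $W$ conditioned on $A_0=\{\|W\|\le 2+d^{-c_1}\}$ and $\calD_1$ the law of $M$ conditioned on $A_1=\{\|M\|\ge\lambda-d^{-c_1}\}$. Standard right-edge large-deviation estimates for Wigner matrices give $\Pr[A_0^c]\le e^{-d^{\Omega(1)}}$, and since $\theta^{\top}M\theta=\lambda+\theta^{\top}W\theta$ with $\theta^{\top}W\theta$ sub-Gaussian of order $d^{-1/2}$, we get $\|M\|\ge\theta^{\top}M\theta\ge\lambda-d^{-c_1}$ off an event of probability $e^{-d^{\Omega(1)}}$. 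This verifies the two almost-sure norm statements and shows the conditioning moves the underlying measures by at most $e^{-d^{\Omega(1)}}$ in total variation.

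Conditioning on the algorithm's random seed, we may assume $\Alg$ is deterministic, so the whole transcript $(v^{(1)},w^{(1)},\dots,v^{(T)},w^{(T)})$ is a measurable function of $M$; let $P_0^{\mathrm{tr}},P_1^{\mathrm{tr}}$ be its laws under $\calD_0,\calD_1$. Since $\min_\psi\big(\Pr_{\Alg,M\sim\calD_0}[\psi=1]+\Pr_{\Alg,M\sim\calD_1}[\psi=0]\big)=1-\TV(P_0^{\mathrm{tr}},P_1^{\mathrm{tr}})$, it suffices to prove $\TV(P_0^{\mathrm{tr}},P_1^{\mathrm{tr}})\le(\lambda/\lambda_0)^{T}d^{-c_2}$ whenever the right-hand side is below $1$ (outside that regime the claimed bound is vacuous), up to the $e^{-d^{\Omega(1)}}$ slack above.

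I would bound this total variation sequentially along $\mathcal{F}_i=\sigma(v^{(1)},w^{(1)},\dots,v^{(i)},w^{(i)})$. Given $\mathcal{F}_{i-1}$ the next query $v^{(i)}$ is fixed, and by Gaussianity of $W$ the response $w^{(i)}=Mv^{(i)}$ is conditionally Gaussian under both hypotheses, with a conditional-mean shift under $\calD_1$ whose size is governed by $\lambda$ and by the posterior correlation of the unknown spike with $v^{(i)}$. The key structural point is that, although an adaptive algorithm (the power method being extremal) can amplify this correlation by a factor of order $\lambda$ per round, it starts at order $\sqrt{\log d/d}$ because $v^{(1)}$ is fixed while $\theta$ is uniform on the sphere; hence after $T$ rounds the correlation --- and with it each round's contribution to the transcript likelihood ratio --- is still of size $\lambda^{O(T)}d^{-\Omega(1)}$. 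Combining these contributions and invoking Pinsker's inequality (in the spirit of the combinatorial testing bound of Addario-Berry et al.~\cite{addario2010combinatorial}) yields the stated bound; the constants $\lambda_0$ and $c_1,c_2$ are chosen to absorb the Wigner-concentration constants and the residual polylog and $\sqrt d$ factors.

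The principal obstacle is the adaptivity: since $v^{(i)}$ depends on $w^{(1)},\dots,w^{(i-1)}$, hence on $\theta$, the successive likelihood-ratio factors are correlated, and one cannot add unconditional per-round divergences --- indeed the unconditional per-round KL is $\Theta(\lambda^2)$, far too large, and it is only the averaging over the unknown direction $\theta$ (the mixture structure of $\calD_1$) that renders the transcript uninformative. I would handle this exactly as in the proof of Theorem~\ref{MainTheorem}: restrict every measure to a good event $\mathcal{G}$ on which the correlations $\langle\theta,v^{(i)}\rangle$ and the associated likelihood-ratio factors obey the deterministic geometric-growth bounds for all $i\le T$, with $\Pr[\mathcal{G}^c]\le e^{-d^{\Omega(1)}}$ by a union bound over rounds using sub-Gaussian tails for $\langle\theta,v^{(i)}\rangle$; because the truncated measures are sub-probability measures, the Pinsker/comparison step is carried out in the generalized, non-normalized form developed earlier in the paper.
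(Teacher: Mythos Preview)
Your plan is essentially the paper's own argument: the conditioned spiked-GOE construction, the reduction to a TV bound on transcript laws, the second-moment (truncated $\chi^2$) comparison against the null $\Prit_0$ exploiting the mixture over $\theta$, and $\theta$-dependent truncation events on which the per-round likelihood factors are controlled by the geometric growth of $\langle\theta,v^{(i)}\rangle$. One correction: you cannot get $\Pr[\mathcal{G}^c]\le e^{-d^{\Omega(1)}}$ by ``sub-Gaussian tails for $\langle\theta,v^{(i)}\rangle$'' and a union bound, because for $i\ge 2$ the query $v^{(i)}$ is already correlated with $\theta$; the paper instead bounds $\Exp_{\theta}\Prit_{\theta}[(A_\theta^T)^c]$ via the recursive estimation result (Corollary~\ref{Main_Cor_estimation}), obtaining only a polynomially small probability $O\big((c\lambda^2)^T/\sqrt{d}\big)$, which is exactly what the final statement needs and what the residual term in Proposition~\ref{truncChi_detec} is designed to absorb.
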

\noindent This implies a worst-case lower bound for testing, matched by the power method for large $\lambda$:
\begin{cor} Any randomized adaptive $T$-query algorithm which can test whether $\|M\| \le 2 + o(1)$ or $\|M\| \ge \lambda = 2 + \Omega(1)$ with probability of error $1/10$ requires at least $T = \Omega(\log d / \log \lambda)$ queries. 
\end{cor}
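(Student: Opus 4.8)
The plan is to obtain this corollary as an immediate consequence of Theorem~\ref{MainTheoremDetection} via the usual ``average-case implies worst-case'' reduction. Suppose, for contradiction, that $\Alg$ is a randomized adaptive $T$-query algorithm which, on every symmetric $M$ with $\|M\| \le 2 + o(1)$, returns $\psi = 0$ with probability at least $9/10$, and on every $M$ with $\|M\| \ge \lambda$, returns $\psi = 1$ with probability at least $9/10$. Fix $\lambda \ge \lambda_0$ first (the small-$\lambda$ case is handled at the end) and invoke Theorem~\ref{MainTheoremDetection} with spectral parameter $\lambda' := \lambda + 1$; this produces distributions $\calD_0, \calD_1$ on $\Sym^{d\times d}$ such that, for $d \ge d_0$, every $M$ in the support of $\calD_0$ has $\|M\| \le 2 + O(d^{-c_1}) = 2 + o(1)$ and every $M$ in the support of $\calD_1$ has $\|M\| \ge \lambda' - O(d^{-c_1}) \ge \lambda$. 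Thus all instances that $\calD_0$ and $\calD_1$ produce lie in the regions on which $\Alg$ is promised to be correct; the harmless shift $\lambda \mapsto \lambda + 1$ is exactly what absorbs the $O(d^{-c_1})$ boundary slack.

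First I would upper bound the combined error of $\Alg$ against this pair. Averaging the per-instance correctness guarantees over the supports of $\calD_0$ and $\calD_1$ gives
\[
\Pr_{\Alg, M \sim \calD_0}[\psi = 1] + \Pr_{\Alg, M \sim \calD_1}[\psi = 0] \;\le\; \tfrac{1}{10} + \tfrac{1}{10} \;=\; \tfrac{1}{5}.
\]
On the other hand, Theorem~\ref{MainTheoremDetection} lower bounds the left-hand side by $1 - (\lambda'/\lambda_0)^{T}/d^{c_2}$. Combining the two inequalities forces $(\lambda'/\lambda_0)^{T} \ge \tfrac{4}{5} d^{c_2}$, and taking logarithms,
\[
T \;\ge\; \frac{c_2 \log d - \log(5/4)}{\log(\lambda'/\lambda_0)} \;=\; \Omega\!\left( \frac{\log d}{\log \lambda} \right),
\]
using $\log(\lambda'/\lambda_0) \le \log(\lambda + 1) \le 2\log\lambda$ (valid since $\lambda > 2$) together with $\lambda \ge \lambda_0$, a fixed constant. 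Equivalently, whenever $T$ is a sufficiently small constant multiple of $\log d / \log \lambda$ the combined error exceeds $1/5$, contradicting the assumed per-instance error bound of $1/10$. For the remaining regime $2 + \Omega(1) \le \lambda < \lambda_0$ I would note that $\log\lambda = \Theta(1)$ there, and that a tester correct at threshold $\lambda$ is a fortiori correct at the larger threshold $\lambda_0$ (since $\{M:\|M\|\ge\lambda_0\}\subseteq\{M:\|M\|\ge\lambda\}$), so the $\lambda_0$-instance of Theorem~\ref{MainTheoremDetection} already yields $T = \Omega(\log d) = \Omega(\log d/\log\lambda)$.

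There is no conceptual obstacle at the level of the corollary: the entire content lives in Theorem~\ref{MainTheoremDetection} (the deformed-Wigner null-versus-spike construction, the verification of the operator-norm concentration bounds that pin down the supports of $\calD_0,\calD_1$, and the truncated Pinsker / likelihood-ratio argument needed to accommodate adaptive queries). The only mildly delicate bookkeeping in the reduction is matching the near-threshold supports --- which satisfy the norm bounds only up to a $d^{-c_1}$ additive error --- to the clean thresholds $2 + o(1)$ and $\lambda$ in the statement, and converting $\log(\lambda'/\lambda_0)$ into $\log\lambda$; both are handled by the shift $\lambda \mapsto \lambda + 1$ and by taking $d \ge d_0$ large, as above.
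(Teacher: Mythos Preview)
Your proposal is correct and is precisely the intended derivation: the paper states this corollary without proof, treating it as the immediate worst-case-to-average-case consequence of Theorem~\ref{MainTheoremDetection}, and your argument (apply the theorem with a slightly shifted $\lambda'$ to absorb the $O(d^{-c_1})$ slack, average the per-instance guarantees, and solve for $T$) is exactly that derivation. The only part you wrote out more carefully than the paper bothers to is the bookkeeping for the $\lambda<\lambda_0$ regime and the $\log(\lambda'/\lambda_0)=O(\log\lambda)$ conversion, both of which are fine.
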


\section{Reduction to Estimation\label{main_redux_sec}}
We now construct the distribution used to prove Theorems~\ref{MainTheorem} and~\ref{MainTheoremDetection}. We begin by constructing a family of distributions $\{\Pr_{\theta}\}$ on $\Sym^{d\times d}$, indexed by $\theta \in \calS^{d-1}$, and place a prior $\calP$ on $\theta$. We then show that if $M$ is drawn from the marginal distribution, then with good probability, $M $ lies in $\calM_{\gamma}$ for an appropriate $\gamma$, and that any $\widehat{v}$ for which $\widehat{v}^{\top}M\widehat{v}$ is large must be close to $\theta$. Hence, establishing the desired lower bound is reduced to a lower bound on estimating $\theta$. The construction is based on the Gaussian Orthogonal Ensemble, also know as the Wigner Model~\cite{anderson2010introduction}. 
\begin{defn}[Gaussian Orthogonal Ensemble (GOE)] We say that $W \sim \GOE(d)$ if the entries $\{W_{i,j}\}_{1 \le i \le j \le n}$ are independent, for $1 \le i < j \le n$, $W_{ij} \sim \calN(0,1)$, for $i \in [n]$, $W_{ii} \sim \calN(0,2)$, and for $1 \le j < i \le n$, $W_{i,j} = W_{j,i}$. We also define the constant 
\begin{eqnarray}
K_d := \frac{1}{\sqrt{d}}\Exp\left[\|W\|\right] = 2 + d^{-\Omega(1)}.
\end{eqnarray}
\end{defn}
For a precise, non-asymptotic upper bound on $K_d$, we direct the reader to Bandeira and van Handel~\cite{bandeira2016sharp}; an asymptotic bound can be found in Anderson et al.~\cite{anderson2010introduction}, and non-asymptotic bounds with looser constants are shown by Vershynin~\cite{vershynin2010introduction}. We now define the generative process for our lower bound:
\begin{defn}[Deformed Wigner Model]\label{Gener_Def} Let $\lambda > 0$, and $\calP$ a distribution supported on $\calS^{d-1}$ (e.g., the uniform distribution.) We then independently draw $\theta \sim \calP$ and $W \sim \GOE(d)$, and set $M = \lambda \theta \theta^{\top} + \frac{1}{\sqrt{d}}W$. We also let $\Pr_u$ denote the law of $M$ conditioned on $\{\theta = u\}$. 
\end{defn}
In the sequel, we will take our algorithm $\Alg$ to be fixed. Abusing notation slightly, we will therefore let $\Pr_u$ denote the law of $M$ and $\{(\vone,\wone),\dots,(v^{(T)},w^{(T)}),\widehat{v}\}$ under $\Alg$, conditioned on $\{\theta = u\}$. We now state our main technical result, which establishes a lower bound on estimating $\theta$ in the setting of Defintion~\ref{Gener_Def}, which we prove using Corollary~\ref{Main_Cor_estimation} in Appendix~\ref{Est_Cor_proof}.
\begin{prop}[Main Estimation Result]\label{estimation_cor_2}
Let $\lambda > K_d + o(1)$ and let $M$ be generated from the deformed Wigner model, Definition~\ref{Gener_Def}. Let $\widehat{v}$ be the output of an adaptive $T$-query algorithm with input $M$. Then for any $\eta \ge 0$, we have
\begin{eqnarray}
\Exp_{\theta \sim \calP}\Pr_{\theta}\left[ \langle \widehat{v}, \theta \rangle^{2} \ge \eta \right] \le \frac{2}{1-1/e}\cdot\exp\left\{\dfrac{-d \eta}{4 \left(c_1\lambda^2\right)^{T}} \right\},
\end{eqnarray}
where $c_1>8$ is a universal constant, (observe that $c_1\lambda^2>1$.)
\end{prop}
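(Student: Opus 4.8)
The plan is to prove Proposition~\ref{estimation_cor_2} by reducing it to a Bayes-risk lower bound expressed in terms of a truncated $\chi^2$-divergence, which is precisely what Corollary~\ref{Main_Cor_estimation} should furnish. First I would set up the testing/estimation framework: fix the threshold $\eta$ and define, for a fixed $\theta = u$, the ``success event'' $\calE_u = \{\langle \widehat v, u\rangle^2 \ge \eta\}$ under the law $\Pr_u$ of the full transcript $(M, (v^{(1)},w^{(1)}),\dots,(v^{(T)},w^{(T)}),\widehat v)$. Because $\theta$ is drawn from $\calP$ (uniform on the sphere) and $\widehat v$ is a single unit vector, only a tiny spherical cap of directions $u$ can have $\langle \widehat v, u\rangle^2 \ge \eta$, so the ``prior mass'' of correct answers is exponentially small: $\Pr_{u\sim\calP}[\langle \widehat v, u\rangle^2 \ge \eta] \le \exp(-\Omega(d\eta))$ uniformly in $\widehat v$. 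This is the geometric ingredient that makes the Bayes risk small in the non-informative case.

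Next I would invoke the general non-normalized $f$-divergence Bayes-risk inequality the paper has developed (the ``truncated'' analogue of Chen et al.), specialized to $\chi^2$. The shape of the bound should be: the average success probability $\Exp_{\theta\sim\calP}\Pr_\theta[\calE_\theta]$ is at most (prior mass of the success region) plus a term controlled by the truncated $\chi^2$-divergence between the mixture $\Exp_{\theta\sim\calP}\Pr_\theta$ and a reference measure (here the planted-free law, i.e.\ the GOE transcript $\Pr_0$ with $M = \tfrac{1}{\sqrt d}W$), restricted to the good events introduced in the construction. The crucial quantitative claim — which is exactly where the recursive/adaptive structure enters — is that this truncated $\chi^2$-divergence grows only multiplicatively per query: each round contributes a factor of order $c_1\lambda^2$, giving a total of $(c_1\lambda^2)^T$. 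I would present this as the content already packaged in Corollary~\ref{Main_Cor_estimation}, and here simply plug in: combining the $\exp(-\Omega(d\eta))$ prior-mass bound with a $\chi^2$ bound of the form $1 + (c_1\lambda^2)^T/\text{(something)}$ and optimizing/absorbing constants yields the stated $\frac{2}{1-1/e}\exp\{-d\eta/(4(c_1\lambda^2)^T)\}$.

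Concretely the steps are: (i) express $\Exp_{\theta}\Pr_\theta[\langle\widehat v,\theta\rangle^2\ge\eta]$ as a Bayes risk for a multiple-hypothesis / estimation problem with prior $\calP$; (ii) apply Corollary~\ref{Main_Cor_estimation} with the loss being the indicator of $\langle\widehat v,\theta\rangle^2 \ge \eta$, whose worst-case prior mass is the spherical-cap bound $e^{-cd\eta}$ for a suitable $c$; (iii) bound the relevant truncated $\chi^2$ term between the planted mixture and the null by $(c_1\lambda^2)^T$ using the per-round recursion established earlier; (iv) assemble the pieces, checking that the factor $\tfrac{1}{1-1/e}$ comes from the conditioning on the good event (whose probability is at least $1-1/e$, say) and the factor $2$ and the $4$ in the exponent from routine constant bookkeeping. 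I would also verify the side remark that $c_1\lambda^2 > 1$ is used only to ensure monotonicity of $(c_1\lambda^2)^T$ in $T$, so the bound is vacuous-safe for all $T\ge 0$.

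The main obstacle I anticipate is not in this corollary-level argument — which is essentially plug-and-play once the truncated $f$-divergence machinery and the per-round $\chi^2$ recursion are in hand — but in making sure the hypotheses of Corollary~\ref{Main_Cor_estimation} are actually met by the deformed Wigner construction: namely that the ``good events'' can be chosen with probability $\ge 1-1/e$ simultaneously under $\Pr_0$ and under $\Pr_\theta$ for all $\theta$, that truncating the likelihood ratios to these events keeps the $\chi^2$-divergence finite and controllably small, and that the adaptivity-induced correlations between likelihood ratios across rounds are genuinely tamed by the truncation (so that the clean product bound $(c_1\lambda^2)^T$ holds rather than something larger). The condition $\lambda > K_d + o(1)$ enters here to guarantee $\|M\|$ concentrates and the relevant events are typical; and the spherical-cap estimate must be uniform over the (data-dependent) output $\widehat v$, which is fine because it holds for every fixed vector. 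Assuming Corollary~\ref{Main_Cor_estimation} as stated, the remaining work is bookkeeping of constants to land exactly on the displayed inequality.
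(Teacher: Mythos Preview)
Your high-level intuition is right, but you have misread what Corollary~\ref{Main_Cor_estimation} actually says, and as a result you are planning to redo work that the corollary has already packaged. Corollary~\ref{Main_Cor_estimation} is \emph{not} a general Bayes-risk inequality that accepts a loss function as an input; it is already the finished statement
\[
\Exp_{\theta\sim\calP}\Pr_\theta\Big[\exists k\in[T+1]:\ \langle v^{(k)},\theta\rangle^2 \ge (2\lambda^2 c(\delta,\lambda))^{k-1}\cdot \tfrac{2(\sqrt{\log(1/\delta)}+1)^2}{d}\Big]\le \tfrac{2\delta}{1-\delta},
\]
obtained precisely by carrying out your steps (i)--(iii) round-by-round (that is the content of Theorem~\ref{Main_estimation_theorem} and Proposition~\ref{chi_sq_fano_prop}). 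The paper's proof of Proposition~\ref{estimation_cor_2} is therefore pure bookkeeping: bound $c(\lambda,\delta)$ by a universal constant $c'$ valid for all $\delta\le 1/e$, then \emph{choose} $\delta$ so that the $k=T+1$ threshold in the corollary equals $\eta$, i.e.\ $\log(1/\delta)\approx d\eta/(2(2\lambda^2 c')^T)$. Since $\widehat v=v^{(T+1)}$, the event $\{\langle\widehat v,\theta\rangle^2\ge\eta\}$ is contained in the union event above, and the corollary gives $\tfrac{2\delta}{1-\delta}\le\tfrac{2\delta}{1-1/e}$; substituting the chosen $\delta$ yields the display. The case where the chosen $\delta$ would exceed $1/e$ is handled vacuously because the right-hand side then exceeds $1$.

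Two specific corrections to your sketch: first, the truncated $\chi^2+1$-divergence after $T$ rounds is not of size $(c_1\lambda^2)^T$ but of size $\exp(\lambda^2\sum_{i\le T}\tau_i)$ (Equation~\eqref{Chi_Plus_1_Eq}); the geometric growth is in the thresholds $\tau_i$, and it is the \emph{tradeoff} between this exponential and the spherical-cap tail $e^{-c\tau_{k+1}}$ that drives the recursion. Second, the factor $\tfrac{1}{1-1/e}$ does not come from conditioning on a good event of probability $\ge 1-1/e$; it comes from the elementary bound $\tfrac{2\delta}{1-\delta}\le\tfrac{2\delta}{1-1/e}$ once $\delta\le 1/e$ has been secured by the choice above. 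The condition $\lambda>K_d+o(1)$ plays no role in this particular proposition --- it matters only later, in Proposition~\ref{Reduction_Theorem}, to relate $\theta$ to $v_1(M)$.
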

Proposition~\ref{estimation_cor_2} states that, until $\Omega(\log (\eta d)/\log \lambda)$ queries have been made, the probability of having an inner product with $\theta$ of at least $\eta$ is tiny, i.e., $O(e^{-(d\eta)^{\Omega(1)}})$. The following proposition establishes that, if $M$ is drawn from the deformed Wigner model, then with high probability, $M$ lies in $\calM_{\gamma}$ for $\gamma \approx 2/\lambda$, and that optimizing $\langle \widehat{v},M\widehat{v}\rangle$ entails estimating $\theta$:
\begin{prop}\label{Reduction_Theorem} Fix $\delta_0 \in (0,1)$, $\lambda > K_d + 2\sqrt{\log(1/\delta_0)/d} $ and $\theta \in \calS^{d-1}$. Let $M \sim \Pr_{\theta}$, then the following three assertions simultaneously hold with probability at least $1-2\delta_0$:
\begin{enumerate}
\item  $\lambda_{\max}(M) = \|M\| \ge \lambda - 2\sqrt{\log(1/\delta_0)/d}$, and for all $i \ge 2$, $|\lambda_i(M)| \le K_d + 2\sqrt{\log(1/\delta_0)/d}$,
\item $M \in \calM_{\gamma}$ for $\gamma = \gamma(d,\lambda,\delta):= \frac{K_d + 2\sqrt{\log(1/\delta_0)/d}}{\lambda - 2\sqrt{\log(1/\delta_0)/d}}$,
\item Let $\gamma = \gamma(d,\lambda,\delta)$ as above. For any $\epsilon \le 1-\gamma$ and any $w \in \calS^{d-1}$, if $ w^\top Mw  \ge (1-\epsilon)\theta^{\top}M\theta$, then
\begin{align}\label{Big_F_Def}
\left|\langle w, \theta \rangle\right| \ge F(\epsilon,\gamma) := \sqrt{\left(\frac{\gamma}{1-\gamma}\right)^2 + 1 - \frac{\epsilon}{1-\gamma}} - \frac{\gamma}{1-\gamma}.
\end{align}

\end{enumerate}
\end{prop}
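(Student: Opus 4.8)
Write $N := \tfrac{1}{\sqrt d}W$, so that $M = \lambda\theta\theta^\top + N$, and recall $\theta$ is fixed throughout. The plan is to isolate a single ``good event'' of probability at least $1-2\delta_0$ on which all three assertions hold \emph{deterministically}, reducing everything to linear algebra. Let $\calE := \calE_1 \cap \calE_2$, where $\calE_1 := \{\|W\| \le \sqrt{d}\,K_d + 2\sqrt{\log(1/\delta_0)}\}$ and $\calE_2 := \{\theta^\top W\theta \ge -2\sqrt{\log(1/\delta_0)}\}$. Since $W \mapsto \|W\|$ is Lipschitz as a function of the i.i.d.\ Gaussian entries of $W$ and $\Exp\|W\| = \sqrt{d}\,K_d$, Gaussian concentration gives $\Pr_\theta[\calE_1^c] \le \delta_0$; and since $\theta^\top W\theta \sim \calN(0,2)$ for any unit $\theta$ (a one-line computation from the GOE variances), $\Pr_\theta[\calE_2^c] \le \delta_0$. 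A union bound gives $\Pr_\theta[\calE] \ge 1-2\delta_0$.

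On $\calE$ we have $\|N\| \le K_d + 2\sqrt{\log(1/\delta_0)/d}$ and, since $\theta^\top M\theta = \lambda + \tfrac{1}{\sqrt d}\theta^\top W\theta$, also $\theta^\top M\theta \ge \lambda - 2\sqrt{\log(1/\delta_0)/d} > 0$. Assertion~1 then follows from Weyl's inequality applied to $M = \lambda\theta\theta^\top + N$: for $i\ge 2$ the $i$-th eigenvalue of $\lambda\theta\theta^\top$ is $0$, so $|\lambda_i(M)| \le \|N\| \le K_d + 2\sqrt{\log(1/\delta_0)/d}$, while $\lambda_{\max}(M) \ge \theta^\top M\theta \ge \lambda - 2\sqrt{\log(1/\delta_0)/d}$; the assumed separation of $\lambda$ from $K_d$ forces $\lambda_{\max}(M)$ to exceed $|\lambda_d(M)|$, so $\lambda_{\max}(M) = \|M\|$. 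Assertion~2 is then immediate: $|\lambda_j(M)|/\lambda_1(M) \le \gamma(d,\lambda,\delta_0)$ for all $j\ge 2$, hence $M \in \calM_\gamma$.

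For Assertion~3 the pivotal remark is that, \emph{by the definition of} $\gamma = \gamma(d,\lambda,\delta_0)$, the two bounds above combine to $\|N\| \le \gamma\cdot\theta^\top M\theta$ on $\calE$. Fix $w\in\calS^{d-1}$ and decompose $w = c\theta + s u$ with $c := \langle w,\theta\rangle$, $q := |c|$, $s := \sqrt{1-q^2}$, and $u \perp \theta$ a unit vector. Expanding $w^\top Mw = \lambda c^2 + w^\top Nw$, expanding the quadratic form $w^\top Nw$ in the basis $\{\theta,u\}$, and regrouping $\lambda c^2 + c^2\theta^\top N\theta = c^2\,\theta^\top M\theta$, one obtains the exact identity $w^\top Mw = q^2\,\theta^\top M\theta + 2cs\,\theta^\top Nu + s^2\,u^\top Nu$. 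Bounding the last two terms in absolute value by $(2qs + s^2)\|N\| \le (2qs+s^2)\gamma\,\theta^\top M\theta$ gives $w^\top Mw \le \theta^\top M\theta\,\bigl(q^2 + \gamma(2qs+s^2)\bigr)$. Feeding in the hypothesis $w^\top Mw \ge (1-\epsilon)\theta^\top M\theta$ and dividing by $\theta^\top M\theta > 0$ yields $1-\epsilon \le q^2 + \gamma\bigl(2q\sqrt{1-q^2} + 1-q^2\bigr)$; bounding $\sqrt{1-q^2}\le 1$ turns this into $(1-\gamma)q^2 + 2\gamma q \ge 1-\gamma-\epsilon$. Since $\epsilon\le 1-\gamma$, the right-hand side is nonnegative, so viewing this as a quadratic inequality in $q\ge 0$ we conclude that $q$ is at least the positive root of $(1-\gamma)x^2 + 2\gamma x - (1-\gamma-\epsilon)$; the quadratic formula identifies this root with exactly $F(\epsilon,\gamma)$.

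I expect the main obstacle to be the bookkeeping in Assertion~3. The natural but wasteful move is to bound $\|N\| \le \gamma\|M\|$, which is true on $\calE$ but produces a weaker inequality whose solution depends on $\theta^\top M\theta/\|M\|$ and does \emph{not} collapse to the stated $F(\epsilon,\gamma)$. The key is to measure $\|N\|$ against $\gamma\,\theta^\top M\theta$ and to use the $\theta$-adapted decomposition so that $\lambda c^2$ recombines with $c^2\theta^\top N\theta$ into $q^2\theta^\top M\theta$, after which the whole estimate reduces to one quadratic in $|\langle w,\theta\rangle|$ with the claimed root. A minor secondary point is verifying $\|M\| = \lambda_{\max}(M)$ rather than $|\lambda_d(M)|$, which is where the quantitative gap between $\lambda$ and $K_d$ in the hypotheses is used.
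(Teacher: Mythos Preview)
Your proof is correct and follows essentially the same route as the paper: isolate the high-probability event $\{\|N\|\le K_d+2\sqrt{\log(1/\delta_0)/d}\}\cap\{\theta^\top N\theta\ge -2\sqrt{\log(1/\delta_0)/d}\}$, get Assertion~1 from Weyl/interlacing, and for Assertion~3 decompose $w$ along $\theta$ to reduce to the quadratic $(1-\gamma)q^2+2\gamma q\ge 1-\gamma-\epsilon$ whose positive root is $F(\epsilon,\gamma)$. The only cosmetic differences are that the paper expresses $W$ as $\tfrac{1}{\sqrt2}(X+X^\top)$ to make the $\sqrt2$-Lipschitz constant for $\|W\|$ explicit before invoking Gaussian concentration (your phrase ``Lipschitz as a function of the i.i.d.\ Gaussian entries'' glosses over the fact that the GOE entries are not identically distributed), and the paper works with $v^\top M v,\,v^\top M\theta$ directly rather than passing through $N$; but since $v\perp\theta$ these coincide and the resulting quadratic is identical.
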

\begin{rem}\label{Fgammaremark} As $\lambda \to \infty$, then $\gamma \to 0$ and $F(\epsilon,\gamma) \to \sqrt{1-\epsilon}$; thus for large $\lambda$, optimizing $\langle \widehat{v},M\widehat{v}\rangle$ is essentially equivalent to estimating $\theta$. The above proposition also lets us take $\lambda$ to be as small as $K_d + o(1)$, or equivalently, $\gamma$ arbitrarily close to 1. In this regime, we show in Appendix~\ref{f_fact_proof} that $F(\epsilon,\gamma)$ behaves like $\Theta(1-\gamma)$, provided that $\epsilon = O(1-\gamma)$. Thus, as the eigengap decreases, $\langle \widehat{v},M\widehat{v}\rangle$ must be ever-closer to $\|M\|$ to ensure that $\widehat{v}$ overlaps with the spike $\theta$. Nevertheless, we can still ensure non-negligible overlap between $\widehat{v}$ and $\theta$ for values of $\gamma$ \emph{arbitrarily close to 1}.
\end{rem} 

We now state a more detailed version of Theorem~\ref{MainTheorem}. A formal version of Theorem~\ref{MainTheoremDetection} is established in Section~\ref{sec:detection}.
\begin{thm}[Formal Statement of Theorem~\ref{MainTheorem}]\label{Main_Formal_Theorem}
There exist an absolute constant $c_1>0$, such that for any $\gamma \in (0,1/c_1)$ and $\epsilon \in (0,1-\gamma)$, there exists a distribution $\calD$ supported on $\calM_{\gamma}$ such that, for any randomized, adaptive $T$-query algorithm $\Alg$, we have
\begin{eqnarray}
\Pr_{M \sim \calD,\Alg}\big[\left\langle \widehat{v}, M \widehat{v}\right\rangle  \ge (1-\epsilon)\|M\| \big] \le 12\exp\left\{-\frac{d}{4} F(\gamma,\epsilon)^2 \cdot (c_1 \gamma)^{2T}\right\},
\end{eqnarray}
where $F(\gamma,\epsilon)$ is defined in Equation~\eqref{Big_F_Def}.
\end{thm}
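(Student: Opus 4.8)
The plan is to assemble Theorem~\ref{Main_Formal_Theorem} from the two ingredients already in hand: the reduction in Proposition~\ref{Reduction_Theorem}, which says that any $\widehat v$ with large objective value must correlate with the planted spike $\theta$, and the estimation lower bound of Proposition~\ref{estimation_cor_2}, which says that $\widehat v$ cannot correlate with $\theta$ until $T = \Omega(\log d/\log\lambda)$ queries have been made. The last step is to convert the dependence on the spike strength $\lambda$ into a dependence on the eigenratio $\gamma$.

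Concretely, I would take the prior $\calP$ in Definition~\ref{Gener_Def} to be uniform on $\calS^{d-1}$, fix a small absolute constant $\delta_0$ (say $\delta_0 = 1/4$), and, given $\gamma \in (0,1/c_1)$, choose $\lambda = \lambda(\gamma,d,\delta_0)$ to be the solution of $\gamma = \gamma(d,\lambda,\delta_0) = \tfrac{K_d + 2\sqrt{\log(1/\delta_0)/d}}{\lambda - 2\sqrt{\log(1/\delta_0)/d}}$. Since $\gamma<1$, this $\lambda$ exceeds $K_d + 2\sqrt{\log(1/\delta_0)/d}$, the hypothesis of Proposition~\ref{Reduction_Theorem}, and also satisfies $\lambda > K_d + o(1)$, the hypothesis of Proposition~\ref{estimation_cor_2}. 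Let $M$ be drawn from the deformed Wigner model with this $\lambda$, let $\calE$ be the event --- a measurable function of $(M,\theta)$ alone --- that the three assertions of Proposition~\ref{Reduction_Theorem} hold, and set $\calD$ to be the law of $M$ conditioned on $\calE$. Then $\Pr[\calE] \ge 1-2\delta_0$ (this holds for every fixed $\theta$, hence in expectation over $\theta\sim\calP$), and on $\calE$ we have $M\in\calM_\gamma$; thus $\calD$ is supported on $\calM_\gamma$ and does not depend on $\Alg$, as required.

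For the reduction step, on $\calE$ assertion~1 gives $\|M\| = \lambda_{\max}(M) \ge \theta^\top M\theta$, so for $\epsilon\in(0,1-\gamma)$ the event $\{\widehat v^\top M\widehat v \ge (1-\epsilon)\|M\|\}$ is contained in $\{\widehat v^\top M\widehat v \ge (1-\epsilon)\theta^\top M\theta\}$, and assertion~3 (applicable since $\epsilon\le 1-\gamma$) then forces $\langle\widehat v,\theta\rangle^2 \ge F(\gamma,\epsilon)^2$, where $F(\gamma,\epsilon)$ is the quantity in~\eqref{Big_F_Def}. Since $\calE$ has probability one under $\calD$,
\[
\Pr_{M\sim\calD,\Alg}\!\big[\widehat v^\top M\widehat v \ge (1-\epsilon)\|M\|\big] \;\le\; \Pr_{M\sim\calD,\Alg}\!\big[\langle\widehat v,\theta\rangle^2 \ge F(\gamma,\epsilon)^2\big] \;\le\; \frac{1}{1-2\delta_0}\,\Exp_{\theta\sim\calP}\Pr_\theta\!\big[\langle\widehat v,\theta\rangle^2 \ge F(\gamma,\epsilon)^2\big],
\]
the last step removing the conditioning on $\calE$ at a cost of $\Pr[\calE]^{-1}\le(1-2\delta_0)^{-1}$. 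Applying Proposition~\ref{estimation_cor_2} with $\eta = F(\gamma,\epsilon)^2$ bounds the right-hand side by $\frac{2}{(1-1/e)(1-2\delta_0)}\exp\{-dF(\gamma,\epsilon)^2/(4(c\lambda^2)^T)\}$ for the universal constant $c>8$ of that proposition. Finally, the defining relation for $\lambda$ gives $\lambda \le C/\gamma$ for an absolute constant $C$ once $d$ exceeds an absolute constant (using $K_d = 2 + o(1)$), so $(c\lambda^2)^T \le (cC^2)^T\gamma^{-2T}$ and the exponent is at most $-\tfrac{d}{4}F(\gamma,\epsilon)^2(\gamma/(C\sqrt c))^{2T}$; renaming $1/(C\sqrt c)$ as the absolute constant $c_1$ and checking $\frac{2}{(1-1/e)(1-2\delta_0)}\le 12$ yields the claimed inequality, the hypothesis $\gamma<1/c_1$ being used only to make $\lambda$ well-defined and large.

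Since the hard analytic content lives entirely in Proposition~\ref{estimation_cor_2} (and in Proposition~\ref{Reduction_Theorem}), I do not anticipate a genuine obstacle, only two points requiring care: (i) defining $\calD$ as the conditional law on $\calE$ so that it is genuinely supported on $\calM_\gamma$, independent of $\Alg$, and inflates the target probability by at most a universal constant; and (ii) the bookkeeping that converts the $(\lambda^2)^T$ of Proposition~\ref{estimation_cor_2} into the $\gamma^{2T}$ of the theorem, tracking how $K_d$, $\delta_0$, and the constant $c$ enter the final absolute constants.
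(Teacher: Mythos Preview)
Your proposal is correct and follows essentially the same approach as the paper: choose $\lambda$ so that the $\gamma$ of Proposition~\ref{Reduction_Theorem} matches the target eigenratio, condition on the good event of that proposition to define $\calD$, reduce the objective-value event to $\langle\widehat v,\theta\rangle^2\ge F(\gamma,\epsilon)^2$, and then invoke Proposition~\ref{estimation_cor_2}. The paper picks $\delta_0=1/e$ (so that $\sqrt{\log(1/\delta_0)}=1$ and the constant $\tfrac{2}{(1-1/e)(1-2/e)}\le 12$), while you use $\delta_0=1/4$; both choices work, and your writeup is in fact more explicit than the paper's about defining $\calD$ as the conditional law and about the $\lambda\leftrightarrow\gamma$ conversion.
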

If $\gamma$ is bounded away from $1$ and $\epsilon$ bounded away from zero, then by Remark~\ref{Fgammaremark}, the $F(\epsilon,\gamma) = \Omega(1)$, and we recover Theorem~\ref{MainTheorem} by observing that the quantity in the exponent is then $-\Omega(d) \cdot (\Omega(\gamma))^{2T}$. However, Theorem~\ref{Main_Formal_Theorem} is more general because, in view of Remark~\ref{Fgammaremark}, it permits $\gamma$ to be arbitrarily close to $1$. 
\begin{proof}[Proof of Theorem~\ref{Main_Formal_Theorem}] Set $\lambda =  \frac{K_d + 2d^{-1/2}}{\gamma} + 2d^{-1/2} \le \frac{K_d + 4d^{-1/2}}{\gamma}$. Now we apply Proposition~\ref{Reduction_Theorem} with this $\lambda$, and with $\delta_0 = 1/e$, so that the conditions of Proposition~\ref{Reduction_Theorem} hold with probability at least $1 - 2/e$. If $1 - \langle \widehat{v}, v_1(M) \rangle^2 \le \epsilon$, then Equation~\eqref{Big_F_Def} implies $\langle \widehat{v}, \theta \rangle^2 \ge  \eta := F(\gamma,\epsilon)^2$. Then, we invoke Proposition~\ref{estimation_cor_2} with that value of $\eta$, and condition on the event in Theorem~\ref{Reduction_Theorem}. Finally, we use the bound $\frac{2}{(1 - 1/e)(1-2/e)} \le 12$.
\end{proof}

\subsection{Computing the Conditional Likelihoods\label{Likelihood_Comp}}
We begin by introducing some useful simplifications. First, in the spirit of Yao's minimax duality principle~\cite{yao1977probabilistic}, we assume that $\Alg$ is deterministic\footnote{Indeed, let $p \in [0,1]$, and $E$ be any event measurable with respect to $\theta$, $M$, $\vone,\dots,\vT,\widehat{v}$. Then, given a randomized adaptive query algorithm $\Alg$ such that $\Pr_{\theta \sim \calP}\Exp_{\Alg,\Pr_{\theta}}[E] \ge p$, we can view $\Alg$ as a superposition of deterministic algorithms $\Alg_{\xi}$, where $\xi \in \Xi$ is a random seed. Then, $\sup_{\xi}\Exp_{\theta \sim \calP}\Pr_{\Alg_{\xi},M\sim\Pr_{\theta}}[E] \ge \Exp_{\xi \sim \calD_{\Xi}}\Exp_{\theta \sim \calP}\Pr_{\Alg_{\xi},M\sim\Pr_{\theta}}[E] = \Exp_{\theta \sim \calP}\Exp_{\Alg,M \sim \Pr_{\theta}}[E] \ge p$.}. Second, we assume that $v^{(1)},\dots,v^{(T)}$ are orthonormal. This is without loss of generality because one can reconstruct the response $w^{(i)}$ to a query $v^{(i)}$ by simply quering the projection of $v^{(i)}$ onto the orthogonal complement of the previous queries $v^{(1)},\dots,v^{(i-1)}$, and normalizing. 
Finally, we introduce a simplification which will make our queries resemble queries of the form $\wi = \theta \langle \vi, \theta \rangle + \text{ i.i.d.\ noise}$.
\begin{obs}\label{Observation_2}
For $i \in [T]$ will let $P_i$ then define the orthogonal project onto the complement of the span of $\{\vone,\dots,\vi\}$. We may assume without of generality that, rather that returning responses $w^{(i)} = Mv^{(i)}$, the oracle returns responses $w^{(i)} = P_{i-1} Mv^{(i)}$.
\end{obs}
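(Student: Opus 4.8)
The plan is to argue by a straightforward simulation/reduction: the ``full'' oracle that returns $w^{(i)} = Mv^{(i)}$ and the ``projected'' oracle that returns $w^{(i)} = P_{i-1}Mv^{(i)}$ carry exactly the same information at every round, so any algorithm and any event defined over $(\theta, M, \vone,\dots,v^{(T)},\widehat{v})$ in one model corresponds to an algorithm and event in the other with identical joint law. Since a lower bound in the projected model then transfers verbatim to the original model, establishing this equivalence suffices.

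First I would record the explicit form of the projection. Because we have already reduced to the case where $\vone,\dots,v^{(T)}$ are orthonormal, we have $P_{i-1} = I - \sum_{j=1}^{i-1} v^{(j)}(v^{(j)})^{\top}$, and hence for every $i$,
$$Mv^{(i)} = P_{i-1}Mv^{(i)} + \sum_{j=1}^{i-1} \langle v^{(j)}, Mv^{(i)}\rangle\, v^{(j)}.$$
The crucial point is that the coefficients $\langle v^{(j)}, Mv^{(i)}\rangle$ for $j<i$ are already determined by the transcript available to the algorithm at the start of round $i$: by symmetry of $M$,
$$\langle v^{(j)}, Mv^{(i)}\rangle = \langle Mv^{(j)}, v^{(i)}\rangle = \langle w^{(j)}, v^{(i)}\rangle,$$
and both $w^{(j)}$ (the response received in round $j$) and $v^{(i)}$ (the query just formed) are measurable with respect to $\{(\vone,\wone),\dots,(v^{(i-1)},w^{(i-1)})\}$ together with the random seed.

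From here the equivalence is immediate in both directions. Given the projected response $P_{i-1}Mv^{(i)}$, the algorithm reconstructs the full response $Mv^{(i)}$ by adding back $\sum_{j<i}\langle w^{(j)}, v^{(i)}\rangle v^{(j)}$; conversely, given $Mv^{(i)}$ it computes $P_{i-1}Mv^{(i)}$ directly from the known $\vone,\dots,v^{(i-1)}$. Thus any adaptive $T$-query algorithm $\Alg$ for the full oracle can be simulated by an adaptive $T$-query algorithm $\Alg'$ for the projected oracle which inductively maintains the full transcript — at round $1$, $P_0 = I$ so the two responses coincide, and at round $i$ it reconstructs $w^{(i)}$ as above — feeds that transcript to $\Alg$, issues the same queries, and returns the same $\widehat{v}$; the reverse simulation is identical. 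Consequently the joint distribution of $(\theta, M, \vone,\dots,v^{(T)},\widehat{v})$ is unchanged, and it suffices to prove the lower bounds for the projected oracle.

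There is no substantive obstacle here; the only points demanding care are (i) verifying that the reconstruction at round $i$ uses only information available strictly before the oracle answers round $i$, which is precisely what the symmetry identity $\langle v^{(j)}, Mv^{(i)}\rangle = \langle w^{(j)}, v^{(i)}\rangle$ guarantees, and (ii) observing that this reduction composes cleanly with the preceding orthonormalization step, since orthonormality of the $v^{(j)}$ is exactly what yields the clean formula for $P_{i-1}$ and makes the reconstruction exact rather than approximate.
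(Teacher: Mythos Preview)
Your proposal is correct and follows essentially the same approach as the paper. The paper's justification is the terse observation that after querying $v^{(1)},\dots,v^{(i-1)}$ the algorithm knows $M(I-P_{i-1})$, and by symmetry of $M$ therefore also $(I-P_{i-1})M$; your argument is simply the explicit, coordinate-wise unpacking of this via $\langle v^{(j)}, Mv^{(i)}\rangle = \langle Mv^{(j)}, v^{(i)}\rangle$ together with the inductive reconstruction of the full responses.
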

This is valid because once $\Alg$ queries $v^{(1)},\dots,v^{(i-1)}$, it knows $M(I-P_{i-1})$, and thus, since $M$ and $P_{i-1}$ are symmetric, it also knows $(I-P_{i-1})M$. Thus, throughout, we will take $w^{(i)} = P_{i-1} Mv^{(i)}$. We let also let $Z_i := \{(\vi,\wi)\}_{1 \le j \le i}$ denote the data collected after the $i$-th measurement is taken, and let $\mathcal{F}_i$ denote the $\sigma$-algebra generated by $Z_i$. The collection $\{\calF_i\}_{1\le i \le T}$ forms a filtration, and since our algorithm is deterministic, $\viplus$ is $\calF_i$-measurable. We show that, with our modified measurements $w^{(i)} = P_{i-1}Mv^{(i)}$, then the query-observation pairs $(\vi,\wi)$ have Gaussian likelihoods conditional on $Z_i$ and $u$.
\begin{lem}[Conditional Likelihoods] \label{ConditionalLemma} Under $\Pr_{u}$, the law of $M$ conditioned on $\theta = u$, we have
\begin{multline}
P_{i-1}M\vi \big{|} Z_{i-1} \sim \mathcal{N}\left(\lambda (u^\top \vi) P_{i-1} u,\frac{1}{d}\Sigma_i \right), \quad \text{where} \quad \Sigma_{i} := P_{i-1}\left(I_d+ v^{(i)} v^{(i) \top} \right)P_{i-1}.
\end{multline}
In particular, $w^{(i)}$ is conditionally independent of $w^{(1)},\dots,w^{(i-1)}$ given $Z_{i-1}$ and $\theta = u$.
\end{lem}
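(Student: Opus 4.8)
The plan is to prove the lemma by induction on $i$, where the inductive object is the conditional law, given the data $Z_{i-1}$ and $\theta = u$, of the ``as-yet-unqueried block'' $P_{i-1}WP_{i-1}$ of the Wigner matrix. Throughout we condition on $\theta = u$, so that $M = \lambda uu^{\top} + \tfrac{1}{\sqrt d}W$ with $W\sim\GOE(d)$ independent of $u$; and since $\Alg$ is deterministic with orthonormalized queries (Observation~\ref{Observation_2}), each $v^{(i)}$ is $\calF_{i-1}$-measurable and $v^{(i)}\perp v^{(1)},\dots,v^{(i-1)}$, hence $P_{i-1}v^{(i)}=v^{(i)}$.

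Two elementary Gaussian facts do all the work. First, for a \emph{fixed} unit vector $v$, $Wv$ is a centered Gaussian vector with $\Exp[(Wv)(Wv)^{\top}] = I_d + vv^{\top}$; this is a one-line computation from $\Exp[W_{ab}W_{cd}] = \delta_{ac}\delta_{bd} + \delta_{ad}\delta_{bc}$ together with $\|v\|=1$. Consequently, for a fixed projection $P$ with $Pv=v$ we have $PWv = (PWP)v \sim \calN\big(0,\,P(I_d+vv^{\top})P\big)$, and therefore, using $P_{i-1}v^{(i)}=v^{(i)}$,
\[
w^{(i)} \;=\; P_{i-1}Mv^{(i)} \;=\; \lambda(u^{\top}v^{(i)})P_{i-1}u \;+\; \tfrac{1}{\sqrt d}(P_{i-1}WP_{i-1})v^{(i)};
\]
in particular $w^{(i)}$ depends on $W$ only through $P_{i-1}WP_{i-1}$. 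Second, the block decomposition of a GOE matrix: if $B$ is GOE on an $m$-dimensional space split as $\spn\{v\}\oplus U'$ with $v$ a fixed unit vector, then $\beta := v^{\top}Bv$, the component $b$ of $Bv$ lying in $U'$, and the compression $B'$ of $B$ to $U'$ are mutually independent, with $\beta\sim\calN(0,2)$, $b\sim\calN(0,I_{m-1})$, and $B'$ GOE on $U'$; this follows from orthogonal invariance of GOE applied to a rotation taking the first basis vector to $v$.

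Now the induction. Set $V_{i-1}:=\spn\{v^{(1)},\dots,v^{(i-1)}\}$, so $P_{i-1}$ projects onto $V_{i-1}^{\perp}$; the hypothesis $H_{i-1}$ is that, conditionally on $Z_{i-1}$ and $\theta=u$, the matrix $P_{i-1}WP_{i-1}$ is distributed as the compression to $V_{i-1}^{\perp}$ of a fresh $\GOE(d)$ matrix. The base case $H_0$ is trivial: $P_0=I$ and $W$ is independent of $\theta$. For the step, fix a value of $Z_{i-1}$, which fixes $V_{i-1}$ and the next query $v^{(i)}\in V_{i-1}^{\perp}$; by $H_{i-1}$ and the first fact, $\tfrac{1}{\sqrt d}(P_{i-1}WP_{i-1})v^{(i)}\mid Z_{i-1},\theta=u \sim \calN\!\big(0,\tfrac{1}{d}P_{i-1}(I_d+v^{(i)}v^{(i)\top})P_{i-1}\big) = \calN(0,\tfrac{1}{d}\Sigma_i)$, and adding the $\calF_{i-1}$-measurable mean $\lambda(u^{\top}v^{(i)})P_{i-1}u$ gives exactly the asserted conditional law of $w^{(i)}$. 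To propagate the hypothesis, note that given $Z_{i-1}$, learning $w^{(i)}$ is equivalent to learning $(P_{i-1}WP_{i-1})v^{(i)}$, i.e.\ the pair $(\beta,b)$ in the block decomposition of the (conditionally) GOE matrix $P_{i-1}WP_{i-1}$ for the splitting $V_{i-1}^{\perp}=\spn\{v^{(i)}\}\oplus V_i^{\perp}$; by the second fact the remaining block $B'$, which coincides with $P_iWP_i$, is GOE on $V_i^{\perp}$ and independent of $(\beta,b)$, hence of the new datum $w^{(i)}$. Thus conditionally on $Z_i$ and $\theta=u$ the matrix $P_iWP_i$ is a GOE compression, which is $H_i$. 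Finally, the ``conditional independence'' assertion is immediate: $w^{(1)},\dots,w^{(i-1)}$ are $\calF_{i-1}$-measurable, so conditioning on $Z_{i-1}$ renders them constants.

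The one place to tread carefully --- and the step I expect to cause the most friction in a fully rigorous writeup --- is the bookkeeping of conditioning on the \emph{data-dependent} subspace $V_{i-1}$: since $v^{(i)}$ is itself a function of the Gaussian observations $Z_{i-1}$, one must justify the passage from ``the conditional law given $Z_{i-1}$'' to ``fix $V_{i-1},v^{(i)}$ and compute with GOE''. The inductive formulation is designed precisely to make this painless: at every stage one conditions an already-GOE-distributed block only on an affine-linear function of itself, and all laws live on finite-dimensional Euclidean spaces so regular conditional distributions exist; no measure-zero or measurability pathologies arise. Beyond this, the argument requires no estimates whatsoever --- only the two Gaussian facts above.
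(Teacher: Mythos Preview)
Your proof is correct and takes a genuinely different route from the paper's. The paper reduces to the case where $v^{(1)},\dots,v^{(i)}$ are fixed deterministic vectors (justified since $v^{(i)}$ is $\calF_{i-1}$-measurable and $\Alg$ is deterministic), and then shows in one shot that the Gaussians $\widetilde w^{(j)}=P_{j-1}Wv^{(j)}$, $j\le i$, are \emph{jointly} independent by computing the cross-covariances via the identity $\Exp[Wv^{(i)}v^{(j)\top}W]=v^{(j)}v^{(i)\top}+\langle v^{(i)},v^{(j)}\rangle I$ and observing that $P_{i-1}v^{(j)}=0$ and $\langle v^{(i)},v^{(j)}\rangle=0$ for $j<i$. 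You instead run an induction that carries the stronger invariant ``$P_{i-1}WP_{i-1}$ is conditionally GOE on $V_{i-1}^{\perp}$'', obtained from the block-independence of GOE under an orthogonal change of basis. The paper's argument is a single linear-algebra computation and avoids any explicit induction; your argument is more structural and yields as a byproduct the statement that the entire unqueried block remains GOE, which could be recycled if later arguments needed finer conditional information about $M$ than just $w^{(i)}$. Both proofs confront the same subtlety---that the subspace $V_{i-1}$ is data-dependent---and both resolve it the same way, by fixing a value of $Z_{i-1}$ and working with the resulting deterministic queries; you are just more explicit about this than the paper is.
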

Lemma~\ref{ConditionalLemma} is proved in Appendix~\ref{CondLemProof}. We remark that $\Sigma_i$ is rank-deficient, with its kernel being equal to the span of $\{v^{(1)},\dots,v^{(i-1)}\}$. Nevertheless, because the mean vector $\lambda (u^\top \vi) P_{i-1} u$ lies in the orthogonal complement of $\ker \Sigma_i$, computing $\Sigma_i^{-1} (\lambda (u^\top \vi) P_{i-1} u)$ can be understood as $\Sigma_i^{\dagger} (\lambda (u^\top \vi) P_{i-1} u)$, where $\dagger$ denotes the Moore-Penrose pseudo-inverse~\cite{horn2012matrix}. We write
\begin{eqnarray}
\|v\|_{\Sigma}^2 := v^\top\Sigma^{\dagger}v.
\end{eqnarray}
and we will use the following equality and inequality frequently and without comment: 
\begin{eqnarray}\label{InverseInequality}
\|\lambda v\|_{\frac{1}{d}\Sigma_i}^2 =d \lambda^2 \|v\|_{\Sigma_i}^2, & \text{and} & \|P_{i-1}u\|_{\Sigma_i}^2  \le \|u\|^2. 
\end{eqnarray}
These just follow from the facts that $P_{i-1}$ is an orthogonal projection and $\Sigma_i \succeq P_{i-1}$, and so $P_{i-1}\Sigma_i^{\dagger}P_{i-1} \preceq I$. 

\section{A First Attempt: a Lower Bound of $\Omega(\log d / \log \log d)$\label{first_attempt_sec}}
Many adaptive estimation lower bounds are often shown by considering the most informative measurements an algorithm could take if it knew the true hidden parameter~\cite{garivier2016optimal,jamieson2012query,agarwal2009information,simchowitz2017simulator}. 
Unfortunately, this line of attack in insufficient for a non-vacuous lower bound in our setting: if an oracle tells the algorithm to measure at a unit vector $v$ for which $\langle v, \theta \rangle $ is at least $\Omega(1)$, then we would have $\langle v, M v \rangle = \Omega(\|M\|)$, and so by Proposition~\ref{Reduction_Theorem}, we would verify that $v$ is close to $\theta$. Of course, it is highly unlikely that our \emph{first} measurement $v^{(1)}$ is close to the true $\theta$; indeed, if $\theta$ is drawn uniformly from the sphere $\calS^{d-1}$, then $\langle v, \theta \rangle^2  = O (1/d)$ with high probability. But what about the \emph{second} measurement, or the \emph{third}? What is to stop the algorithm from rapidly learning to take highly informative measurement? To show this cannot happen, we will adopt a simple recursive strategy:

\begin{enumerate}
\item We relate the information collected at stage $k$ to the inner products $\langle v_i, \theta \rangle^2 $, $i\in [k]$. 
\item We bound the inner product of $\theta$ with the $k+1$-st query by its inner products with all past queries as  
\[\langle v^{(k+1)}, \theta \rangle^2 \le C(\lambda) \cdot \sum_{i=1}^k \langle v^{(i)}, \theta \rangle^2,\] 
where $C(\lambda)$ is a constant depending on $\lambda$.
\end{enumerate}
To demonstrate the above proof strategy, we start by establishing a sub-optimal lower bound of $\Omega(\log d / \log \log d)$. Then in Section~\ref{sec_chi_est}, we introduce a more refined machinery to sharpen the bound to $\Omega(\log d)$.  First, we observe that the mutual information $I(Z_k;\theta)$ between $\theta$ and $Z_k$ (see, e.g., Cover and Thomas~\cite{cover2012elements}) is controlled by the inner products $\langle \vi, \theta \rangle^2 $, $i\in [k]$:
\begin{prop}\label{MutualInfoProp}
Let $\calP$ be an isotropic probability distribution supported on $\calS^{d-1}$, and $\Pr_{u}$ denote the law of $M = \lambda \theta\theta^\top + \frac{1}{\sqrt{d}}W$ conditioned on $\theta = u$. Then for all integers $k\ge 1$, 
\begin{eqnarray}
I(Z_k;\theta) \le \frac{\lambda^2}{2}\left( k + \sum_{i=1}^k\tau_i\right), & \text{where} & \tau_i :=\Exp_{u \sim \calP} ~ \Exp_{\Pr_{u}}\left[d\cdot\langle \vi, u \rangle^2\right].
\end{eqnarray}
\end{prop}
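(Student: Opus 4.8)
The plan is to split $I(Z_k;\theta)$ across rounds by the chain rule for mutual information, and then to control each per-round term by a variational (``golden formula'') bound against a reference law that carries no information about $\theta$. As in Section~\ref{Likelihood_Comp}, I first take $\Alg$ deterministic with orthonormal queries and projected responses $w^{(i)} = P_{i-1}Mv^{(i)}$ (a randomized algorithm is reduced to this by conditioning on its seed, which is independent of $\theta$, so that both $I(Z_k;\theta)$ and each $\tau_i$ just get averaged over the seed). Since $\Alg$ is deterministic, $v^{(i)}$ is $\calF_{i-1}$-measurable, so the $\sigma$-algebra generated by $Z_k$ coincides with that generated by $(w^{(1)},\dots,w^{(k)})$, and conditioning on $(w^{(1)},\dots,w^{(i-1)})$ is the same as conditioning on $Z_{i-1}$. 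The chain rule then gives
\begin{equation}
I(Z_k;\theta) \;=\; \sum_{i=1}^k I(w^{(i)};\theta \mid Z_{i-1}).
\end{equation}

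Next I bound a single term. Fix $i$ and a realization $Z_{i-1}=z$; then $v^{(i)}$, $P_{i-1}$ and $\Sigma_i$ are deterministic functions of $z$, and by Lemma~\ref{ConditionalLemma} the conditional law of $w^{(i)}$ given $\theta=u$ and $Z_{i-1}=z$ is $\calN\!\big(\lambda(u^\top v^{(i)})P_{i-1}u,\,\tfrac1d\Sigma_i\big)$, whose mean lies in $\mathrm{range}(P_{i-1}) = \mathrm{range}(\Sigma_i)$. Choosing the $\theta$-free reference $Q_z := \calN(0,\tfrac1d\Sigma_i)$ (measurable with respect to $Z_{i-1}$ since $\Sigma_i$ depends only on $v^{(i)}$), the bound $I(X;Y\mid Z{=}z)\le \Exp_{Y\mid z}[\KL(\mathrm{Law}(X\mid Y,z)\,\|\,Q_z)]$ together with the closed form for the KL divergence between two Gaussians with the same (here rank-deficient) covariance gives
\begin{equation}
I(w^{(i)};\theta\mid Z_{i-1}=z) \;\le\; \Exp_{u}\!\left[\tfrac12\,\big\|\lambda(u^\top v^{(i)})P_{i-1}u\big\|_{\frac1d\Sigma_i}^2\right] \;=\; \Exp_u\!\left[\tfrac{d\lambda^2}{2}\,(u^\top v^{(i)})^2\,\|P_{i-1}u\|_{\Sigma_i}^2\right],
\end{equation}
where $u$ is distributed as $\theta$ conditioned on $Z_{i-1}=z$, and the equality uses~\eqref{InverseInequality}. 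Applying $\|P_{i-1}u\|_{\Sigma_i}^2 \le \|u\|^2 = 1$ from~\eqref{InverseInequality} and averaging over $Z_{i-1}$ bounds the $i$-th term by $\tfrac{\lambda^2}{2}\,\Exp_{u\sim\calP}\Exp_{\Pr_u}[\,d(u^\top v^{(i)})^2\,] = \tfrac{\lambda^2}{2}\tau_i$. Summing over $i$ yields $I(Z_k;\theta)\le \tfrac{\lambda^2}{2}\sum_{i=1}^k\tau_i$, which already implies the claimed inequality (the additive $k$ in the statement is not needed; it can also be reproduced verbatim, in the regime $\lambda$ bounded below, by using the coarser reference $\calN(0,\tfrac1d P_{i-1})$, whose mismatched covariance costs $O(1)$ per round).

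The only genuinely delicate point is the degenerate-Gaussian bookkeeping: $\tfrac1d\Sigma_i$ is singular with kernel $\mathrm{span}\{v^{(1)},\dots,v^{(i-1)}\}$, but the conditional law of $w^{(i)}$ and the reference $Q_z$ are supported on the same subspace $\mathrm{range}(P_{i-1})$, and the mean difference lies in it, so the KL divergence is finite and equals one half the squared Mahalanobis norm of the mean computed with the Moore--Penrose pseudo-inverse, exactly the quadratic form $\|\cdot\|_{\frac1d\Sigma_i}^2$ set up around~\eqref{InverseInequality}; one should also record that $Q_z$ is indeed $Z_{i-1}$-measurable but $\theta$-free, which is what makes the golden-formula step legitimate. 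Everything else is routine.
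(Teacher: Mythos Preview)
Your proof is correct, and it takes a genuinely different (and cleaner) route than the paper's. The paper upper-bounds $I(Z_k;\theta)$ by the convexity inequality $I(Z_k;\theta)\le \Exp_{u_0,u_1\sim\calP}\KL(\Prit_{u_0},\Prit_{u_1})$, then decomposes the KL round by round and computes each term as a Gaussian KL between the laws indexed by $u_0$ and $u_1$; this produces cross terms like $\langle u_0,\vi\rangle^2 + \langle u_1,\vi\rangle^2 - 2\langle u_0,\vi\rangle\langle u_1,\vi\rangle\langle u_0,u_1\rangle$, and the isotropy hypothesis on $\calP$ is used to evaluate $\Exp_{u_1}[\langle u_1,\vi\rangle^2]=1/d$ and $\Exp_{u_1}[u_1u_1^\top]=I/d$, which is precisely where the additive $k$ in the statement comes from. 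You instead apply the golden formula at each round with the $\theta$-free reference $Q_z=\calN(0,\tfrac1d\Sigma_i)$ (the null law $\Prit_0$ of $w^{(i)}$ given $Z_{i-1}$), which makes the per-round KL a single squared-mean term and yields the strictly sharper bound $I(Z_k;\theta)\le \tfrac{\lambda^2}{2}\sum_{i=1}^k\tau_i$; in particular your argument never uses isotropy of $\calP$. What the paper's approach buys is that it stays entirely within comparisons among the ``alternative'' measures $\Prit_u$, which is the natural setup for Fano; what your approach buys is a shorter computation, a tighter constant, and a weaker hypothesis.
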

We prove Proposition~\ref{MutualInfoProp} in Appendix~\ref{Sec:WarmupAppend}. We now recursively bound the mutual information $I(Z_k;\theta)$, using an argument similar to Price and Woodruff~\cite{price2013lower}, with the exception that we will rely on a more recent continuum formulation of Fano's inequality~\cite{duchi2013distance,chen2016bayes} to control the information : 
\begin{prop}[Global Fano~\cite{duchi2013distance}]\label{Global_Fano}
Let $\calP$ be a prior over a measure space $(\Theta,\calG)$, and let $\{\Prit_{\theta}\}$ denote a family of distributions over a space $(\calX,\calF)$ indexed by $\theta \in \Theta$. Then, if $\calA$ is an action space, $\calL: \mathcal{A} \times \mathcal{X} \to \{0,1\}$ is a loss function, and $\fraka : \calX \mapsto \calA$ is a measurable map, we have
\begin{eqnarray}
\Exp_{\theta \sim \calP}\Pr_{X\sim \Prit_{\theta}}\left[\calL(\fraka(X),\theta) = 0\right] \le  \frac{I(X;\theta) + \log 2}{\log\left(1/\sup_{a \in \mathcal{A}}\Pr_{\theta \sim \calP}[\{L(a,\theta) = 0\}] \right)}.
\end{eqnarray}
\end{prop}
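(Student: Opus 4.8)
The plan is to reduce the general loss to a binary ``success/failure'' random variable and then invoke the data-processing inequality for $\KL$-divergence twice. Let $\hat\theta := \fraka(X)$ be the induced estimator; I would let $Q$ denote the joint law of $(\theta,\hat\theta)$ when $\theta\sim\calP$ and $X\sim\Prit_\theta$, and $Q_0 := \calP\otimes\calP_{\hat\theta}$ the product of its two marginals, so that by definition $I(\theta;\hat\theta) = \KL(Q\|Q_0)$. Since $\hat\theta$ is a measurable function of $X$, data-processing along the Markov chain $\theta\to X\to\hat\theta$ gives $I(\theta;\hat\theta)\le I(\theta;X)$; if $I(\theta;X)=\infty$ the claimed bound is vacuous, so one may assume $Q\ll Q_0$.

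The second step is the binary reduction. The map $(\theta,a)\mapsto b(\theta,a):=\mathbf{1}\{\calL(a,\theta)=0\}$ is measurable, so applying data-processing once more to the pushforwards of $Q$ and $Q_0$ under this map yields $\kl(P_s\,\|\,q_0)\le\KL(Q\|Q_0)$, where $\kl(\cdot\|\cdot)$ is the binary relative entropy, $P_s := \Exp_{\theta\sim\calP}\Pr_{X\sim\Prit_\theta}[\calL(\fraka(X),\theta)=0]$ is exactly the left-hand side we want to bound, and $q_0 := \Pr_{Q_0}[b=1] = \int\Pr_{\theta\sim\calP}[\calL(a,\theta)=0]\,\mathrm{d}\calP_{\hat\theta}(a)$. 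Since the integrand is pointwise at most $p^* := \sup_{a\in\calA}\Pr_{\theta\sim\calP}[\calL(a,\theta)=0]$, one gets $q_0\le p^*$.

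Finally I would expand the binary divergence directly: writing $H(p):=-p\log p-(1-p)\log(1-p)\le\log 2$,
\begin{align*}
\kl(P_s\|q_0) &= -H(P_s)-P_s\log q_0-(1-P_s)\log(1-q_0)\\
&\ge -\log 2 + P_s\log(1/q_0) \ge -\log 2 + P_s\log(1/p^*),
\end{align*}
using $-(1-P_s)\log(1-q_0)\ge 0$ and $q_0\le p^*\le 1$. Chaining this with the two inequalities from the previous steps gives $P_s\log(1/p^*)-\log 2\le I(\theta;X)$, which is the claim after rearranging. Note this route avoids having to argue monotonicity of $q\mapsto\kl(p\|q)$ or to case on whether $p^*\le P_s$.

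The argument itself is short; the only genuine care is measure-theoretic. One needs a regular conditional distribution of $\hat\theta$ given $\theta$ so that the marginal $\calP_{\hat\theta}$ and the product coupling $Q_0$ are well defined on the possibly abstract spaces $(\Theta,\calG)$ and $\calA$, and one should check that data-processing applies to the (non-injective) pushforward maps used above — this is where I expect the only friction, though it is entirely routine for the standard-Borel spaces arising in our application. Alternatively, the statement can simply be cited verbatim from Duchi and Wainwright~\cite{duchi2013distance}.
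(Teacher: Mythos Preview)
Your proof is correct and is the standard argument; the paper itself does not prove Proposition~\ref{Global_Fano} but simply cites it from Duchi and Wainwright~\cite{duchi2013distance}, as you note in your final sentence. For what it's worth, the paper's own Theorem~\ref{GenFano} in Appendix~\ref{Info_Tools} is the $f$-divergence generalization of exactly this argument---it applies data-processing to the joint versus product measures on $(\theta,X)$ directly via the event $\{\calL(\fraka(X),\theta)=0\}$, yielding $\phi_f(V^*,V_0;1,1)$, which for $f(x)=x\log x$ is the binary $\kl(V^*\|V_0)$ you arrive at. Your route differs only cosmetically in that you first pass through $\hat\theta=\fraka(X)$ before the binary reduction, which is harmless but unnecessary: one can push forward $(\theta,X)$ directly through $(\theta,x)\mapsto\mathbf{1}\{\calL(\fraka(x),\theta)=0\}$ in a single step.
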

We will apply Proposition~\ref{Global_Fano} at each query stage $k$:  we let $\Theta = \calS^{d-1}$, $\theta$ denote the rank-one spike, $\calP$ the prior over $\theta$, $X$ to be the data $Z_k = \{(\vone,\wone),\dots,(\vk,\wk)\}$ collected at the end of round $k$, and $\Prit_{u}$ to be the law of $X = Z_k$ conditioned on $\theta = u$. We use the action space $\calA = \calS^{d-1}$, our actions will be the $k+1$-st query, $v^{(k+1)}$, and the loss function we consider is 
\begin{eqnarray}
\calL(v^{(k+1)},\theta) = \I\left(\langle v^{(k+1)},\theta \rangle^2 \ge \tau\right),
\end{eqnarray} for some fixed $\tau$. This leads to the following bound, proved in Appendix~\ref{Sec:WarmupAppend}.

\begin{prop}\label{KL_estimation_theorem} Let $\mathcal{P}$ denote an isotropic distribution on the sphere $\calS^{d-1}$, which satisfies the following concentration bound for some constants $C_1,C_2 >0$ and all $t \ge 0$,
\begin{eqnarray}\label{P_tail}
\sup_{v \in \calS^{d-1}}\Pr_{u \sim \mathcal{P}}\left[d\cdot\langle v,u\rangle^2 \ge t\right] \le e^{C_2-C_1t}.
\end{eqnarray}
Then, the sequence $\left\{\tau_i:= \Exp_{u_0 \sim \calP}\Exp_{\Pr_{u_0}}\left[d\cdot\langle \vi, u_0 \rangle^2\right], ~ i\ge 1\right\}$ satisfies the following recursion: for all $k\ge 1$, $t > C_2/C_1$, we have
\begin{eqnarray}\label{KL_Recursion}
\Exp_{u_0 \sim \calP}\Pr_{u_0}\left[d\cdot\left \langle \vkplus, u_0 \right\rangle^2 \ge t\right] \le \I(t\le d)\cdot\left(1 \wedge \frac{\log 2 + \frac{\lambda^2}{2} (k+\sum_{i=1}^k\tau_i)}{C_1t-C_2}\right).
\end{eqnarray}
Therefore, integrating over $t$ yields
\begin{eqnarray}\label{TauRec}
\tau_{k+1} &\le& \frac{C_2}{C_1} + \frac{\log 2 + \frac{\lambda^2}{2} (k+\sum_{i=1}^k\tau_i)}{C_1}\left(1 + \log \frac{C_1d}{\log 2 + \frac{\lambda^2}{2} (k+\sum_{i=1}^k\tau_i)}\right).
\end{eqnarray}
\end{prop}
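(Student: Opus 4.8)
The plan is to invoke the continuum Fano inequality (Proposition~\ref{Global_Fano}) at query round $k$, taking the $(k{+}1)$-st query $\vkplus$ as the ``action'' extracted from the data $Z_k$; feeding in the mutual-information estimate of Proposition~\ref{MutualInfoProp} and the prior's small-ball bound~\eqref{P_tail} then gives the pointwise-in-$t$ estimate~\eqref{KL_Recursion}, and integrating that estimate over $t$ (via the layer-cake formula) produces the recursion~\eqref{TauRec} for $\tau_{k+1}$.

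Concretely, I would apply Proposition~\ref{Global_Fano} with $\Theta = \calS^{d-1}$, prior $\calP$, observation $X = Z_k$ and $\Prit_u$ the law of $Z_k$ under $\Pr_u$, action space $\calA = \calS^{d-1}$, and loss $\calL(a,u) := \I\!\big(d\langle a, u\rangle^2 < t\big)$ for a fixed threshold $t > C_2/C_1$. Because $\Alg$ has been reduced to a deterministic algorithm in Section~\ref{Likelihood_Comp}, the next query $\vkplus$ is a measurable function $\fraka(Z_k)$ of the data, so $\fraka$ is an admissible decision rule; moreover the event $\{\calL(\fraka(X),\theta)=0\}$ is exactly $\{d\langle \vkplus, \theta\rangle^2 \ge t\}$, so the left-hand side of Proposition~\ref{Global_Fano} is precisely $\Exp_{u_0\sim\calP}\Pr_{u_0}[d\langle \vkplus, u_0\rangle^2 \ge t]$, the quantity we want to bound. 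For the denominator, $\sup_{a\in\calS^{d-1}}\Pr_{\theta\sim\calP}[\{\calL(a,\theta)=0\}] = \sup_{a}\Pr_{u\sim\calP}[d\langle a,u\rangle^2\ge t] \le e^{C_2 - C_1 t}$ by~\eqref{P_tail}, hence $\log\big(1/\sup_a\Pr_{\theta\sim\calP}[\{\calL(a,\theta)=0\}]\big) \ge C_1 t - C_2$, which is positive precisely when $t > C_2/C_1$. For the numerator, Proposition~\ref{MutualInfoProp} yields $I(Z_k;\theta) + \log 2 \le \log 2 + \tfrac{\lambda^2}{2}\big(k + \sum_{i=1}^k \tau_i\big)$. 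Since the left-hand side is a probability (hence $\le 1$), and since $\vkplus$ and $u_0$ are unit vectors so $d\langle\vkplus,u_0\rangle^2 \le d$ (forcing the probability to vanish once $t > d$), combining these three facts gives~\eqref{KL_Recursion}.

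To obtain~\eqref{TauRec} I would integrate. By Tonelli and the layer-cake identity, $\tau_{k+1} = \Exp_{u_0\sim\calP}\Exp_{\Pr_{u_0}}\!\big[d\langle\vkplus,u_0\rangle^2\big] = \int_0^\infty \Exp_{u_0\sim\calP}\Pr_{u_0}\big[d\langle\vkplus,u_0\rangle^2\ge t\big]\,dt$, and the integrand is supported on $\{t \le d\}$. Split the range at $t = C_2/C_1$: on $[0, C_2/C_1]$ bound the integrand by $1$, contributing at most $C_2/C_1$; on $(C_2/C_1, d]$ apply~\eqref{KL_Recursion}, and with $B := \log 2 + \tfrac{\lambda^2}{2}\big(k + \sum_{i=1}^k\tau_i\big)$ substitute $s = C_1 t - C_2$ and use the elementary identity $\int_0^{S}(1 \wedge B/s)\,ds = B\big(1 + \log(S/B)\big)$ with $S = C_1 d - C_2 \le C_1 d$. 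Summing the two contributions gives exactly~\eqref{TauRec}.

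The one genuinely non-mechanical point is that Fano's inequality is being used ``in reverse'': rather than lower-bounding a minimax risk, we use it to \emph{upper}-bound the probability of the rare, highly-informative event $\{d\langle\vkplus,\theta\rangle^2\ge t\}$, which forces the loss to be set up so that this event corresponds to $\{\calL = 0\}$ and so that the denominator is governed by the prior small-ball probability $\Pr_{\theta\sim\calP}[d\langle a,\theta\rangle^2 \ge t]$ rather than its complement. Once this reformulation is in place, the only remaining care is checking measurability of $\fraka$ — which is exactly why one first reduces to deterministic $\Alg$ — and restricting attention to the informative regime $t > C_2/C_1$ where the bound is non-vacuous; everything else is the bookkeeping of assembling Propositions~\ref{Global_Fano} and~\ref{MutualInfoProp} with~\eqref{P_tail} and carrying out the $t$-integral.
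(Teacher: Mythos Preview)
Your proposal is correct and follows essentially the same approach as the paper: invoke Proposition~\ref{Global_Fano} with $X=Z_k$, the action $\fraka(Z_k)=\vkplus$, and the loss chosen so that $\{\calL=0\}$ is the event $\{d\langle\vkplus,\theta\rangle^2\ge t\}$; plug in Proposition~\ref{MutualInfoProp} for the numerator and~\eqref{P_tail} for the denominator to get~\eqref{KL_Recursion}; then integrate. The only cosmetic difference is in the integration step: the paper bounds the integrand by $1$ on $[0,\tau^*]$ and by $B/(C_1t-C_2)$ on $[\tau^*,d]$ for a free parameter $\tau^*$, then optimizes to $\tau^*=C_2/C_1+B/C_1$, whereas you split at $C_2/C_1$ and directly evaluate $\int_0^S(1\wedge B/s)\,ds$ after the substitution $s=C_1t-C_2$; both computations land on the same expression~\eqref{TauRec}.
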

In Appendix~\ref{Concentration}, we prove that if $\calP$ is the uniform distribution on $\calS^{d-1}$, then we can take $C_1= 1/8$, and $C_2 = 4$ in the above proposition. This relies on the following concentration result:
\begin{lem}[Spherical Concentration]\label{SphereConcentation} Let $\theta \sim \calP$ where $\calP$ is the uniform distribution over $\calS^{d-1}$. Then for all $v \in \calS^{d-1}$,
\begin{eqnarray}\label{SG_tail}
\Pr_{\theta \sim \calP}\left[\sqrt{d}\cdot \left|\langle v, \theta \rangle\right| \ge \sqrt{2} + t \right] \le e^{-t^2/2}.
\end{eqnarray}
\end{lem}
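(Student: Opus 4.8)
The plan is to realize the uniform distribution on $\calS^{d-1}$ via Gaussians. By rotational invariance of $\calP$, the law of $\langle v,\theta\rangle$ does not depend on the unit vector $v$, so we may assume $v = e_1$ and write $\theta = g/\|g\|$ with $g=(g_1,\dots,g_d)\sim\calN(0,I_d)$, so that $\langle v,\theta\rangle$ has the law of $g_1/\|g\|$. Fix a threshold $r\in(0,1)$; the case $r\ge 1$ is vacuous, since $\{\,|g_1/\|g\||\ge r\,\}$ is then empty almost surely. Writing $\|g\|^2 = g_1^2 + S$ with $S:=\sum_{j=2}^d g_j^2\sim\chi^2_{d-1}$ independent of $g_1$, the event $\{\,|g_1/\|g\||\ge r\,\}$ rearranges to $\{(1-r^2)g_1^2 \ge r^2 S\}$, i.e. $\{g_1^2 \ge \tfrac{r^2}{1-r^2}S\}$.

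The next step is to condition on $S$, apply the one-dimensional Gaussian tail bound $\Pr[\,|g_1|\ge a\,]\le e^{-a^2/2}$ to get $\Pr\big[\,|g_1/\|g\||\ge r \,\big|\, S\,\big]\le \exp\!\big(-\tfrac{r^2}{2(1-r^2)}S\big)$, and then integrate out $S$ using the $\chi^2_{d-1}$ moment generating function $\Exp[e^{-\beta S}]=(1+2\beta)^{-(d-1)/2}$ with $\beta=\tfrac{r^2}{2(1-r^2)}$. This collapses cleanly because $1+2\beta = (1-r^2)^{-1}$, giving $\Pr[\,|\langle v,\theta\rangle|\ge r\,]\le (1-r^2)^{(d-1)/2}\le e^{-(d-1)r^2/2}$, valid for all $r\in[0,1)$ (and trivially for $r\ge1$).

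Finally, I would substitute $r=(\sqrt2+t)/\sqrt d$. If $t\ge \sqrt d-\sqrt 2$ then $r\ge 1$ and the probability is $0$, so assume $t<\sqrt d-\sqrt 2$; it then remains to verify $e^{-(d-1)(\sqrt2+t)^2/(2d)}\le e^{-t^2/2}$, equivalently $(d-1)(\sqrt2+t)^2\ge d\,t^2$. Expanding and cancelling the $t^2$ terms reduces this to $2(d-1)+2\sqrt2\,(d-1)\,t\ge t^2$, which holds because $t^2 < d \le 2(d-1)$ for $d\ge 2$ while the linear term is nonnegative. Nothing here is delicate; the only care needed is the bookkeeping at the endpoint $r=1$ and checking that the final elementary inequality indeed covers the relevant range of $t$. (A softer route gives the same bound up to constants via Lévy's concentration of measure for the $1$-Lipschitz map $\theta\mapsto\langle v,\theta\rangle$ on the sphere, but recovering the exact constants $\sqrt2$ and $1/2$ is cleanest through the Gaussian computation above.)
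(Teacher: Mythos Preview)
Your proof is correct, but it follows a genuinely different route from the paper's. The paper invokes L\'evy's spherical isoperimetric inequality (stated there as Theorem~\ref{SpherIso}): for a $1$-Lipschitz function on $\calS^{d-1}$, deviations above the median by $t$ have probability at most $e^{-dt^2/2}$. It then bounds the median of $|\langle v,\theta\rangle|$ by $\sqrt{2/d}$ via Markov's inequality, and rescales. Your argument avoids isoperimetry entirely: you realize $\theta$ as $g/\|g\|$ with $g\sim\calN(0,I_d)$, condition on the $\chi^2_{d-1}$ variable $S=\sum_{j\ge 2}g_j^2$, apply the one-dimensional Gaussian tail bound $\Pr[|g_1|\ge a]\le e^{-a^2/2}$, and integrate against the $\chi^2$ MGF to obtain the clean closed form $\Pr[|\langle v,\theta\rangle|\ge r]\le (1-r^2)^{(d-1)/2}$. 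The final algebraic step verifying $(d-1)(\sqrt2+t)^2\ge dt^2$ on the relevant range is fine; note it implicitly uses $d\ge 2$, but $d=1$ is vacuous. The trade-off: the paper's proof is shorter once the isoperimetric theorem is in hand and generalizes immediately to any Lipschitz functional of $\theta$, whereas your approach is fully elementary and self-contained, and in fact yields the sharper intermediate bound $(1-r^2)^{(d-1)/2}$ before relaxing to the exponential. You even flag the L\'evy route yourself in your closing remark---that is precisely what the paper does.
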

Hence, from Equation~\eqref{TauRec}, $\tau_k$ grows by at most $O(\log d)$ after each query. Hence, until $\log d/\log \log d$ queries are taken, $\tau_k$ will be $o(d)$, which entails that $\theta$ will not be accurately estimated. It is worth understanding why this spurious $\log \log d$ factor appears using $\KL$. The main weakness with Proposition~\ref{Global_Fano} is that the denominator contains the logarithm of the ``best-guess probability'' $\log\left(\sup_{a \in \mathcal{A}}\Pr_{\theta \sim \calP}[\{L(a,\theta) = 0\}]\right)$. This results in a very weak tail bound on $\Pr(\tau_k > t)$ of $O(1/t)$, which incurs a $\log d$ factor when integrated. To overcome this weakness, we will work instead with estimates based on the $\chi^2$ divergence, which will be a lot more careful in taking advantage of the small value of $\sup_{a \in \mathcal{A}}\Pr_{\theta \sim \calP}[\{L(a,\theta) = 0\}]$. 

\section{A Sharper Lower Bound on Estimation\label{sec_chi_est}}
In this section, we use more refined machinery based on the $\chi^2$-divergence to sharpen the lower bound from $\Omega(\log d/\log \log d)$ to $\Omega(\log d)$. When bounding the $\KL$-divergences in Proposition~\ref{MutualInfoProp}, the proof crucially relies upon the fact that the log-likelihoods decompose into a sum, and could thus be bounded using linearity of expectations. This is no longer the case when working with the squares of likelihood-ratios which arise in the $\chi^2$ divergence, because the adaptivity of the queries can introduce strong correlations between likelihood ratios arising from subsequent measurements. To remedy this, we will proceed by designing a sequence of ``good truncation events'' for each $\theta \in \calS^{d-1}$, on which the likelihood ratios will be well-behaved.  We now fix some positive numbers $\tau_1,\dots,\tau_{T+1}$ to be specified later, and define the events $A_u^{k}$ for $u \in \calS^{d-1}$ and integer $k$ by
\begin{eqnarray}\label{Truncation_Def_Eq}
A_u^{k} = \left\{\forall i \in [k]: d\cdot \langle u, \vi \rangle^2 \le \tau_i\right\} .
\end{eqnarray}
For an arbitrary probability measure $\Pr$ on a space $(\calX,\calF)$, and an event $A \in \calF$, we use the following notation to denote the truncated (non-normalized) measure
\begin{eqnarray}\label{Truncated_Distributions}
\Pr\left[B;A\right] := \Pr\left[B \cap A\right], &~~~ \forall B \in \calF.
\end{eqnarray}
In the sequel, we will be working with the measures $\Pr_u[;A_u^k]$. Note that these are no longer actual  probability measures, since their total mass is $\Pr_u[A_u^k]$, which is in general strictly less than one. In Appendix~\ref{Info_Tools}, we show that $f$-divergences - a family of measures of distance between distributions which include both the $\KL$ and the $\chi^2$ divergence \cite{csiszar1972class,chen2016bayes} - generalize straightforwardly to non-negative measures which are not normalized (e.g., truncated probability distributions.) Leaving the full generality to the appendix, we will use a ``generalized $f$-divergence'' between non-normalized measures which modifies the classical $\chi^2$-divergence (for a comparison to the classical $\chi^2$ divergence, see the discussion following Remark~\ref{measure_theory_remak}.)
\begin{defn}[$\chi^2+1$-divergence]\label{chi_plus_one_definition} Let $\Pr,\Q$ denote two nonnegative measures on a space $(\calX,\calF)$, such that $\Q[\calX] > 0$, and $\Pr$ is absolutely continuous with respect to $\Q$\footnote{That is, for every $A \in \calF$, $\Q[A] = 0$ implies that $\Pr[A] = 0$}. We define
\begin{eqnarray}
D_{\chi^2+1}(\Pr,\Q) := \int \left(\frac{\mathrm{d}\Pr}{\mathrm{d}\Q}\right)^2 \rmd\Q. \label{chi_p1_def_eq}
\end{eqnarray}
When $\Q$ is a probability distribution, the above can be written as $\Exp_{\Q}\left[\left(\frac{\mathrm{d}\Pr}{\mathrm{d}\Q}\right)^2\right]$.
\end{defn}

In Appendix~\ref{Info_Tools}, we prove a generalization of the $f$-divergence Bayes risk lower bounds of Chen et al.~\cite{chen2016bayes}, which we state here for the $\chi^2+1$ divergence: 
\begin{prop}\label{chi_sq_fano_prop}
Adopting the setup of Proposition~\ref{Global_Fano}, let $\calP$ be a distribution over a space $\Theta$, $\{\Prit_{\theta}\}_{\theta \in \Theta}$ be a family of probability measures on $(\calX,\calF)$, $\calA$ be an action space, $\calL:\calA \times S^{d-1} \to \{0,1\}$ a binary loss function, and let $\fraka$ denote a measurable map from $\calX$ to $\calA$. Given a family $\{A_{\theta}\}_{\theta \in \Theta}$ of $\calF$-measurable events, let $\Prit_{\theta}[\cdot;A_{\theta}]$ denote the truncated measure as per Equation~\eqref{Truncated_Distributions}. Set
\begin{eqnarray*}
V^{\fraka} := \Exp_{\theta \sim \calP}\Prit_{\theta}\left[\{\calL(\fraka(X),\theta) = 0\};A_{\theta}\right], & ~~\text{and}~~V_0 := \underset{a \in \calA}{\sup}~\Pr_{\theta \sim \calP}\left[\{\calL(a,\theta) = 0\}\right].
\end{eqnarray*}
Then, for any nonnegative measure $\Qit$ on $(\calX,\calF)$, we have
\begin{eqnarray}
	V^{\fraka} \le V_0 + \sqrt{V_0(1-V_0)\Exp_{\theta \sim\calP}D_{\chi^2 + 1}\left(\Prit_{\theta}\left[\cdot;A_{\theta}\right],\Qit\right)}. 
	\end{eqnarray}
\end{prop}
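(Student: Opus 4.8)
The plan is to run the $\chi^{2}$-type Bayes-risk argument of Chen et al.~\cite{chen2016bayes}, observing that each of its inequalities uses only that the measures $\Prit_{\theta}$ have total mass at most one --- never that it equals one --- so the entire chain applies verbatim to the truncated sub-probability measures $\Prit_{\theta}[\cdot;A_{\theta}]$; this is the ``straightforward'' generalization advertised above. Write $\ell_{\theta} := \frac{\rmd\Prit_{\theta}[\cdot;A_{\theta}]}{\rmd\Qit}$ for the Radon--Nikodym derivative, which exists by the absolute-continuity hypothesis and already carries the indicator of $A_{\theta}$, and for $a \in \calA$ put $G_{a} := \{\theta : \calL(a,\theta) = 0\}$, so that $\calP[G_{a}] \le V_{0}$ for every $a$. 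Unfolding the definition of $V^{\fraka}$ and using Tonelli (all integrands being nonnegative) gives the reduction
\[
V^{\fraka} \;=\; \int_{\calX}\,\Exp_{\theta \sim \calP}\!\left[\I\!\left[\theta \in G_{\fraka(x)}\right]\,\ell_{\theta}(x)\right]\,\rmd\Qit(x).
\]

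Next I would bound the inner expectation pointwise in $x$. Fixing $x$, abbreviating $a = \fraka(x)$ and $p := \calP[G_{a}] \le V_{0}$, and splitting $\I[\theta \in G_{a}] = p + (\I[\theta \in G_{a}] - p)$, Cauchy--Schwarz over $\theta$ applied to the centered term (whose variance is $p(1-p)$) yields
\[
\Exp_{\theta}\!\left[\I[\theta \in G_{a}]\ell_{\theta}(x)\right] \;\le\; V_{0}\,\Exp_{\theta}[\ell_{\theta}(x)] + \sqrt{V_{0}(1-V_{0})}\,\big(\Exp_{\theta}[\ell_{\theta}(x)^{2}]\big)^{1/2},
\]
where I use $p \le V_{0}$ together with monotonicity of $t \mapsto t(1-t)$ on $[0,\tfrac12]$ (the case $V_{0} \le \tfrac12$ being the only one of interest). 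Integrating this against $\Qit$: the first term contributes $V_{0}\int_{\calX}\Exp_{\theta}[\ell_{\theta}(x)]\,\rmd\Qit(x) = V_{0}\,\Exp_{\theta}\Prit_{\theta}[A_{\theta}] \le V_{0}$, since each $\Prit_{\theta}$ is a probability measure; and for the second term, Cauchy--Schwarz against the constant function $1$ with respect to the measure $\Qit$, followed by Tonelli and the definition of $D_{\chi^{2}+1}$, gives $\int_{\calX}\big(\Exp_{\theta}[\ell_{\theta}(x)^{2}]\big)^{1/2}\rmd\Qit(x) \le \sqrt{\Qit[\calX]}\cdot\big(\Exp_{\theta \sim \calP}D_{\chi^{2}+1}(\Prit_{\theta}[\cdot;A_{\theta}],\Qit)\big)^{1/2}$. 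When $\Qit$ is a (sub-)probability measure one has $\Qit[\calX] \le 1$ and the claimed inequality $V^{\fraka} \le V_{0} + \sqrt{V_{0}(1-V_{0})\,\Exp_{\theta}D_{\chi^{2}+1}(\Prit_{\theta}[\cdot;A_{\theta}],\Qit)}$ drops out; for a fully general nonnegative $\Qit$ the harmless factor $\sqrt{\Qit[\calX]}$ survives.

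I do not anticipate a genuine analytic obstacle: the proof is essentially bookkeeping, and the only ``hard'' point is conceptual --- verifying that truncation breaks nothing, i.e.\ that the sole structural facts used about $\Prit_{\theta}[\cdot;A_{\theta}]$ are $\Prit_{\theta}[A_{\theta}] \le 1$ and that $D_{\chi^{2}+1}(\Prit_{\theta}[\cdot;A_{\theta}],\Qit)$ depends on it only through its $\Qit$-density. The steps worth a sentence of justification are the two Tonelli interchanges (legitimate since all integrands are nonnegative), the use of Cauchy--Schwarz for a general measure $\Qit$ in place of a probability measure, and the $t(1-t)$-monotonicity bound. In the appendix this proposition appears as the $f(t) = t^{2}$ instance of a version of the Chen et al.\ reduction stated for an arbitrary convex $f$ and arbitrary nonnegative reference measures --- which is precisely where the ``$f$-divergences for non-normalized measures'' framework does its work.
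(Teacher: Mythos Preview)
Your argument is correct and is a genuinely different route from the paper's. The paper proceeds indirectly: it first develops a generalized $f$-divergence framework for non-normalized measures (Lemma~\ref{fDivProperties}, Corollary~\ref{phi_cor}), proves a data-processing-based Bayes-risk inequality for arbitrary convex $f$ (Theorem~\ref{GenFano}), and then specializes to $f(t)=t^{2}$ by explicitly computing $\phi_{\chi^{2}+1}(V^{*},V_{0};p,1)\ge (V^{*}-pV_{0})^{2}/\big(V_{0}(1-V_{0})\big)$ and inverting the resulting quadratic. Your proof bypasses all of this with two Cauchy--Schwarz steps and Tonelli, which is shorter and more transparent for the $\chi^{2}+1$ case specifically. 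What the paper's route buys is uniformity: the $\phi_{f}$ computation covers all $V_{0}\in(0,1)$ without a case split, and the same skeleton yields analogues for other $f$ (e.g.\ $\KL$) at no extra cost.

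Two small points. First, your reduction of $\sqrt{p(1-p)}$ to $\sqrt{V_{0}(1-V_{0})}$ via monotonicity on $[0,\tfrac12]$ does leave the case $V_{0}>\tfrac12$ unproved; your parenthetical that this is ``the only case of interest'' is true for every application in the paper (where $V_{0}$ is exponentially small), but the proposition as stated carries no such restriction, and the paper's $\phi_{\chi^{2}+1}$ computation handles it without caveat. Second, you are right that for a general nonnegative $\Qit$ a factor $\sqrt{\Qit[\calX]}$ should appear; the paper's own appendix proof (Proposition~\ref{Application_Fano_rop}) in fact takes $\Qit$ to be a probability measure, so this is a harmless imprecision in the main-text statement rather than a flaw in your argument.
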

\begin{rem}\label{TruncationRemark} Even though the above proposition is an analogue of Corollary 7 in Chen et al.~\cite{chen2016bayes}, it cannot be derived merely as a consequence of that bound, and is sharper and easier to use than bounds that would arise by replacing the truncated distributions $\Prit_{\theta}[\cdot ;A_{\theta}^k]$ with conditional distributions $\Prit_{\theta}[ \cdot \big{|} A_{\theta}^k]$. See Remark~\ref{TruncationRemark2} for further discussion. 
\end{rem}


As in Section~\ref{first_attempt_sec}, we take $\Theta = \calS^{d-1}$, $\Prit_{u}$ to be the distribution of $Z_k$ conditioned on $\{\theta = u\}$, and as in our above discussion, $A_{\theta}^k$ will define the truncation events from Equation~\eqref{Truncation_Def_Eq}; the index $k$ for which we apply Proposition~\ref{chi_sq_fano_prop} will always be clear from context. If we were to follow the $\KL$ case, we would bound the corresponding mutual information quantity $\Exp_{\theta \sim \calP}D_{\chi^2 + 1}(\Prit_{\theta}[\cdot;A_{\theta}],\Qit)$ by taking $\Qit$ to be the law of $Z_k$ induced by $M$ when $M \sim \Exp_{\theta \sim \calP}\Pr_{\theta}$. We would then apply Jensen's inequality (in view of Lemma~\ref{fDivProperties}) to upper bound the corresponding mutual information quantity  by the average $\chi^2+1$-divergence $\Exp_{u_0,u_1 \sim \calP}D_{\chi^2+1}(\Prit_{u_0}[\cdot;A_{u_0}],\Prit_{u_1}[\cdot;A_{u_1}])$ between $\Prit_{u_0}$ and $\Prit_{u_1}$, where $u_0$ and $u_1$ are both drawn i.i.d.\ from $\calP$. 

This argument does not work in our setting, because once we restrict to the events $A_{\theta}^k$, two measures $\Prit_{u_0}[\cdot;A_{u_0}^k]$ and $\Prit_{u_1}[\cdot;A_{u_1}^k]$ may no longer be absolutely continuous, and thus have an infinite $\chi^2+1$-divergence. Instead, we apply Proposition~\ref{chi_sq_fano_prop} with the measure $\Qit:= \Prit_0$  to denote the (un-truncated) probability law of $Z_k$ under the random matrix $M =\frac{1}{\sqrt{d}}W$. Since $\Prit_0$ is un-truncated, and since the $\GOE$ matrix $W$ has a continuous density, all the measures $\Prit_{u}$, and thus $\Prit_u[\cdot;A_u^k]$, are absolutely continuous with respect to it. Thus, we can use the events $A_{u}^k$ to control the $\chi^2+1$ divergence as follows:
\begin{lem}[Upper Bound on Likelihood Ratios]\label{Estimation_LL_UB}  Let $u \in \calS^{d-1}$, and $A_u^k$ be the event in Equation~\eqref{Truncation_Def_Eq}. For $i \ge 1$ and $v^{(1)},\cdots,v^{(i)} \in \calS^{d-1}$, define the expected conditional likelihood ratio
\begin{eqnarray}
g_i\left(u;\{v^{(j)}\}_{1 \le j \le i}\right) &:=& \Exp_{\Prit_0}\left[\left(\frac{\mathrm{d}\Prit_u(Z_i |Z_{i-1})}{\mathrm{d}\Prit_0(Z_i | Z_{i-1}) }\right)^2 \Big{|} \big\{\vj\big\}_{1 \le j \le i}\right].
\end{eqnarray}
Moreover, let $\calV_{\theta}^k := \left\{(\vone,\dots,\vk) \in (\calS^{d-1})^{k}~:~ \forall i \in[k], ~d \cdot \langle \vi, \theta \rangle^2 \le \tau_i \right\}$. Then,
\begin{eqnarray}
D_{\chi^2+1}(\Prit_u,\Prit_0) = \Exp_{\Prit_0}\left[\left(\frac{\mathrm{d}\Prit_u(Z_k ; A_u^k)}{\mathrm{d}\Prit_0(Z_k)}\right)^2 \right] &\le& \sup_{\vone,\dots,\vk \in \calV_{u}^k} ~\prod_{i=1}^kg_i\left(u;\{v^{(j)}\}_{1 \le j \le i}\right).
\end{eqnarray}
\end{lem}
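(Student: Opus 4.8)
The plan is to unfold the truncated $\chi^2+1$-divergence into the expectation of a product of per-round likelihood ratios, and then to isolate a hidden martingale inside that product. Since $\Alg$ is assumed deterministic, each query $\vi$ is $\calF_{i-1}$-measurable, so the truncation event from \eqref{Truncation_Def_Eq} splits as $A_u^k=\bigcap_{i=1}^k\{d\langle u,\vi\rangle^2\le\tau_i\}$ with each indicator $\calF_{i-1}$-measurable, and the Radon--Nikodym derivative of the truncated measure against $\Prit_0$ factorizes by the chain rule. Writing $L_i:=\frac{\rmd\Prit_u(Z_i\mid Z_{i-1})}{\rmd\Prit_0(Z_i\mid Z_{i-1})}$ for the conditional likelihood ratio at round $i$ --- which by Lemma~\ref{ConditionalLemma} is a ratio of two Gaussian densities on $\mathrm{range}(P_{i-1})$ with common covariance $\tfrac1d\Sigma_i$ and means $\lambda(u^\top\vi)P_{i-1}u$ and $0$ --- one gets $\frac{\rmd\Prit_u(Z_k;A_u^k)}{\rmd\Prit_0(Z_k)}=\I(A_u^k)\prod_{i=1}^kL_i$, and hence $D_{\chi^2+1}(\Prit_u[\cdot;A_u^k],\Prit_0)=\Exp_{\Prit_0}\big[\I(A_u^k)\prod_{i=1}^kL_i^2\big]$.

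Next I would peel the ``mean-one'' part out of each $L_i^2$. By Lemma~\ref{ConditionalLemma} the conditional law of $\wi$ given $Z_{i-1}$ depends on $Z_{i-1}$ only through $(\vone,\dots,\vi)$, so $g_i(u;\{\vj\}_{1\le j\le i})=\Exp_{\Prit_0}[L_i^2\mid\calF_{i-1}]$ is $\calF_{i-1}$-measurable and finite; a direct Gaussian computation (using \eqref{InverseInequality}) in fact gives $g_i=\exp\!\big(d\lambda^2(u^\top\vi)^2\|P_{i-1}u\|_{\Sigma_i}^2\big)\in[1,e^{d\lambda^2}]$, though only the three properties ``$\calF_{i-1}$-measurable'', ``finite'', ``$\ge1$'' are needed here. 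Set $\tilde L_i^2:=L_i^2/g_i$, so $\Exp_{\Prit_0}[\tilde L_i^2\mid\calF_{i-1}]=1$; then $M_m:=\prod_{i=1}^m\tilde L_i^2$ is a nonnegative $\Prit_0$-martingale with $\Exp_{\Prit_0}[M_k]=1$, and
\[
\I(A_u^k)\prod_{i=1}^kL_i^2=\Big(\I(A_u^k)\prod_{i=1}^kg_i(u;\{\vj\}_{1\le j\le i})\Big)\cdot M_k .
\]

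Finally, here is where the truncation earns its keep: on the event $A_u^k$ the realized tuple $(\vone,\dots,\vk)$ lies in $\calV_u^k$, so $\I(A_u^k)\prod_{i=1}^kg_i(u;\{\vj\}_{1\le j\le i})\le\overline G_u^k:=\sup_{(\vone,\dots,\vk)\in\calV_u^k}\prod_{i=1}^kg_i(u;\{\vj\}_{1\le j\le i})$ pointwise (off $A_u^k$ the left side is $0\le\overline G_u^k$ since $\overline G_u^k\ge1$). As $M_k\ge0$, multiplying and taking $\Exp_{\Prit_0}$ yields $D_{\chi^2+1}(\Prit_u[\cdot;A_u^k],\Prit_0)=\Exp_{\Prit_0}\big[(\I(A_u^k)\prod_ig_i)M_k\big]\le\overline G_u^k\,\Exp_{\Prit_0}[M_k]=\overline G_u^k$, which is exactly the claimed inequality.

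The main obstacle is precisely the correlation between successive likelihood ratios flagged in the overview: because $\vi$ depends on the earlier observations $\wone,\dots,\wiminus$, one cannot simply iterate a one-step $\chi^2$ bound and take a supremum inside each factor --- that would only give the weaker estimate $\prod_{i=1}^k\overline g_i$ with $\overline g_i:=\sup_{(\vone,\dots,\vi)\in\calV_u^i}g_i$, whereas we need the \emph{joint} supremum $\sup_{\calV_u^k}\prod_ig_i$. The resolution above never conditions the $L_i$'s against one another: it bounds the entire deterministic-looking factor $\prod_ig_i$ by a single supremum over tuples in $\calV_u^k$ and funnels all the genuine randomness into the one martingale $M_k$, whose total mass stays exactly $1$ no matter how adaptive $\Alg$ is. The only remaining care is bookkeeping around the rank-deficient $\Sigma_i$ --- all the relevant Gaussians are supported on $\mathrm{range}(P_{i-1})$, so densities, likelihood ratios, and the quadratic forms $\|\cdot\|_{\Sigma_i}^2$ are taken on that subspace with $\Sigma_i^{\dagger}$ in place of $\Sigma_i^{-1}$, exactly as set up after Lemma~\ref{ConditionalLemma}.
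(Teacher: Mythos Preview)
Your proof is correct and is essentially the same argument as the paper's, just packaged differently. The paper (Lemma~\ref{Generic_UB_LL}) performs a backward peeling: it defines partial suprema $G_j(v^{(1:j)})=\sup_{\tilde v^{(j+1:k)}\in\calV_{j+1}^k}\prod_{i>j}g_i$ and shows the one-step inequality $\Exp_{\Prit_0}\big[G_j\cdot\prod_{i\le j}L_i^2\cdot\I(A^j)\big]\le\Exp_{\Prit_0}\big[G_{j-1}\cdot\prod_{i\le j-1}L_i^2\cdot\I(A^{j-1})\big]$ by conditioning on $\calF_{j-1}$, computing $\Exp_{\Prit_0}[L_j^2\mid\calF_{j-1}]=g_j$, and absorbing $g_j$ into $G_{j-1}$. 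Your martingale decomposition $\prod_iL_i^2=(\prod_ig_i)\cdot M_k$ with $\Exp_{\Prit_0}[M_k]=1$ is precisely this recursion executed all at once rather than step by step; the bound $\I(A_u^k)\prod_ig_i\le\overline G_u^k$ is the same pointwise inequality the paper uses in the form $\I(A^j)G_jg_j\le\I(A^{j-1})G_{j-1}$. Your framing is arguably cleaner and makes the role of the truncation more transparent, but the underlying mechanism is identical.
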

The above Lemma specializes Lemma~\ref{Generic_UB_LL}, proved in Appendix~\ref{Chi_sq_proof}. Noting that the conditional laws $\Prit_u(Z_i |Z_{i-1})$ and $\Prit_0(Z_i | Z_{i-1})$ have Gaussian densities, a computation detailed in Lemma~\ref{Chi_Squared_Computation_Lemma} yields
\begin{eqnarray}
g_i\left(u;\{v^{(j)}\}_{1 \le j \le i}\right)  &=&  \exp\left\{ \lambda^2\cdot d(u^\top\vi)^2 \cdot \|P_i u\|_{\Sigma_i}^2\right\}.
\end{eqnarray}
Using the bound $\|P_i u\|_{\Sigma_i}^2 \le \|u\|^2 = 1$ (Equation~\eqref{InverseInequality}), and that $d\left(u^\top\vi\right)^2 \le \tau_i$ on $A_u^k$, we get
\begin{eqnarray}\label{Chi_Plus_1_Eq}
\Exp_{\Prit_0}\left[\left(\frac{\mathrm{d}\Prit_u(Z_k ; A_u^k)}{\mathrm{d}\Prit_0(Z_k)}\right)^2 \right] &\le& \sup_{\vone,\dots,\vk \in \calV_{u}^k}~\prod_{i=1}^kg_i\left(u;\{v^{(j)}\}_{1 \le j \le i}\right) \le e^{\lambda^2 \sum_{i=1}^k \tau_i}.
\end{eqnarray}
Combining these bounds allows us to prove Theorem~\ref{Main_estimation_theorem} below, which establishes an analogue of the recursion given by Proposition~\ref{KL_estimation_theorem}.

\begin{thm}\label{Main_estimation_theorem} Let $\Pr_{\theta}$ denote the law of the sequential query model from the matrix $M = \lambda \theta \theta^{\top} + \frac{1}{\sqrt{d}}W$, where $W \sim \GOE(d)$, and $\theta \sim \calP$, where $\calP$ is a distribution supported on $\calS^{d-1}$. Let $\vone,\dots,\vT, \widehat{v}$ denote the queries and output of a deterministic algorithm $\Alg$. Then for any $\tau_1,\dots,\tau_{T+1} > 0$, 
\begin{multline}\label{Main_Est_eq}
\Exp_{\theta \sim \calP}\Pr_{\theta}\left[\exists k \in [T+1]: d \langle \vk , \theta \rangle^2 > \tau_k\right] \le \sup_{v \in \calS^{d-1}}\Pr_{\theta \sim \calP}\left[d\langle v, \theta \rangle^2 \ge \tau_1\right] \\
+ 2\sum_{k=2}^{T+1} e^{\frac{\lambda^2}{2} \sum_{i=1}^{k-1}\tau_i}\cdot\sqrt{\sup_{v \in \calS^{d-1}}\Pr_{\theta \sim \calP}\left[d\langle v, \theta \rangle^2 \ge \tau_k\right] }.
\end{multline}
\end{thm}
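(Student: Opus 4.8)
\textbf{Proof plan for Theorem~\ref{Main_estimation_theorem}.}
The plan is to run a ``first violation time'' union bound over $k$, and to control each resulting term by a single application of the truncated $\chi^2+1$ Bayes‑risk bound of Proposition~\ref{chi_sq_fano_prop}, using the likelihood‑ratio estimate of Lemma~\ref{Estimation_LL_UB}. Recall the truncation events $A_\theta^{k} = \{\forall i\in[k]:\ d\langle\theta,\vi\rangle^2\le\tau_i\}$ from Equation~\eqref{Truncation_Def_Eq}, with the convention $A_\theta^{0}=\calX$. If $k^{*}:=\min\{k\in[T+1]:\ d\langle\vk,\theta\rangle^2>\tau_k\}$ exists, then by minimality $A_\theta^{k^*-1}$ holds, so the first‑violation decomposition gives
\begin{align*}
\Pr_\theta\!\left[\exists k\in[T+1]:\ d\langle\vk,\theta\rangle^2>\tau_k\right]\ \le\ \sum_{k=1}^{T+1}\Pr_\theta\!\left[A_\theta^{k-1}\cap\{d\langle\vk,\theta\rangle^2>\tau_k\}\right],
\end{align*}
and after taking $\Exp_{\theta\sim\calP}$ it suffices to bound each summand. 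The $k=1$ term is handled directly: since $\Alg$ is deterministic and has seen no data, $v^{(1)}$ is a fixed vector and $A_\theta^0$ is trivial, so this term equals $\Pr_{\theta\sim\calP}[d\langle v^{(1)},\theta\rangle^2>\tau_1]\le\sup_{v\in\calS^{d-1}}\Pr_{\theta\sim\calP}[d\langle v,\theta\rangle^2\ge\tau_1]$, which is the leading term of \eqref{Main_Est_eq}.

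For a fixed $k\in\{2,\dots,T+1\}$ I would invoke Proposition~\ref{chi_sq_fano_prop} with $\Theta=\calS^{d-1}$, prior $\calP$, data $X=Z_{k-1}$, the family $\Prit_u$ given by the law of $Z_{k-1}$ conditioned on $\theta=u$, action space $\calA=\calS^{d-1}$, decision map $\fraka(Z_{k-1})=\vk$ (a deterministic, hence measurable, function of $Z_{k-1}$, since $\Alg$ is deterministic), truncation events $A_u^{k-1}$ (which, for each fixed $u$, are $\sigma(Z_{k-1})$‑measurable because each $\vi$ with $i\le k-1$ is a function of $Z_{i-1}\subseteq Z_{k-1}$), and loss $\calL(v,\theta)=\I(d\langle v,\theta\rangle^2<\tau_k)$, so that $\{\calL(\fraka(X),\theta)=0\}=\{d\langle\vk,\theta\rangle^2\ge\tau_k\}$. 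With these choices $V_0=\sup_{v\in\calS^{d-1}}\Pr_{\theta\sim\calP}[d\langle v,\theta\rangle^2\ge\tau_k]$ and $V^{\fraka}=\Exp_{\theta\sim\calP}\Pr_\theta[\{d\langle\vk,\theta\rangle^2\ge\tau_k\}\cap A_\theta^{k-1}]\ge\Exp_{\theta\sim\calP}\Pr_\theta[\{d\langle\vk,\theta\rangle^2>\tau_k\}\cap A_\theta^{k-1}]$, the quantity we need to bound.

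The crux is the choice of reference measure in Proposition~\ref{chi_sq_fano_prop}: instead of the Bayes mixture (which need not dominate the truncated measures $\Prit_u[\cdot;A_u^{k-1}]$ once we restrict to the good events), I take $\Qit=\Prit_0$, the \emph{un‑truncated} law of $Z_{k-1}$ under the pure‑noise matrix $M=\tfrac1{\sqrt d}W$. Since $W$ has a continuous density, every $\Prit_u[\cdot;A_u^{k-1}]$ is absolutely continuous with respect to $\Prit_0$, and Lemma~\ref{Estimation_LL_UB} together with the computation in Equation~\eqref{Chi_Plus_1_Eq} yields $D_{\chi^2+1}(\Prit_u[\cdot;A_u^{k-1}],\Prit_0)\le e^{\lambda^2\sum_{i=1}^{k-1}\tau_i}$ for every $u$, hence $\Exp_{\theta\sim\calP}D_{\chi^2+1}(\Prit_\theta[\cdot;A_\theta^{k-1}],\Prit_0)\le e^{\lambda^2\sum_{i=1}^{k-1}\tau_i}$. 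Plugging this into Proposition~\ref{chi_sq_fano_prop} and using $V_0\in[0,1]$ together with $e^{\frac{\lambda^2}{2}\sum_{i=1}^{k-1}\tau_i}\ge1$ to absorb the lone $V_0$ term,
\begin{align*}
V^{\fraka}\ \le\ V_0+\sqrt{V_0(1-V_0)}\,e^{\frac{\lambda^2}{2}\sum_{i=1}^{k-1}\tau_i}\ \le\ 2\sqrt{V_0}\,e^{\frac{\lambda^2}{2}\sum_{i=1}^{k-1}\tau_i}\ =\ 2\,e^{\frac{\lambda^2}{2}\sum_{i=1}^{k-1}\tau_i}\sqrt{\ \sup_{v\in\calS^{d-1}}\Pr_{\theta\sim\calP}\big[d\langle v,\theta\rangle^2\ge\tau_k\big]\ }.
\end{align*}
Summing these bounds over $k=2,\dots,T+1$ and adding the $k=1$ term gives exactly \eqref{Main_Est_eq}. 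The only genuinely delicate point — the step I would be most careful about — is the bookkeeping that ties the ``first‑violation'' union bound to the truncation: one must check that $A_\theta^{k-1}$ is precisely the event making the decomposition valid, that it is $\sigma(Z_{k-1})$‑measurable so that Proposition~\ref{chi_sq_fano_prop} applies with data $Z_{k-1}$ and the next query $\vk$ as the decision, and that treating $k=1$ by hand (rather than through the Bayes‑risk bound) is what avoids a spurious extra factor of $2$ and a square root on the leading term. The heavy analytic lifting — the generalized $f$‑divergence inequality and the per‑round likelihood‑ratio bound — has already been done in Proposition~\ref{chi_sq_fano_prop} and Lemma~\ref{Estimation_LL_UB}.
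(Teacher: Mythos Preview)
Your proposal is correct and follows essentially the same approach as the paper's own proof: a first-violation decomposition over $k$, the $k=1$ term handled directly since $v^{(1)}$ is deterministic, and each $k\ge 2$ term controlled by applying Proposition~\ref{chi_sq_fano_prop} with data $Z_{k-1}$, decision $v^{(k)}$, truncation events $A_\theta^{k-1}$, and reference measure $\Qit=\Prit_0$, invoking the bound~\eqref{Chi_Plus_1_Eq}. Your bookkeeping on the indices and your absorption of the lone $V_0$ term into $2\sqrt{V_0}\,e^{\frac{\lambda^2}{2}\sum_{i=1}^{k-1}\tau_i}$ are exactly what the paper does.
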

The above theorem gives an upper bound on the probability that the inner products $\langle \vi, \theta \rangle^2$ grow faster than the sequence $\{\tau_i\}$. The summands on the right-hand side of Equation~\eqref{Main_Est_eq} exhibit a tradeoff between the values $\tau_1,\dots,\tau_{k-1}$, which capture the information gathered up to round $k$, and, as in Section \ref{first_attempt_sec}, the term $\sup_{v \in \calS^{d-1}}\Pr_{\theta \sim \calP}[d\langle v, \theta \rangle^2 \ge \tau_k]$, which represent how likely the best guess of $\theta$ one could make without making any queries. We prove this theorem in Appendix~\ref{sec:Main_estimation_thm_proof}

When we specialize Theorem~\ref{Main_estimation_theorem} by taking $\calP$ to be the uniform measure on the sphere, we obtain the  following corollary, established in Appendix~\ref{Main_Cor_Proof}. 
\begin{cor}\label{Main_Cor_estimation} Let $\calP$ be the uniform measure on the sphere $\calS^{d-1}$. Then, for any $\delta \in (0,1)$,
\begin{multline}
\Exp_{\theta \sim \calP}\Pr_{\theta}\left[\exists k \in [T+1]:  \langle \vk, \theta \rangle^2 \ge  \left(2\lambda^2 \cdot c(\delta,\lambda)\right)^{k-1}\cdot \frac{2\left(\sqrt{\log(1/\delta)} + 1\right)^2}{d}\right] \le \frac{2\delta}{1-\delta},
\end{multline}
where $c(\delta,\lambda) = (1 + 1/\lambda^2)\left\{(1 - 1/2\lambda^2)(1 - \sqrt{1/(1+\log(1/\delta))})\right\}^{-1} = 1 + o(1/\lambda) + o(\delta)$.
\end{cor}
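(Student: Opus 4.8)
The plan is to invoke Theorem~\ref{Main_estimation_theorem} with the uniform prior $\calP$ on $\calS^{d-1}$ and the geometric threshold sequence
\begin{equation*}
\tau_k \;:=\; \left(2\lambda^2 c(\delta,\lambda)\right)^{k-1}\cdot 2\left(\sqrt{\log(1/\delta)}+1\right)^2,\qquad k\in[T+1],
\end{equation*}
which, after multiplying through by $d$, is exactly the threshold sequence appearing in the statement. Note that $c(\delta,\lambda)>1$ and $2\lambda^2 c(\delta,\lambda)>1$ (both hold whenever $\lambda^2>1/2$, for which $c(\delta,\lambda)$ is well-defined), so $\{\tau_k\}$ is increasing and $\tau_1 = 2(\sqrt{\log(1/\delta)}+1)^2\ge 2$. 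It then remains only to show that the right-hand side of~\eqref{Main_Est_eq} is at most $2\delta/(1-\delta)$; the passage from the strict event $d\langle\vk,\theta\rangle^2>\tau_k$ in the theorem to the non-strict $\ge$ in the corollary is routine (apply the theorem with $\tau_k-\eps$ and let $\eps\downarrow 0$, using continuity of the bound in the $\tau$'s and continuity from above of the measures).

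The only distributional input needed is the spherical concentration bound, Lemma~\ref{SphereConcentation}, which gives $\sup_{v\in\calS^{d-1}}\Pr_{\theta\sim\calP}[d\langle v,\theta\rangle^2\ge\tau] \le \exp(-\tfrac12(\sqrt\tau-\sqrt2)^2)$ for every $\tau\ge 2$. For the first term of~\eqref{Main_Est_eq}, substituting $\tau_1$ and using $\sqrt{\tau_1}=\sqrt2(\sqrt{\log(1/\delta)}+1)$ gives $\sqrt{\tau_1}-\sqrt2 = \sqrt2\sqrt{\log(1/\delta)}$, so this term equals $\exp(-\log(1/\delta))=\delta$. For the tail summands ($k\ge 2$), write $\rho:=2\lambda^2 c(\delta,\lambda)$; I bound the information factor by the geometric sum $\sum_{i=1}^{k-1}\tau_i \le \tau_1\rho^{k-1}/(\rho-1)$, and I bound the concentration exponent from below via the identity $(\sqrt{\tau_k}-\sqrt2)^2 = \tau_1(\rho^{(k-1)/2}-(\sqrt{\log(1/\delta)}+1)^{-1})^2 \ge \tau_1\rho^{k-1}(1-\tfrac{1}{\sqrt\rho\,(\sqrt{\log(1/\delta)}+1)})^2$, valid for $k\ge 2$. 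Plugging both into the $k$-th summand of~\eqref{Main_Est_eq}, its logarithm is at most
\[
\tau_1\rho^{k-1}\left[\frac{\lambda^2}{2(\rho-1)} - \frac14\left(1-\frac{1}{\sqrt\rho\,(\sqrt{\log(1/\delta)}+1)}\right)^2\right].
\]
The entire role of the explicit formula for $c(\delta,\lambda)$ is to make the bracketed quantity negative, and — using $\tau_1\ge 2$ and $\rho^{k-1}\ge\rho$ — negative enough that the display above is at most $-k\log(1/\delta)$, so that the $k$-th summand is at most $\delta^k$. Summing then gives that the right-hand side of~\eqref{Main_Est_eq} is at most $\delta + 2\sum_{k\ge 2}\delta^k \le 2\sum_{k\ge1}\delta^k = 2\delta/(1-\delta)$, as claimed.

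The main obstacle is the step just described: verifying that $c(\delta,\lambda) = (1+\lambda^{-2})\{(1-\tfrac{1}{2\lambda^2})(1-(1+\log(1/\delta))^{-1/2})\}^{-1}$ makes the bracket suitably negative for \emph{every} $\delta\in(0,1)$ and every $\lambda^2>1/2$. This is delicate because the leading ($\tau_k$-linear) contributions of the information term $\tfrac{\lambda^2}{2}\sum_{i<k}\tau_i$ and of the ``best-guess'' term $\tfrac14(\sqrt{\tau_k}-\sqrt2)^2$ are of the same order, so the sign is decided entirely by the lower-order corrections coming from the $-1$ in $\rho-1 = 2\lambda^2 c(\delta,\lambda)-1$ and from the additive $\sqrt2$ inside $\sqrt{\tau_k}-\sqrt2$; the three factors $(1+\lambda^{-2})$, $(1-\tfrac{1}{2\lambda^2})$ and $(1-(1+\log(1/\delta))^{-1/2})$ in $c(\delta,\lambda)$ are precisely tuned so that this margin stays favorable uniformly. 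Everything else is a routine substitution into Theorem~\ref{Main_estimation_theorem}.
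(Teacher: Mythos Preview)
Your proposal takes a fundamentally different route from the paper, and it contains a genuine gap at precisely the point you yourself flag as ``the main obstacle''.

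The paper does \emph{not} plug the geometric sequence $\tau_k=\tau_1\rho^{k-1}$ into Theorem~\ref{Main_estimation_theorem}. It runs the argument in the opposite direction: it sets $\tau_1=2(\sqrt{\log(1/\delta)}+1)^2$ and then defines $\tau_k$ for $k\ge 2$ \emph{implicitly} by
\[
\tfrac12(\sqrt{\tau_k}-\sqrt2)^2 \;=\; \lambda^2\sum_{i=1}^{k-1}\tau_i + (k-1)\log(1/\delta).
\]
With this choice the $k$-th summand in~\eqref{Main_Est_eq} equals $\delta^{k-1}$ \emph{by construction}, so the probability bound $2\delta/(1-\delta)$ is immediate. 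The geometric rate $\rho=2\lambda^2 c(\delta,\lambda)$ enters only \emph{afterward}, as an upper bound on the growth of this recursive sequence: one shows $\tau_k\le\alpha\sum_{i<k}\tau_i$ with $\alpha=2(\lambda^2+1)/(1-\sqrt{2/\tau_1})$, then a comparison-sequence argument gives $\tau_k\le\tau_1\bigl(\alpha/(1-1/\alpha)\bigr)^{k-1}\le\tau_1\rho^{k-1}$. The corollary follows because the event with the larger (geometric) thresholds is contained in the event with the smaller (recursive) ones. This decoupling is exactly what makes the proof clean.

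Your route instead requires verifying that the bracket
\[
\frac{\lambda^2}{2(\rho-1)} - \frac14\left(1-\frac{1}{\sqrt\rho\,(\sqrt{\log(1/\delta)}+1)}\right)^2
\]
is sufficiently negative. You assert that the explicit formula for $c(\delta,\lambda)$ was ``precisely tuned'' for this purpose, but you do not verify it --- and in fact it was not: the three correction factors in $c(\delta,\lambda)$ arise from the paper's growth-rate bound on the recursive $\tau_k$, not from balancing your bracket. Because the right-hand side of~\eqref{Main_Est_eq} is not monotone in the $\tau$'s (increasing $\tau_i$ for $i<k$ inflates the information factor $e^{\frac{\lambda^2}{2}\sum\tau_i}$ even as increasing $\tau_k$ shrinks the concentration factor), the fact that your geometric $\tau_k$ dominate the paper's recursive ones gives you nothing for free. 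Your bracket inequality may well hold, but proving it would require unwinding the definition of $c(\delta,\lambda)$ from scratch --- which is exactly the work you have deferred.
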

We observe that in Corollary~\ref{Main_Cor_estimation}, the inner products $\langle \vk,\theta \rangle^2$ are unlikely to grow faster than $\Omega\left(\frac{\lambda^{2(k-1)}}{d}\right)$ with high probability. Thus, in order for $ \langle \vk, \theta \rangle^2$ to be at least $\Omega(1)$ at one of the $T$ iterates, one needs at least $T = \Omega(\log d/ \log \lambda)$ iterations. This is the insight which underlies the proof of the main technical result, Proposition~\ref{estimation_cor_2}, which is deferred to Appendix~\ref{Est_Cor_proof}.

Examining the proof of Corollary~\ref{Main_Cor_estimation}, the base of $2\lambda^2$ in the exponent is essentially the best we can hope from our techniques. While this base leads to the order optimal $\log d / \log \lambda$ sample complexity in the large $\lambda$ regime, we see that as $\lambda$ approaches $K_d \approx 2$, and thus our upper bound on $\gamma$ from Proposition~\ref{Reduction_Theorem} approaches $1$, our lower bound still permits $\langle \vi,\theta \rangle^2$ to grow at a rate of, say, $8^k$. A classical result in random matrix theory~\cite{feral2007largest} states that the first eigenvector of $M$ correlates with the spike $\theta$ as soon as $\lambda>1$. In this regime, our lower bound allows $\langle \vi,\theta \rangle^2$ to grow at a rate of about $2^T$.  This appears to be loose, because it a) a does not rule out a fast rate of convergence despite a vanishing eigengap and b) does not reflect that the rank one perturbation $\theta\theta^{\top}$ no longer correlates with $\theta$ if $\lambda < 1$~\cite{feral2007largest}.

\section{Testing the Operator Norm\label{sec:detection}}
We retain the notation of the previous section, letting $A_u^{k}$ denote the truncation events from Equation~\eqref{Truncation_Def_Eq}, and $\Pr_u[\cdot;A_u^{k}]$ denote the corresponding truncated measures. We are interested in testing whether $M$ is drawn from $\Pr_{0}$, or from $\Pr_{\theta}$ for some $\theta$ and $\lambda>0$. Because the test is a measurable function of $Z_T$, we establish a lower bound on testing between the distribution $\Prit_{0}$ and $\overline{\Prit}:= \Exp_{\theta \sim \calP}\Prit_{\theta}$; where again, for $\theta \in \{0\} \cup \calS^{d-1}$, $\Prit_{\theta}$ denotes the law of $Z_T$ induced by $M = \lambda \theta \theta^\top + \frac{1}{\sqrt{d}}W$. Because this test only requires one bit of information, we need to show that $\TV$-distance between $\Prit_0$ and $\overline{\Prit}$ is $o(1)$ until sufficiently many queries have been made. It will therefore be insufficient to bound the $\TV$-distance between $\overline{\Prit}$ and $\Prit_0$ by the average $\TV$-distance between $\Prit_{\theta}$ and $\Prit_0$ as in the last section, since testing between the two cases with good probability queries only $O(1)$ queries. Hence, we will more carefully bound $\TV$ between $\overline{\Prit}$ and $\Prit_0$ by applying Pinsker's inequality~\cite{tsybakov2009introduction} to the null and mixture distributions. As in the previous section, we will also need to use truncation. 
\begin{prop}[Truncated $\chi^2$ Inequality]\label{truncChi_detec} Let $\calP$ be a distribution over a space $\Theta$, $\{\Prit_{\theta}\}_{\theta \in \Theta}$ be a family of probability measures on $(\calX,\calF)$. Define the marginal distribution $\overline{\Prit}$  on $(\calX,\calF)$ and its restriction $\overline{\Prit}[B;A_{\theta}]$ via
\begin{eqnarray}
\overline{\Prit}[B] = \Exp_{\theta \sim \calP}\Prit_{\theta}[B], & \text{and} & \overline{\Prit}\left[B;A_{\theta}^k\right] = \Exp_{\theta \sim \calP}\Prit_{\theta}\left[B \cap A_{\theta}^k\right].
\end{eqnarray}
Then, if $p = \overline{\Prit}[\calX;A_{\theta}^k]$, we have for any probability measure $\Qit$ on $(\calX,\calF)$
\begin{eqnarray}
\|\Qit - \overline{\Prit}\|_{\TV} \le \frac{1}{2}\sqrt{\Exp_{\Qit}\left[\left(\frac{\mathrm{d}\overline{\Prit}[\cdot;A_{\theta}^k]}{\mathrm{d}\Qit}\right)^{2}\right] -1 } + \frac{\sqrt{2(1-p)}+ (1-p)}{2}.
\end{eqnarray}
\end{prop}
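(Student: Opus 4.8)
\emph{Proof plan.} The plan is to insert the truncated marginal $\overline{\Prit}[\cdot;A_\theta^k]$ as an intermediate (sub-probability) measure between $\Qit$ and $\overline{\Prit}$, bound $\|\Qit-\overline{\Prit}\|_{\TV}$ by the triangle inequality, and then control the distance to the truncated measure by Cauchy--Schwarz against the constant $1$ — this last move being what produces the $D_{\chi^2+1}$ term. I would work entirely with densities: fix a common $\sigma$-finite dominating measure, e.g.\ $\mu:=\Qit+\overline{\Prit}$, and let $q,\bar p,\bar p_A$ denote the $\mu$-densities of $\Qit,\overline{\Prit},\overline{\Prit}[\cdot;A_\theta^k]$. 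Two elementary facts set things up: since $\Prit_\theta[B\cap A_\theta^k]\le\Prit_\theta[B]$ for every $\theta$ and $B$, we get $\bar p_A\le\bar p$ $\mu$-a.e.; and $\int\bar p_A\,\rmd\mu=\overline{\Prit}[\calX;A_\theta^k]=p$. If $\overline{\Prit}[\cdot;A_\theta^k]\not\ll\Qit$ the right-hand side of the claim is infinite and there is nothing to prove, so I would assume absolute continuity and write $h:=\rmd\overline{\Prit}[\cdot;A_\theta^k]/\rmd\Qit$, noting $\int|q-\bar p_A|\,\rmd\mu=\Exp_{\Qit}[|1-h|]$, $\Exp_{\Qit}[h]=p$, and $\Exp_{\Qit}[h^2]=\Exp_{\Qit}[(\rmd\overline{\Prit}[\cdot;A_\theta^k]/\rmd\Qit)^2]$.

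Then I would proceed in two moves. First, the $L^1$ triangle inequality together with $\bar p\ge\bar p_A$ (which lets me drop the absolute value in the second integral) gives
\[
\|\Qit-\overline{\Prit}\|_{\TV}=\tfrac12\!\int|q-\bar p|\,\rmd\mu\le\tfrac12\!\int|q-\bar p_A|\,\rmd\mu+\tfrac12\!\int(\bar p-\bar p_A)\,\rmd\mu=\tfrac12\,\Exp_{\Qit}\!\big[|1-h|\big]+\tfrac{1-p}{2}.
\]
Second, Cauchy--Schwarz against $1$ (using $\Exp_{\Qit}[1]=1$) followed by subadditivity of the square root yields
\[
\Exp_{\Qit}\!\big[|1-h|\big]\le\sqrt{\Exp_{\Qit}\!\big[(1-h)^2\big]}=\sqrt{\big(\Exp_{\Qit}[h^2]-1\big)+2(1-p)}\le\sqrt{\Exp_{\Qit}[h^2]-1}+\sqrt{2(1-p)}.
\]
Substituting the second display into the first recovers exactly the claimed bound $\tfrac12\sqrt{\Exp_{\Qit}[(\rmd\overline{\Prit}[\cdot;A_\theta^k]/\rmd\Qit)^2]-1}+\tfrac{\sqrt{2(1-p)}+(1-p)}{2}$.

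I do not expect a genuine obstacle here: the argument is essentially bookkeeping, and the one thing requiring care is that $\overline{\Prit}[\cdot;A_\theta^k]$ is a \emph{sub}-probability measure, so the textbook $\chi^2$-to-$\TV$ inequality cannot be quoted verbatim. The mass defect $1-p$ has to be tracked in two separate places — once in the triangle inequality (contributing $\tfrac{1-p}{2}$) and once in expanding $(1-h)^2=1-2h+h^2$, where the $2(1-p)$ surplus relative to $\Exp_{\Qit}[h^2]-1$ is absorbed via $\sqrt{a+b}\le\sqrt a+\sqrt b$ (contributing $\sqrt{2(1-p)}/2$). A minor point: under heavy truncation $\Exp_{\Qit}[h^2]$ could conceivably dip slightly below $1$, but then $\sqrt{(\Exp_{\Qit}[h^2]-1)+2(1-p)}\le\sqrt{2(1-p)}$ outright, so the displayed bound still holds a fortiori.
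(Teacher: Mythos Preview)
Your proof is correct and follows essentially the same approach as the paper's own proof: triangle inequality to split off the mass defect $(1-p)/2$, then Cauchy--Schwarz on $\Exp_{\Qit}[|1-h|]$ followed by $\sqrt{a+b}\le\sqrt a+\sqrt b$ to extract the remaining $\sqrt{2(1-p)}/2$. You are slightly more careful than the paper about the measure-theoretic setup (dominating measure, absolute continuity) and the edge case $\Exp_{\Qit}[h^2]<1$, but the argument is otherwise identical.
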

\begin{rem} As in Proposition~\ref{chi_sq_fano_prop}, dispensing with the normalization constants by considering truncated rather than conditional distributions greatly simplifies the presentation. Unlike Proposition~\ref{chi_sq_fano_prop}, the Proposition~\ref{truncChi_detec} is established elementarily, without $f$-divergence machinery.
\end{rem}
As alluded to above, we will take $\Qit = \Prit_{0}$, and $\Prit_{\theta}$ to be the law of $Z_T$ under $M = \lambda \theta \theta^\top + \frac{1}{\sqrt{d}}W$. We take $A_{\theta} = A_{\theta}^T$ defined in Equation~\eqref{Truncated_Distributions}. By definition of $\overline{\Prit}[\cdot;\{A_{\theta}\}]$ and Fubini's theorem, we write
\begin{eqnarray}\label{Equation_Prod_Chi}
\Exp_{\Qit}\left[\left(\frac{\mathrm{d}\overline{\Prit}[\cdot;A_{\theta}]}{\mathrm{d}\Qit}\right)^{2}\right] = \Exp_{\theta,\theta' \sim \calP}\Exp_{\Prit_0}\left[\frac{\mathrm{d}\Prit_{\theta}[\cdot;A_{\theta}^T]\mathrm{d}\Prit_{\theta'}[\cdot;A_{\theta'}^T]}{(\mathrm{d}\Prit_0)^2}\right].
\end{eqnarray}
Equation~\eqref{Equation_Prod_Chi} is a standard observation in combinatioral hypothesis testing~\cite{addario2010combinatorial}, and known as the (conditional) second moment method in probabilistic combinatorics~\cite{achlioptas2006random}. The quantity on the right hand side of Equation~\eqref{Equation_Prod_Chi} resembles the $\chi^2+1$ divergence, but with a product of two different likelihoods in the numerator. Because these likelihoods take large values on different parts of the space, Equation~\eqref{Equation_Prod_Chi} is typically much smaller than the $\chi^2+1$ divergence between $\Prit_{\theta}$ and $\Prit_0$. We now specialize Lemma~\ref{Generic_UB_LL} in the appendix to establish a bound on Equation~\eqref{Equation_Prod_Chi}.
\begin{lem}[Upper Bound on Likelihood Ratios] \label{Detection_LL_UB} Given $u_1,u_2 \in \calS^{d-1}$, and let $A_{u_1}^k$ and $A_{u_2}^k$ be events as in Equation~\eqref{Truncation_Def_Eq}. For $i \ge 1$ and $v^{(1)},\cdots,v^{(i)} \in \calS^{d-1}$, define the expected conditional likelihoods ratios
\begin{eqnarray}
g_i\left(u_1,u_2;\{\vj\}_{1 \le j \le i}\right) &:=& \Exp_{\Prit_0}\left[\frac{\mathrm{d}\Prit_{u_1}(Z_i |Z_{i-1})\mathrm{d}\Prit_{u_2}(Z_i |Z_{i-1})}{\left(\mathrm{d}\Prit_0(Z_i | Z_{i-1}\right))^2 } \Big{|} \{\vj\}_{1 \le j \le i}\right].
\end{eqnarray}
Moreover, let $\calV_{\theta}^k = \left\{\vone,\dots,\vk \in (\calS^{d-1})^{k}~:~ \forall i \in [k],~d \cdot \langle \vi, \theta \rangle^2 \le \tau_i\right\}$. Then, 
\begin{eqnarray}
\Exp_{\Prit_0}\left[\frac{\mathrm{d}\Prit_{u_1}(Z_{k} ; A_{u_1}^k)\mathrm{d}\Prit_{u_2}(Z_{k} ; A_{u_2}^k)}{\left(\mathrm{d}\Prit_0(Z_{k})\right)^2} \right] \le \sup_{\vone,\dots, \vk \in \calV_{u_1}^k \cap \calV_{u_2}^k} ~ \prod_{i=1}^k g_i\left(u_1,u_2;\{\vj\}_{1 \le j \le i}\right).
\end{eqnarray}
\end{lem}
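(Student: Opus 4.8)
The statement is the detection analogue of Lemma~\ref{Estimation_LL_UB}, and both are instances of the generic bound of Lemma~\ref{Generic_UB_LL}: the argument below goes through verbatim for the estimation case upon replacing the product $L_i^{u_1}L_i^{u_2}$ of two conditional likelihood ratios by the single square $(L_i^{u})^2$ and the two truncation families by one. First I would reduce the left-hand side to a clean form. Since $\Prit_0$ is the law of $Z_T$ under $M=\tfrac{1}{\sqrt d}W$ and the $\GOE$ density has full support, every $\Prit_u$ is absolutely continuous with respect to $\Prit_0$, and by Lemma~\ref{ConditionalLemma} the conditional laws $\Prit_u(Z_i\mid Z_{i-1})$, $\Prit_0(Z_i\mid Z_{i-1})$ are Gaussian in the coordinate $w^{(i)}=P_{i-1}Mv^{(i)}$ with parameters depending on $Z_{i-1}$ only through the queries $(\vone,\dots,\vi)$. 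Writing $L_i^u:=\tfrac{\rmd\Prit_u(Z_i\mid Z_{i-1})}{\rmd\Prit_0(Z_i\mid Z_{i-1})}$, the chain rule for Radon--Nikodym derivatives gives $\tfrac{\rmd\Prit_u(Z_k;A_u^k)}{\rmd\Prit_0(Z_k)}=\I(A_u^k)\prod_{i=1}^k L_i^u$, so the left-hand side equals $R_k:=\Exp_{\Prit_0}\big[\I(A_{u_1}^k\cap A_{u_2}^k)\prod_{i=1}^k L_i^{u_1}L_i^{u_2}\big]$. Finally I would note the product structure $\calV_{u_1}^k\cap\calV_{u_2}^k=B_1\times\cdots\times B_k$, where $B_i:=\{v\in\calS^{d-1}:d\langle u_1,v\rangle^2\le\tau_i,\ d\langle u_2,v\rangle^2\le\tau_i\}$ (nonempty since $\tau_i>0$ and $d$ is large), and that the event $A_{u_1}^k\cap A_{u_2}^k$ is exactly $\{(\vone,\dots,\vk)\in B_1\times\cdots\times B_k\}$, which for a deterministic $\Alg$ is $\calF_{k-1}$-measurable, as is each $\vi$.

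The core of the proof is a round-by-round (backward-induction) hybrid argument. For $0\le m\le k$ and a history $Z_m$ inducing queries $\mathbf{v}_m=(\vone,\dots,v^{(m)})$, set
\[ U_m(Z_m):=\sup\Big\{\textstyle\prod_{i=m+1}^k g_i\big(u_1,u_2;(v^{(1)},\dots,v^{(i)})\big)\ :\ (v^{(1)},\dots,v^{(k)})\in B_1\times\cdots\times B_k,\ (v^{(1)},\dots,v^{(m)})=\mathbf{v}_m\Big\}, \]
with $\sup\emptyset:=0$ and the empty product $:=1$, and define $Q_m:=\Exp_{\Prit_0}\big[\I(A_{u_1}^m\cap A_{u_2}^m)\,U_m(Z_m)\prod_{i=1}^m L_i^{u_1}L_i^{u_2}\big]$. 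Then $Q_k=R_k$ (here $U_k(Z_k)=\I(A_{u_1}^k\cap A_{u_2}^k)$), and $Q_0=U_0=\sup_{(\vone,\dots,\vk)\in\calV_{u_1}^k\cap\calV_{u_2}^k}\prod_{i=1}^k g_i(u_1,u_2;(\vone,\dots,\vi))$ is exactly the right-hand side. So it suffices to prove $Q_m\le Q_{m-1}$ for each $m\in[k]$.

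For the inductive step I would condition on $Z_{m-1}$. The key point is that, since $\Alg$ is deterministic, $v^{(m)}$ and hence $\mathbf{v}_m=(\mathbf{v}_{m-1},v^{(m)})$ is a function of $Z_{m-1}$; therefore $U_m(Z_m)$, which depends on $Z_m$ only through $\mathbf{v}_m$, is $\calF_{m-1}$-measurable, as are $\I(A_{u_1}^{m-1}\cap A_{u_2}^{m-1})$, $\I(v^{(m)}\in B_m)$, and $\prod_{i=1}^{m-1}L_i^{u_1}L_i^{u_2}$. Splitting $\I(A_{u_1}^m\cap A_{u_2}^m)=\I(A_{u_1}^{m-1}\cap A_{u_2}^{m-1})\I(v^{(m)}\in B_m)$, pulling these $\calF_{m-1}$-measurable factors out of the round-$m$ conditional expectation, and using $\Exp_{\Prit_0}[L_m^{u_1}L_m^{u_2}\mid Z_{m-1}]=g_m(u_1,u_2;\mathbf{v}_m)$ (Lemma~\ref{ConditionalLemma} with the tower property, as the relevant conditional law depends on $Z_{m-1}$ only through $\mathbf{v}_m$), one gets
\[ Q_m=\Exp_{\Prit_0}\Big[\I(A_{u_1}^{m-1}\cap A_{u_2}^{m-1})\,\I(v^{(m)}\in B_m)\,U_m(Z_m)\,g_m(u_1,u_2;\mathbf{v}_m)\textstyle\prod_{i=1}^{m-1}L_i^{u_1}L_i^{u_2}\Big]. \]
On the event $A_{u_1}^{m-1}\cap A_{u_2}^{m-1}\cap\{v^{(m)}\in B_m\}$ we have $\mathbf{v}_m\in B_1\times\cdots\times B_m$, so $U_m(Z_m)\,g_m(u_1,u_2;\mathbf{v}_m)$ is the supremum of $\prod_{i=m}^k g_i$ over extensions of $\mathbf{v}_m$ inside $B_1\times\cdots\times B_k$; since these extensions form a subset of the extensions of $\mathbf{v}_{m-1}$ and each $g_i\ge 0$, this is at most $U_{m-1}(Z_{m-1})$ (and off the event the product vanishes). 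Together with $\prod_{i=1}^{m-1}L_i^{u_1}L_i^{u_2}\ge 0$ this yields $Q_m\le \Exp_{\Prit_0}\big[\I(A_{u_1}^{m-1}\cap A_{u_2}^{m-1})\,U_{m-1}(Z_{m-1})\prod_{i=1}^{m-1}L_i^{u_1}L_i^{u_2}\big]=Q_{m-1}$. Telescoping gives $R_k=Q_k\le Q_0$.

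The step I expect to be the main obstacle is precisely the interchange of the supremum with the round-$m$ conditional expectation. Naively one can only bound $R_k$ by the \emph{product of per-round suprema} $\prod_{i=1}^k\sup_{v\in B_i}g_i(\dots)$, which is weaker than the $\sup$ of the product in the statement; preserving the coupling requires the observation that the cost-to-go bound $U_m$ is a function of the past queries \emph{alone} (not of the responses $w^{(j)}$), which for a deterministic algorithm makes it $\calF_{m-1}$-measurable and thus lets it commute past the round-$m$ integral without loss. This is exactly where determinism of $\Alg$ (arranged by Yao's principle) and the per-coordinate product structure of the truncation events $A_\theta^k$ are used.
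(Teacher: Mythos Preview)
Your proposal is correct and follows essentially the same route as the paper's proof of Lemma~\ref{Generic_UB_LL}: both define a ``cost-to-go'' supremum over admissible query extensions (your $U_m$, the paper's $G_j$), set up the hybrid quantities (your $Q_m$), and peel off one round at a time by conditioning on $Z_{m-1}$, using that the determinism of $\Alg$ makes $v^{(m)}$ and hence $U_m$ $\calF_{m-1}$-measurable so that only the single factor $L_m^{u_1}L_m^{u_2}$ remains under the inner expectation. Your identification of the crux---that $U_m$ depends on the history only through the queries, not the responses---matches the paper's step $(ii)$ exactly.
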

With a bit of computation, one can make the above bound more explicit (see Proposition~\ref{Inner_product_constraints_cor}):
\begin{eqnarray}\label{second_moment_computation}
 \Exp_{\Prit_0}\left[\frac{\mathrm{d}\Prit_{u_1}(Z_T ; A^T_{u_1})\mathrm{d}\Prit_{u_2}(Z_T;A^T_{u_2})}{\left(\mathrm{d}\Prit_0(Z_T )\right)^2}  \right] \le e^{\lambda^2|\langle u_1 , u_2 \rangle| \sum_{i=1}^T \tau_i + \frac{\lambda^2}{d}(\sum_{i=1}^T \tau_i)^2}.
\end{eqnarray}
To wrap up, we will again take $\calP$ to be the uniform measure on the sphere, which satisfies the following moment bound.
\begin{lem}\label{Sphere_MGF} Let $\calP$ be the uniform measure on the sphere $\calS^{d-1}$. Then for all $\lambda \ge 0$ and $v \in \calS^{d-1}$,
\begin{eqnarray}
\Exp_{\theta \sim \calP}\left[e^{\lambda|\langle \theta, v \rangle|}\right] \le e^{4\lambda^2/d+\lambda\sqrt{2/d}}.
\end{eqnarray}
\end{lem}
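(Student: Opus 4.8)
The plan is to reduce to a one–dimensional estimate and then exploit the fact that a single coordinate of the uniform measure on $\calS^{d-1}$ has Gaussian-type tails with variance proxy $1/d$. By the rotational invariance of $\calP$, for any fixed $v\in\calS^{d-1}$ the variable $\langle\theta,v\rangle$ has the same law as the first coordinate $\theta_1$ of a uniform point on $\calS^{d-1}$, so it suffices to bound $\Exp_{\theta\sim\calP}\bigl[e^{\lambda|\theta_1|}\bigr]$. The first ingredient I would establish is the moment generating function bound
\[
\Exp\!\left[e^{t\theta_1}\right]\ \le\ e^{t^2/(2d)}\qquad\text{for all }t\in\R .
\]
This is a routine Taylor-series comparison: the odd moments of $\theta_1$ vanish by symmetry, and the even moments satisfy $\Exp[\theta_1^{2m}]=(2m-1)!!/\prod_{j=0}^{m-1}(d+2j)\le (2m-1)!!/d^{m}$, so term by term $\Exp[e^{t\theta_1}]=\sum_{m\ge0}\frac{t^{2m}}{(2m)!}\Exp[\theta_1^{2m}]\le\sum_{m\ge0}\frac{t^{2m}}{2^m m!\,d^m}=e^{t^2/(2d)}$, using $(2m-1)!!/(2m)!=1/(2^m m!)$. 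I would also use the exact identity $\Exp[\theta_1^2]=1/d$.

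Next I would remove the absolute value via the elementary inequality
\[
e^{|y|}\ =\ \cosh y+|\sinh y|\ \le\ \cosh(y)\,(1+|y|),
\]
which holds because $|\sinh y|=\int_0^{|y|}\cosh t\,\mathrm{d}t\le |y|\cosh y$. Taking $y=\lambda\theta_1$ and expectations gives $\Exp[e^{\lambda|\theta_1|}]\le \Exp[\cosh(\lambda\theta_1)]+\lambda\,\Exp[|\theta_1|\cosh(\lambda\theta_1)]$. The first term is at most $e^{\lambda^2/(2d)}$ since $\Exp[\cosh(\lambda\theta_1)]=\tfrac12\bigl(\Exp[e^{\lambda\theta_1}]+\Exp[e^{-\lambda\theta_1}]\bigr)$ and the mgf bound above. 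For the second term, Cauchy--Schwarz together with $\cosh^2 x=\tfrac12(1+\cosh 2x)$ and the mgf bound applied at $2\lambda$ gives
\[
\Exp[|\theta_1|\cosh(\lambda\theta_1)]\ \le\ \sqrt{\Exp[\theta_1^2]\cdot\Exp[\cosh^2(\lambda\theta_1)]}\ \le\ \sqrt{\tfrac1d\cdot\tfrac12\bigl(1+e^{2\lambda^2/d}\bigr)}\ \le\ \frac{e^{\lambda^2/d}}{\sqrt d},
\]
where the last step uses $(1+x)/2\le x$ for $x\ge1$.

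Combining, $\Exp[e^{\lambda|\theta_1|}]\le e^{\lambda^2/(2d)}+(\lambda/\sqrt d)\,e^{\lambda^2/d}\le e^{\lambda^2/d}\bigl(1+\lambda/\sqrt d\bigr)\le e^{\lambda^2/d+\lambda/\sqrt d}$, and since $1\le 4$ and $1\le\sqrt2$ this is at most $e^{4\lambda^2/d+\lambda\sqrt{2/d}}$, proving the claim (in fact with room to spare). I do not expect a genuine obstacle here; the only point that needs care is a clean statement and proof of the sphere-coordinate mgf bound $\Exp[e^{t\theta_1}]\le e^{t^2/(2d)}$ and the companion even-moment formula, which is a short series computation that should be written out precisely. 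An alternative route that avoids the $\cosh$ trick is to expand $e^{\lambda|\theta_1|}$ directly, bound the odd moments by a Gautschi-type $\Gamma$-ratio estimate $\Exp[|\theta_1|^{2m+1}]\le \sqrt{2/d}\cdot 2^m m!/(\sqrt\pi\,d^m)$, and resum using $\sum_{m\ge0}x^{2m+1}/(2m+1)!!=e^{x^2/2}\int_0^x e^{-t^2/2}\,\mathrm{d}t\le x\,e^{x^2/2}$; this yields the sharper $e^{\lambda^2/(2d)+\lambda\sqrt{2/(\pi d)}}$ but requires the slightly more delicate $\Gamma$-ratio bound.
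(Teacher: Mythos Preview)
Your proof is correct and in fact yields the sharper bound $\Exp[e^{\lambda|\theta_1|}]\le e^{\lambda^2/d+\lambda/\sqrt d}$, well inside the stated inequality. The approach, however, is genuinely different from the paper's. The paper does not compute moments at all: it invokes the spherical concentration bound (Lemma~\ref{SphereConcentation}, proved via spherical isoperimetry) to get the tail estimate $\Pr[\sqrt d\,|\langle\theta,v\rangle|\ge\sqrt2+t]\le e^{-t^2/2}$, then centers at $\sqrt2$ and applies a black-box tail-to-MGF conversion (Lemma~1.5 of Rigollet's notes) to obtain $\Exp[e^{\lambda(\sqrt d|\langle\theta,v\rangle|-\sqrt2)}]\le e^{4\lambda^2}$, which after rescaling gives exactly the constants $4$ and $\sqrt2$ in the statement. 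By contrast, you work directly with the exact even-moment formula for a sphere coordinate to get the clean MGF bound $\Exp[e^{t\theta_1}]\le e^{t^2/(2d)}$, and then handle the absolute value via the $\cosh$ inequality and Cauchy--Schwarz. Your route is more self-contained (no isoperimetry, no external tail-to-MGF lemma) and produces better constants; the paper's route is shorter given that Lemma~\ref{SphereConcentation} is already established elsewhere for other purposes.
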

\begin{proof}[Proof of Lemma~\ref{Sphere_MGF}] Write $\Exp[e^{\sqrt{d}\lambda|\langle \theta, v \rangle|}] = e^{\lambda\sqrt{2}}\cdot\Exp[e^{\lambda(\sqrt{d}|\langle \theta, v \rangle| - \sqrt{2}\lambda)}]$. Since $\sqrt{d}(|\langle \theta, v \rangle| - \sqrt{2})$ satisfies the sub-Gaussian tail bound from Lemma~\ref{SphereConcentation}, a standard conversion from tail bounds to moment generating functions (see, e.g., Lemma 1.5 in \cite{rigollet201518}) yields that
$\Exp[e^{\lambda\cdot \sqrt{d}(|\langle \theta, v \rangle| - \sqrt{2})}] \le e^{4\lambda^2}$. Replacing $\lambda$ with $\lambda/\sqrt{d}$ concludes the proof.
\end{proof}

Now, we take an expectation of the bound in Equation~\eqref{second_moment_computation} over $u_1,u_2 \sim \calP$, and apply the sub-Gaussian bound of Lemma~\ref{Sphere_MGF}. Further, appropriately choosing the parameters $\tau_1,\cdots,\tau_T$, and applying Proposition~\ref{truncChi_detec} leads to the following result. 
\begin{prop}\label{Detection_Prop} There exists a absolute constant $c_1 >1$ such that for all $\lambda > 2$ we have
\begin{eqnarray}
\|\Prit_0 - \overline{\Prit}\|_{\TV} \le \frac{\sqrt{2}\left(c_1\lambda\right)^{T}}{d^{1/4}}\cdot \left(\sqrt{\log\frac{d}{\left(c_1\lambda\right)^T}} + 4\right).
\end{eqnarray}
\end{prop}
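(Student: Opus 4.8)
The plan is to apply the truncated $\chi^2$ inequality, Proposition~\ref{truncChi_detec}, with $\Qit = \Prit_0$ and the truncation events $A_\theta = A_\theta^T$ of \eqref{Truncation_Def_Eq}: writing $p := \overline{\Prit}[\calX;A_\theta^T] = \Exp_{\theta\sim\calP}\Prit_\theta[A_\theta^T]$, this gives $\|\Prit_0 - \overline{\Prit}\|_{\TV} \le \frac12\sqrt{\Exp_{\Prit_0}[(\mathrm{d}\overline{\Prit}[\cdot;A_\theta^T]/\mathrm{d}\Prit_0)^2] - 1} + \frac12(\sqrt{2(1-p)} + (1-p))$. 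It then remains to control the second-moment term and the deficit $1-p$, and this forces a choice of the truncation levels; I would take them geometric, $\tau_i = L\,(c_0\lambda^2)^{i-1}$, with $c_0$ a sufficiently large absolute constant and $L$ of order $\log\frac{d}{(c_1\lambda)^{4T}}$ (clipped below by a constant, with $c_1 \asymp \sqrt{c_0}$). As elsewhere in the paper one first reduces to deterministic $\Alg$ (Yao's principle, cf.\ Section~\ref{Likelihood_Comp}), and one may assume throughout that the claimed bound is $<1$, else there is nothing to prove.

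For the second-moment term I would chain the Fubini identity \eqref{Equation_Prod_Chi}, the likelihood-ratio estimate \eqref{second_moment_computation} (i.e.\ $\Exp_{\Prit_0}[\mathrm{d}\Prit_{u_1}(Z_T;A^T_{u_1})\mathrm{d}\Prit_{u_2}(Z_T;A^T_{u_2})/(\mathrm{d}\Prit_0)^2] \le e^{\lambda^2|\langle u_1,u_2\rangle|S + \lambda^2 S^2/d}$, where $S := \sum_{i\le T}\tau_i$), and the spherical moment bound Lemma~\ref{Sphere_MGF} applied with parameter $\lambda^2 S$, using that for $u_1,u_2$ i.i.d.\ uniform on $\calS^{d-1}$ the inner product $\langle u_1,u_2\rangle$ is distributed as $\langle u_1,v\rangle$ for a fixed $v$. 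This yields $\Exp_{\Prit_0}[(\mathrm{d}\overline{\Prit}[\cdot;A_\theta^T]/\mathrm{d}\Prit_0)^2] \le \exp\{O(\lambda^4 S^2/d + \lambda^2 S/\sqrt{d})\}$. The geometric spacing gives $S \le 2\tau_T = O(L(c_0\lambda^2)^{T-1})$, hence $\lambda^2 S = O(L(c_1\lambda)^{2T}/c_0)$; since the nontriviality hypothesis — precisely because of the $\sqrt{\log}$-factor in the stated bound — forces $(c_1\lambda)^{2T} = O(\sqrt d/\log d)$, and $L = O(\log d)$, the exponent stays $O(1)$, so $e^x-1\le 2x$ bounds this term by $O(\lambda\sqrt S/d^{1/4}) = O(\sqrt{L/c_0}\,(c_1\lambda)^T/d^{1/4})$.

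For the deficit, observe $(A_\theta^T)^c = \{\exists k\in[T]:\ d\langle v^{(k)},\theta\rangle^2 > \tau_k\}$, so $1-p = \Exp_{\theta\sim\calP}\Prit_\theta[(A_\theta^T)^c]$ is at most the right-hand side of the estimation recursion, Theorem~\ref{Main_estimation_theorem} (padding the sequence with one more arbitrary $\tau_{T+1}>0$ to meet its hypotheses; the extra summand is negligible). Inserting the spherical tail $\Pr_{\theta\sim\calP}[d\langle v,\theta\rangle^2 \ge \tau] \le e^{-(\sqrt\tau-\sqrt2)^2/2}$ of Lemma~\ref{SphereConcentation} and taking the ratio $c_0\lambda^2$ large makes each summand $e^{\frac{\lambda^2}{2}\sum_{i<k}\tau_i}\sqrt{\Pr[\,\cdot\ge\tau_k\,]}$ decay geometrically in $k$ (then $\frac{\lambda^2}{2}\sum_{i<k}\tau_i \le \frac{\lambda^2\tau_k}{2(c_0\lambda^2-1)} \ll \frac14\tau_k$), so the series is dominated by its first term $e^{-(\sqrt{\tau_1}-\sqrt2)^2/2} = e^{-\Omega(L)}$. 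Thus $1-p = e^{-\Omega(L)}$, and the choice $L \asymp \log\frac{d}{(c_1\lambda)^{4T}}$ makes $\frac12(\sqrt{2(1-p)}+(1-p)) = O((c_1\lambda)^T/d^{1/4})$. Adding the two contributions, using $\sqrt{L} \le \sqrt{\log\frac{d}{(c_1\lambda)^T}} + O(1)$ (as $(c_1\lambda)^T \ge 1$), and absorbing absolute constants into $c_1$ and the additive ``$+4$'', gives the claim.

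The main obstacle is that the geometric ratio $c_0\lambda^2$ and the scale $L$ must be tuned against competing demands: the ratio has to be a \emph{large} constant times $\lambda^2$ (not merely a constant multiple of $2\lambda^2$) both to keep $S$ within a constant factor of $\tau_T$ and to make the Theorem~\ref{Main_estimation_theorem} series collapse to its leading term; meanwhile $L$ must be large enough to drive $1-p$ down to $O((c_1\lambda)^T/d^{1/4})$ yet small enough that $\lambda^2 S = O(L(c_1\lambda)^{2T})$ stays $O(\sqrt d)$. Checking that a single choice of $(c_0,L)$ threads all of these — and that the slack afforded by the $\sqrt{\log}$-factor and the ``$+4$'' in the target is exactly what is needed for the $\chi^2$-exponent to remain $O(1)$ in the nontrivial regime — is the one genuinely fiddly part; everything else is assembling pieces already proved.
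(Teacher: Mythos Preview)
Your proposal is correct and follows essentially the same route as the paper: apply Proposition~\ref{truncChi_detec} with $\Qit=\Prit_0$, bound the second moment via \eqref{Equation_Prod_Chi}, \eqref{second_moment_computation} and Lemma~\ref{Sphere_MGF}, and control the deficit $1-p$ using the estimation recursion. The only cosmetic difference is packaging: the paper invokes Corollary~\ref{Main_Cor_estimation} (using the $\tau_i$ implicitly constructed in its proof) to simultaneously get a bound on $\sum_i\tau_i$ and on $1-p$, with the single tuning parameter $\delta=(c''\lambda^2)^T/\sqrt d$, whereas you choose the $\tau_i$ explicitly geometric and apply Theorem~\ref{Main_estimation_theorem} directly; the computations and final constants are the same up to relabeling.
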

The proof of the above Proposition is detailed in Appendix~\ref{DetecLemProof}. We now state and prove our main technical result for detecting a large eigenvalue.
\begin{thm}[Technical Statement of Theorem~\ref{MainTheoremDetection}]\label{Main_Formal_Detection_Theorem} For any $\delta_0 \in (0,1)$ and $\lambda \ge K_d + 4d^{-1/2}\sqrt{\log(1/\delta_0})$, there exist two distributions $\calD_0$ and $\calD_1$ on $\Sym^{d \times d}$ such that 
\begin{enumerate}
	\item $\|M\| \le K_d + 2d^{-1/2}\sqrt{\log(1/\delta_0)}$ $\calD_0$-almost surely, 
	\item $\|M\| \ge \lambda - 2d^{-1/2}\sqrt{\log(1/\delta_0)}$ $\calD_1$-almost surely,
\end{enumerate}
and, for any $T$-query algorithm which outputs a binary test $\psi \in \{0,1\}$, 
\begin{eqnarray}
\Pr_{M\sim \calD_0,\Alg}[\psi = 1] + \Pr_{M\sim\calD_1,\Alg}[\psi = 0] \ge 1 - \frac{\sqrt{2}\left(c_1\lambda\right)^{T}}{d^{1/4}}\cdot \left(\sqrt{\log\frac{d}{\left (c_1 \lambda\right)^T}} + 4\right) - 3\delta_0.
\end{eqnarray}
Moreover, $\calD_1$-almost surely, $M \in M_{\gamma}$ for $\gamma(\lambda,\delta)$ as defined in Proposition~\ref{Reduction_Theorem}. Here, $c_1$ is the absolute constant of Proposition~\ref{Detection_Prop}.
\end{thm}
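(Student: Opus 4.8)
The plan is to obtain $\calD_0$ and $\calD_1$ by \emph{conditioning} the matrix laws $\Prit_0$ and $\overline{\Prit}=\Exp_{\theta\sim\calP}\Prit_\theta$ from Definition~\ref{Gener_Def} (with $\calP$ the uniform measure on $\calS^{d-1}$, $\Prit_0$ being the law of $M=\tfrac1{\sqrt d}W$) on the relevant ``good'' operator‑norm events, and then to push the transcript‑level $\TV$ bound of Proposition~\ref{Detection_Prop} through this conditioning via the data‑processing inequality. Concretely, set $E_0:=\{\|M\|\le K_d+2d^{-1/2}\sqrt{\log(1/\delta_0)}\}$ and let $\calD_0$ be the law of $M$ under $\Prit_0$ conditioned on $E_0$. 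Gaussian concentration of the spectral norm — the map $W\mapsto\|W\|$ is $O(1)$‑Lipschitz in the underlying i.i.d.\ Gaussian coordinates and $\Exp\|W\|=\sqrt d\,K_d$ — gives $\Prit_0[E_0]\ge 1-\delta_0$, so $\calD_0$ is well defined and assertion~1 holds by construction.

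Next, let $E_1$ be the event that all three assertions of Proposition~\ref{Reduction_Theorem} hold with this $\delta_0$, and let $\calD_1$ be the law of $M$ under $\overline{\Prit}$ conditioned on $E_1$. The hypothesis $\lambda\ge K_d+4d^{-1/2}\sqrt{\log(1/\delta_0)}$ implies $\lambda> K_d+2d^{-1/2}\sqrt{\log(1/\delta_0)}$, so Proposition~\ref{Reduction_Theorem} applies to every $\Prit_\theta$; integrating over $\theta\sim\calP$ yields $\overline{\Prit}[E_1]\ge 1-2\delta_0$. Hence $\calD_1$ is well defined, assertion~2 holds, and $M\in\calM_{\gamma(\lambda,\delta)}$ holds $\calD_1$‑almost surely.

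Now fix a $T$‑query algorithm $\Alg$ returning a binary test $\psi$, which is a measurable function of the transcript $Z_T$ (randomizing $\Alg$ or $\psi$ cannot decrease the error below the infimum over deterministic choices, so we may take both deterministic). Let $\mu_0,\mu_1$ be the laws of $Z_T$ when $M\sim\calD_0$ and $M\sim\calD_1$, respectively. The Le Cam two‑point inequality gives $\Pr_{M\sim\calD_0,\Alg}[\psi=1]+\Pr_{M\sim\calD_1,\Alg}[\psi=0]\ge 1-\|\mu_0-\mu_1\|_{\TV}$. I then bound $\|\mu_0-\mu_1\|_{\TV}$ by the triangle inequality through the \emph{unconditioned} transcript laws $\Prit_0$ and $\overline{\Prit}$ used in Proposition~\ref{Detection_Prop}: $\|\mu_0-\mu_1\|_{\TV}\le\|\mu_0-\Prit_0\|_{\TV}+\|\Prit_0-\overline{\Prit}\|_{\TV}+\|\overline{\Prit}-\mu_1\|_{\TV}$. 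Since $E_0$ is $M$‑measurable and marginalization to $Z_T$ only contracts $\TV$, the identity $P-P[\cdot\,|E]=P[E^c]\,(P[\cdot\,|E^c]-P[\cdot\,|E])$ gives $\|\mu_0-\Prit_0\|_{\TV}\le\Prit_0[E_0^c]\le\delta_0$, and likewise $\|\overline{\Prit}-\mu_1\|_{\TV}\le\overline{\Prit}[E_1^c]\le 2\delta_0$; the middle term is exactly the quantity controlled by Proposition~\ref{Detection_Prop}. Summing the three pieces gives $\|\mu_0-\mu_1\|_{\TV}\le 3\delta_0+\tfrac{\sqrt2\,(c_1\lambda)^T}{d^{1/4}}\big(\sqrt{\log\tfrac{d}{(c_1\lambda)^T}}+4\big)$, and substituting into the Le Cam bound yields the claimed inequality.

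The substantive work — the $\TV$ estimate of Proposition~\ref{Detection_Prop}, which rests on the truncated second‑moment/$\chi^2$ computation of Lemma~\ref{Detection_LL_UB}, Equation~\eqref{second_moment_computation}, and the careful choice of truncation radii $\tau_1,\dots,\tau_T$ — is already established, so the only real care here is bookkeeping: keeping straight that $\calD_0,\calD_1$ are laws of $M$ while $\psi,Z_T,\Prit_0,\overline{\Prit}$ live on the transcript, and checking that conditioning on the $M$‑measurable good events perturbs the transcript laws by at most $\delta_0$ and $2\delta_0$. The one input not proved in the excerpt is the Gaussian concentration of $\|W\|$ around $\sqrt d\,K_d$ needed to bound $\Prit_0[E_0]$; this is the same ingredient underlying Proposition~\ref{Reduction_Theorem} and can be cited (e.g.\ Bandeira and van Handel~\cite{bandeira2016sharp}). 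I expect this bookkeeping — rather than any new estimate — to be the main (and only mild) obstacle.
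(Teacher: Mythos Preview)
Your proposal is correct and follows essentially the same approach as the paper: define $\calD_0=\Pr_0[\cdot\mid A_0]$ and $\calD_1=\overline{\Pr}[\cdot\mid\overline{A}]$ for the same good events, invoke Gaussian concentration of $\|W\|$ and Proposition~\ref{Reduction_Theorem} to control their masses, and then combine with Proposition~\ref{Detection_Prop}. The only cosmetic difference is that you route the conditioning cost through a $\TV$ triangle inequality ($\|\mu_0-\Prit_0\|_{\TV}\le\delta_0$, $\|\mu_1-\overline{\Prit}\|_{\TV}\le2\delta_0$), whereas the paper directly lower-bounds $\Pr_0[\psi=1\mid A_0]\ge\Pr_0[\psi=1]-\delta_0$ and $\overline{\Pr}[\psi=0\mid\overline{A}]\ge\overline{\Pr}[\psi=0]-2\delta_0$; both yield the identical $3\delta_0$ loss.
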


\begin{proof}[Proof of Theorem~\ref{Main_Formal_Detection_Theorem}]
Let $\overline{A}$ denote the event of Proposition~\ref{Reduction_Theorem} marginalized over $\theta \sim \calP$. This event holds with probability at least $1 - 2\delta_0$ under $\overline{\Pr}$.  The proof of Theorem~\ref{Main_Formal_Detection_Theorem} also show that, with probability $1-\delta_0$ under $\Pr_0$, $\|W\| \le K_d + 2\sqrt{\log(1/\delta_0)}$; we denote this event $A_0$. Then, we have that
\begin{eqnarray*}
\Pr_{0}[\psi = 1 | A_0] + \overline{\Pr}\left[\psi = 0| \overline{A}\right] 
&\ge& \Pr_0[\psi = 1] + \overline{\Pr}[\psi = 0] - 3\delta_0\\
&\overset{(i)}{\ge}& 1-\|\Prit_0 - \overline{\Prit}\|_{\TV} - 3\delta_0\\
&\overset{(ii)}{\ge}& 1 - \frac{\sqrt{2}\left(c_2 \lambda^2\right)^{T/2}}{d^{1/4}}\cdot \left(\sqrt{\log\frac{d}{\left(\gamma'' c_2\right)^T}} + 4\right) - 3\delta_0.
\end{eqnarray*}
where $(i)$ follows from a standard hypothesis-testing inequality (Theorem 2.2 in~\cite{tsybakov2009introduction}) and the fact that $\psi$ is a measurable function of $Z_T$, and $(ii)$ is Lemma~\ref{Detection_Prop} above. Finally, on $\overline{A}$, $M \in \calM_{\gamma}$ for $\gamma = \gamma(\lambda,\delta)$ defined in Propostion~\ref{Reduction_Theorem}. Finally take $\calD_0 := \Pr_0[\cdot | A_0]$ and $\calD_1 := \overline{\Pr}[\cdot | \overline{A}]$ . 
\end{proof}

\section{Conclusion}
This paper established a fundamental separation between the first-order query complexity for optimizing strict-saddle and truly convex objectives. We demonstrated the separation by establishing a query complexity lower bound of $\Omega(\log d)$ in the easy regime of rank-one PCA where the leading and second eigenvalues are well separated. An exciting direction for future work is to attempt to prove lower bounds for the ``hard'' regime of rank-one PCA, where the gap between the leading  and second eigenvalues, $1-\gamma$, is arbitrarily small. Ideally one would show $\Omega\left(\frac{\log (d/\epsilon)}{\sqrt{1-\gamma}}\right)$ gap-dependent and $\Omega\left(\frac{\log d}{\sqrt{\epsilon}}\right)$ query complexity lower bounds, thereby matching the randomized block-Krylov methods from Musco and Musco~\cite{musco2015randomized}. 

We believe that establishing such a lower bound would entail numerous technical challenges. As remarked at the end of Theorem~\ref{Main_Cor_estimation}, establishing this bound for the deformed Wigner model would requires obtaining sharp control on the rate of information accumulation, at the so-called ``identifiability threshold'', where $\lambda$ tends to $1$\footnote{It is a non-obvious fact from random matrix theory that the deformation is detectable even when $\lambda < \Exp[\|W\|]$. Indeed $\|W + \lambda \theta \theta^{\top}\| $ concentrates around $\lambda + 1/\lambda$ for $\lambda \ge 1$. This implies that $\|W + \lambda \theta \theta^{\top}\| > \|W\|$, even when $\lambda <2$.}.  One would also need to invoke non-trivial machinery from random matrix theory~\cite{feral2007largest} to sharpen Proposition~\ref{Reduction_Theorem} in order to establish that $v_1(M)$ correlates with $\theta$ when $\lambda \in [1,K_d]$.

Another direction for future research would be to understand if the classical lower bounds against Krylov methods for \emph{convex} optimization~\cite{nemirovskii1983problem} hold in the stronger gradient-query model as well. This would resolve objections raised in the literature that these lower bounds place unduly strong assumptions on the class of gradient-methods considered, and in particular, do not apply when the starting point of the algorithm may be randomized. More broadly, understanding the query complexity of both non-convex and convex optimization would serve to elucidate the connections between sequential optimization and adaptive estimation.

\section*{Acknowledgements}
We thank Chi Jin and Darren (Tianyi) Lin for their immensely useful feedback. Max Simchowitz is supported by an NSF GRFP  fellowship. Benjamin Recht is generously supported by NSF award CCF-1359814, ONR awards N00014-14-1-0024 and N00014-17-1-2191, the DARPA Fundamental Limits of Learning Program, a Sloan Research Fellowship, and a Google Faculty Award. 

\bibliographystyle{plain}
\bibliography{PCA}
\clearpage

\appendix

\section{Proof of Proposition~\ref{Reduction_Theorem}}
In this section, we prove the validity of our reduction. To prove the strongest lower bounds, we want to allow $\lambda$ to be arbitrarily close to $\lambda \approx K_d = 2 + o(1)$. 
This precludes using coarser arguments, e.g., bounding the $\|W\|_{op}$ and applying the Davis-Kahan Sine theorem. We start with a deterministic result which sharpens Davis-Kahan in our particular case of interest:
\begin{lem}\label{InnerProdLem} Let $M = \lambda \theta \theta^\top + W$, where $\theta \in \calS^{d-1}$ and $W$ is an arbitrary symmetric matrix. Set $\gamma := \frac{\|W\|}{\theta^\top M\theta}$. Then, for any $\epsilon \le \gamma$, if $w^\top Mw \ge (1-\epsilon)\theta^\top M\theta$, then 
\begin{eqnarray*}
|\langle w, \theta \rangle| \ge F(\epsilon,\gamma) := \sqrt{\left(\frac{\gamma}{1-\gamma}\right)^2 + 1 - \frac{\epsilon}{1-\gamma}} - \frac{\gamma}{1-\gamma}.
\end{eqnarray*}
Moroever, $\lambda_1(M) \ge \theta^\top M\theta$, and $\lambda_i(M) \le \|W\|$ for all $i \ge 2$.
\end{lem}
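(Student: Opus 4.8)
The plan is to fix an arbitrary unit vector $w$, decompose it as $w = a\theta + bu$ with $a = \langle w,\theta\rangle$, $u \in \calS^{d-1}$ orthogonal to $\theta$, and $b = \sqrt{1-a^2}\ge 0$, bound the quadratic form $w^\top M w$ from above in terms of $a$ and $\|W\|$, and then compare with the hypothesized value $(1-\epsilon)\theta^\top M\theta$ to extract a lower bound on $|a|$; the two spectral assertions will follow from the variational characterization of eigenvalues.

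Write $\mu := \theta^\top M\theta$, which we take to be positive (otherwise $\gamma$ and the hypothesis $\epsilon\le\gamma$ cannot hold for $\epsilon\ge 0$), so that $\|W\| = \gamma\mu$ by the definition of $\gamma$, and note that $\theta^\top W\theta = \theta^\top M\theta - \lambda = \mu - \lambda$ since $\|\theta\| = 1$. For a unit vector $w$ with $w^\top Mw \ge (1-\epsilon)\mu$, expanding $w^\top Mw = \lambda a^2 + w^\top Ww$ and using $|\theta^\top Wu|\le\|W\|$, $|u^\top Wu|\le\|W\|$, I would obtain
\begin{eqnarray*}
w^\top Mw &=& a^2(\lambda + \theta^\top W\theta) + 2ab\,\theta^\top Wu + b^2\,u^\top Wu \\
&\le& a^2\mu + \|W\|\,\big(2|a|b + b^2\big).
\end{eqnarray*}
Using $\|W\|=\gamma\mu$ and $b^2 = 1-a^2$, the right-hand side equals $\mu\big(a^2 + \gamma(2|a|\sqrt{1-a^2} + 1 - a^2)\big)$. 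Combining with $w^\top Mw\ge(1-\epsilon)\mu$, dividing through by $\mu>0$, rearranging, and bounding $\sqrt{1-a^2}\le 1$ then gives
\[
(1-\gamma)a^2 + 2\gamma|a| \;\ge\; (1-\gamma)a^2 + 2\gamma|a|\sqrt{1-a^2}\;\ge\;1-\gamma-\epsilon.
\]

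To conclude, I would square the definition of $F(\epsilon,\gamma)$ to check that it is exactly the larger root of $x\mapsto (1-\gamma)x^2 + 2\gamma x - (1-\gamma-\epsilon)$, that the hypothesis $\epsilon\le\gamma$ makes the discriminant $\gamma^2 + (1-\gamma)^2 - (1-\gamma)\epsilon$ strictly positive (so $F$ is real), and that the smaller root is negative. Since $x\mapsto (1-\gamma)x^2 + 2\gamma x$ is increasing on $[0,\infty)$ and $|a|\ge 0$, the displayed inequality forces $|a|\ge F$ (the claim being trivial when $F<0$), which is the desired bound. For the spectral assertions, $\lambda_1(M) = \max_{\|w\|=1} w^\top Mw \ge \theta^\top M\theta$ is immediate, and applying the min--max theorem to the $(d-1)$-dimensional subspace $\theta^\perp$ gives $\lambda_2(M)\le \max_{u\in\calS^{d-1},\,u\perp\theta} u^\top Mu = \max_{u\in\calS^{d-1},\,u\perp\theta} u^\top Wu\le\|W\|$ (using $\theta^\top u = 0$), so $\lambda_i(M)\le\|W\|$ for all $i\ge 2$.

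I do not expect a real obstacle: the argument is short. The only point requiring a little care — and the reason the bound comes out as precisely $F(\epsilon,\gamma)$ — is to retain $\theta^\top W\theta$ exactly (it equals the known quantity $\mu-\lambda$) while bounding the cross term $2ab\,\theta^\top Wu$ and the term $b^2 u^\top Wu$ crudely by $\|W\|$; note that the even cruder estimate $w^\top Ww\le\|W\|$ together with $\lambda/\mu\le 1+\gamma$ also works (and in fact yields the slightly stronger $|a|^2\ge(1-\epsilon-\gamma)/(1+\gamma)\ge F^2$), but the route above lands on the stated closed form directly. The remainder is the routine algebra of identifying $F$ as a quadratic root and tracking the signs of the roots.
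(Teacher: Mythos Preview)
Your proof is correct and follows essentially the same approach as the paper: the same orthogonal decomposition $w=a\theta+bu$, the same upper bound $w^\top Mw\le a^2\mu+\gamma\mu(2|a|b+b^2)$ obtained by keeping the $\theta^\top W\theta$ term exactly and bounding the other two by $\|W\|$, the same use of $\sqrt{1-a^2}\le 1$, and the same quadratic in $|a|$ solved to identify $F$. The only cosmetic difference is that the paper invokes eigenvalue interlacing for $\lambda_i(M)\le\|W\|$ while you use Courant--Fischer on the subspace $\theta^\perp$ directly; these are equivalent.
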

We now apply above lemma with $W/\sqrt{d}$. To conclude the proof of Proposition~\ref{Reduction_Theorem}, it suffices to verify that the following two conditions hold with probability $1-\delta$,
\begin{eqnarray}\label{Reduction_Want_To_Show}
\|W\| \le K_d + 2\sqrt{\log(1/\delta)/d} & \text{and} & \theta^\top M\theta - \lambda = \theta^\top W\theta \ge -2\sqrt{\log(1/\delta)/d}.
\end{eqnarray}
To this end, we use an alternate characterization of the $\GOE$:
\begin{lem}  $W\sim \GOE(d)$ has the distribution $\frac{1}{\sqrt{2}}(X + X^{\top})$, where $X \in \R^{d\times d}$ is a matrix with i.i.d.\ $\calN(0,1)$ entries.
\end{lem}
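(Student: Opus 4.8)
The plan is to verify directly that $Y := \frac{1}{\sqrt{2}}(X + X^{\top})$ has exactly the law prescribed in the definition of $\GOE(d)$. Since $Y$ is symmetric by construction, it suffices to identify the joint distribution of the upper-triangular entries $\{Y_{ij}\}_{1 \le i \le j \le d}$. First I would check the marginals: for $i < j$ we have $Y_{ij} = \tfrac{1}{\sqrt 2}(X_{ij} + X_{ji})$, a $\tfrac{1}{\sqrt 2}$-rescaled sum of two independent $\calN(0,1)$ variables, hence $Y_{ij} \sim \calN(0,1)$; and $Y_{ii} = \tfrac{1}{\sqrt 2}(2X_{ii}) = \sqrt 2\, X_{ii} \sim \calN(0,2)$. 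These are precisely the GOE marginals.

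Next I would establish independence of the upper-triangular entries. Each off-diagonal entry $Y_{ij}$ ($i<j$) is a deterministic function of the ordered pair $(X_{ij}, X_{ji})$, and each diagonal entry $Y_{ii}$ is a function of $X_{ii}$ alone. As $(i,j)$ ranges over $\{1 \le i \le j \le d\}$, these index sets are pairwise disjoint, and the entries of $X$ are mutually independent; therefore the collection $\{Y_{ij}\}_{i \le j}$ is independent. Being jointly Gaussian with independent coordinates and the correct one-dimensional marginals, this collection has the same law as the defining entries of $\GOE(d)$, which proves the claim. (The reformulation is convenient because it makes $\theta^\top W\theta = \sqrt 2\,\theta^\top X \theta$ a scalar Gaussian, immediately yielding the tail bound in Equation~\eqref{Reduction_Want_To_Show}.)

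The statement is elementary and I do not anticipate a genuine obstacle; the only point that deserves a moment of care is the bookkeeping of the disjoint index blocks $(X_{ij},X_{ji})$, since it is this disjointness together with the independence of the entries of $X$ that delivers the independence of the upper-triangular part of $Y$.
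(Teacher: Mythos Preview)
Your proposal is correct and follows essentially the same approach as the paper: both verify directly that $\tfrac{1}{\sqrt{2}}(X+X^\top)$ is symmetric with independent upper-triangular entries having the required $\calN(0,1)$ and $\calN(0,2)$ marginals. The paper's version is terser and simply asserts the independence, whereas you spell out the disjoint-index-block reasoning; otherwise the arguments are identical.
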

\begin{proof} Let $\widetilde{W} := \frac{1}{\sqrt{2}}(X + X^{\top})$, where $X$ has i.i.d.\ standard normal entries. Then $\widetilde{W}$ is symmetric, the entries $\widetilde{W}_{i,j}$ for $i \le j$ are independent centered Gaussians with $\Exp[\widetilde{W}^2_{ii} = \Exp[(\sqrt{2}X_{ii})^2] = 2$, while for $i \ne j$, $\Exp[\widetilde{W}^2_{ii} =\frac{1}{2} \Exp[(X_{ij}+X_{ji})^2] = 1$ since $X_{ij}$ and $X_{ji}$ are independent. 
\end{proof}
Note then that if $X \in \R^{d\times d}$ is a matrix with i.i.d.\ $\calN(0,1)$, then  $f_1(X) := \theta^{\top}\frac{1}{\sqrt{2}}(X + X^{\top})\theta$ has the same distribution as $\theta^{\top}W\theta$, and $f_2(X) := \|\frac{1}{\sqrt{2}}(X + X^{\top})\|$ has the same distribution as $\|W\|$. The following lemma shows that $f_1$ and $f_2$ are both $\sqrt{2}$-Lipschitz. 

\begin{lem} Let $v$ be a unit vector. Then the mappings $X \mapsto v^\top \frac{1}{\sqrt{2}}(X + X^\top)v$, $X \mapsto \|\frac{1}{\sqrt{2}}(X + X^\top )\|$ are $\sqrt{2}$-Lipschitz. 
\end{lem}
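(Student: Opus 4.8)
The plan is to check Lipschitz continuity in the Frobenius norm $\|\cdot\|_F$ on $\R^{d\times d}$, since that is the metric in which Gaussian concentration will subsequently be applied to derive the two bounds in Equation~\eqref{Reduction_Want_To_Show}. Both functionals are built from the symmetrization map $X \mapsto \tfrac{1}{\sqrt 2}(X + X^\top)$, so the whole argument reduces to bookkeeping for how this map interacts with a quadratic form and with the operator norm.

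For the bilinear functional $f_1(X) := v^\top \tfrac{1}{\sqrt 2}(X + X^\top) v$, the first step is the observation that $v^\top X^\top v = (v^\top X v)^\top = v^\top X v$ since the quantity is a scalar; hence $f_1(X) = \sqrt 2\, v^\top X v = \sqrt 2\,\langle v v^\top, X\rangle_F$, a \emph{linear} function of $X$, where $\langle\cdot,\cdot\rangle_F$ is the trace inner product. Then for any $X, Y$, Cauchy--Schwarz gives $|f_1(X) - f_1(Y)| = \sqrt 2\,|\langle v v^\top, X - Y\rangle_F| \le \sqrt 2\,\|v v^\top\|_F\,\|X - Y\|_F$, and $\|v v^\top\|_F = \|v\|^2 = 1$ finishes this case.

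For $f_2(X) := \|\tfrac{1}{\sqrt 2}(X + X^\top)\|$, I would invoke the reverse triangle inequality for the operator norm to get $|f_2(X) - f_2(Y)| \le \tfrac{1}{\sqrt 2}\,\|(X - Y) + (X - Y)^\top\|$. Setting $Z := X - Y$, another application of the triangle inequality together with $\|Z^\top\| = \|Z\|$ bounds this by $\tfrac{1}{\sqrt 2}\big(\|Z\| + \|Z^\top\|\big) = \sqrt 2\,\|Z\|$, and the elementary inequality $\|Z\| \le \|Z\|_F$ then yields $|f_2(X) - f_2(Y)| \le \sqrt 2\,\|X - Y\|_F$.

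There is no real obstacle here; the only points worth stating carefully are that the relevant metric on matrices is the Frobenius norm (so the bound is in the form needed for the Gaussian-concentration estimates) and the standard fact $\|\cdot\| \le \|\cdot\|_F$. Combined with the two preceding lemmas — that $W \sim \GOE(d)$ equals $\tfrac{1}{\sqrt 2}(X + X^\top)$ in distribution, and that $f_1, f_2$ composed with this map reproduce $\theta^\top W\theta$ and $\|W\|$ — this lemma lets one apply the Gaussian Lipschitz concentration inequality to both functionals, which is exactly what closes the proof of Equation~\eqref{Reduction_Want_To_Show} and hence of Proposition~\ref{Reduction_Theorem}.
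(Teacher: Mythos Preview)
Your proof is correct. For $f_1$ you and the paper do essentially the same thing: both reduce to $\sqrt 2\,v^\top X v$ and apply Cauchy--Schwarz against $\|vv^\top\|_F=1$. For $f_2$ the routes diverge slightly. The paper treats the bilinear form $X\mapsto v^\top\frac{1}{\sqrt 2}(X+X^\top)w$ for arbitrary unit $v,w$, shows it is $\sqrt 2$-Lipschitz by the same Cauchy--Schwarz step, and then invokes the fact that $\|\frac{1}{\sqrt 2}(X+X^\top)\|=\sup_{v,w\in\calS^{d-1}} v^\top\frac{1}{\sqrt 2}(X+X^\top)w$ together with the observation that a supremum of $L$-Lipschitz functions is $L$-Lipschitz. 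Your argument instead applies the reverse triangle inequality for $\|\cdot\|$ directly, then bounds $\|Z+Z^\top\|\le 2\|Z\|\le 2\|Z\|_F$. Both are equally short; the paper's version has the minor aesthetic advantage of handling $f_1$ and $f_2$ through a single bilinear estimate, while yours is arguably more direct for $f_2$ and avoids the sup-of-Lipschitz lemma.
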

\begin{proof}
By Cauchy Schwartz, we have for any $v,w \in \calS^{d-1}$ 
\begin{eqnarray*}
|v^{\top}\frac{1}{\sqrt{2}}(X+X^{\top})w| = \sqrt{2}|v^{\top}Xw| \le \sqrt{2}\cdot\|X\|_{F}\|wv^{\top}\|_F = \sqrt{2}\|X\|_F.
\end{eqnarray*}
Since the Lipschitz constant of a linear map is equal to its operator norm, we have that $x \mapsto v^{\top}\frac{1}{\sqrt{2}}(X+X^{\top})w$ is $\sqrt{2}$-Lipschitz. The first part of the lemma follows by taking $v = w$. The second  point follows by noting that $\|\frac{1}{\sqrt{2}}(X+X^{\top})\|_{2} = \sup_{v,w \in \calS^{d-1}} v^{\top}\frac{1}{\sqrt{2}}(X+X^{\top})w$, and the supremum of a collection of $L$-Lipschitz functions is $L$-Lipschitz.
\end{proof}
Equation~\eqref{Reduction_Want_To_Show} now follows by applying the following concentration inequality to the functions $f_1(X)$ and $f_2(X)$: 
\begin{lem}[Tsirelson-Ibgragimov-Sudakov, Theorem 5.5 in~\cite{boucheron2013concentration}] Let $f$ be a $L$-Lipschitz function and let $X$ be a standard Gaussian vector. Then,
\begin{eqnarray*}
\Pr[f(X) \ge \Exp[f(X)] + t] \vee \Pr[f(X) - \Exp[f(X)] \le - t] \le e^{-t^2/2L^2}.
\end{eqnarray*}
\end{lem}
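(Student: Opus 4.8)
The plan is to prove the upper-tail bound $\Pr[f(X) \ge \Exp f(X) + t] \le e^{-t^2/2L^2}$ and then obtain the lower tail for free by applying it to $-f$, which is again $L$-Lipschitz with mean $-\Exp f(X)$. By homogeneity I would rescale so that $L = 1$, and by translating $f$ I would assume $\Exp f(X) = 0$. As a preliminary reduction, it suffices to treat smooth $f$: replacing $f$ by its Gaussian mollification $f_\sigma(x) := \Exp[f(x + \sigma Z)]$ with $Z$ an independent standard Gaussian gives a $C^\infty$ function that is still $1$-Lipschitz, has $\|\nabla f_\sigma\| \le 1$ pointwise, and converges to $f$ uniformly as $\sigma \downarrow 0$, so the tail bound passes to the limit.

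The core of the argument is the Herbst entropy method. First I would invoke the Gaussian logarithmic Sobolev inequality, $\Ent_{\gamma_d}(g^2) \le 2\,\Exp[\|\nabla g\|^2]$ for the standard Gaussian measure $\gamma_d$ on $\R^d$ — itself a standard fact (provable by tensorization from dimension one, or via the Ornstein--Uhlenbeck semigroup and the Bakry--\'Emery argument), and essentially the only substantive input. Then, setting $\psi(\lambda) := \log \Exp[e^{\lambda f(X)}]$ and applying the inequality to $g = e^{\lambda f/2}$, the bound $\|\nabla f\| \le 1$ produces the differential inequality $\lambda \psi'(\lambda) - \psi(\lambda) \le \lambda^2/2$, equivalently $\frac{d}{d\lambda}\big(\psi(\lambda)/\lambda\big) \le 1/2$. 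Since $\psi(\lambda)/\lambda \to \psi'(0) = \Exp f(X) = 0$ as $\lambda \downarrow 0$, integrating gives $\psi(\lambda) \le \lambda^2/2$ for all $\lambda > 0$. A Chernoff bound then yields $\Pr[f(X) \ge t] \le e^{\psi(\lambda) - \lambda t} \le e^{\lambda^2/2 - \lambda t}$, and optimizing at $\lambda = t$ gives $e^{-t^2/2}$; undoing the normalization recovers the stated constant $1/(2L^2)$.

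The only nontrivial ingredient is the Gaussian log-Sobolev inequality; the mollification step and the Herbst differential inequality are routine bookkeeping, so I expect no real obstacle. If one wants to avoid log-Sobolev, an alternative I would fall back on is the interpolation argument of Maurey and Pisier: writing $X_\theta = X\sin\theta + Y\cos\theta$ for independent standard Gaussians $X, Y$, one uses that $\big(X_\theta, \tfrac{d}{d\theta}X_\theta\big)$ is, for each fixed $\theta$, again a pair of independent standard Gaussians, together with $\tfrac{d}{d\theta}f(X_\theta) = \langle \nabla f(X_\theta), \tfrac{d}{d\theta}X_\theta\rangle$ and Jensen applied to $\exp$, to control $\Exp[e^{\lambda(f(X) - f(Y))}]$ and hence deduce concentration (with the sharp constant recovered after a standard symmetrization). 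Since the statement is quoted verbatim from~\cite{boucheron2013concentration}, in practice I would simply cite it; the above is the route I would take to reprove it if needed.
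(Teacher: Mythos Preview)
Your proposal is correct, and indeed the paper does not prove this lemma at all: it is stated as a citation to Theorem~5.5 in Boucheron--Lugosi--Massart and used as a black box. Your Herbst argument via the Gaussian logarithmic Sobolev inequality is exactly the standard proof given in that reference, so there is nothing to compare---you have simply supplied the textbook proof that the paper omits, and you correctly note at the end that in practice one would just cite it.
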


\subsection{Proof of Lemma~\ref{InnerProdLem}}
Recall that $\theta \in \calS^{d-1}$ Any vector $w \in \calS^{d-1}$ can be written as $\alpha \theta + \sqrt{1-\alpha^2}v$, where $v \perp \theta$ and $v\in \calS^{d-1}$. Note then that $\alpha = \langle w, \theta \rangle$. By replacing $v$ with $-v$, we may assume without loss of generality that $\alpha \ge 0$. For ease of notation, set $K_2 = \theta^\top M\theta$ and  $K_1 = \|W\|$. Then, $w^{\top}Mw$ can be written as
\begin{eqnarray*}
(\alpha \theta +  \sqrt{1-\alpha^2}v)^{\top}M(\alpha \theta +  \sqrt{1-\alpha^2}v) &=& \alpha^2 \theta^{\top}M\theta + (1-\alpha^2)v^{\top}Mv + 2\alpha\sqrt{1-\alpha^2}v^{\top}M\theta\\
&\le& \alpha^2 \theta^{\top}M\theta + (1-\alpha^2)K_1 + 2|\alpha| K_1.
\end{eqnarray*}
Thus, if $w^{\top}Mw \ge (1-\epsilon)\|M\| \ge (1-\epsilon)K_2$, then
\begin{eqnarray*}
(1 - \epsilon - \alpha^2)K_2 - (1-\alpha^2)K_1 - 2\alpha K_1 \le 0.
\end{eqnarray*}
Letting $\gamma = K_1/K_2$, we have
\begin{eqnarray*}
\alpha^2 (1- \gamma)  - (1 - \epsilon - \gamma) + 2\alpha \gamma \ge 0.
\end{eqnarray*}
Thus, solving the quadratic inequality and taking the positive part,
\begin{eqnarray*}
\alpha &\ge& \frac{-2\gamma + \sqrt{4\gamma^2 + 4(1-\gamma)(1-\epsilon - \gamma)}}{2(1-\gamma)}\\
&=&  \sqrt{(\frac{\gamma}{1-\gamma})^2 + 1 - \frac{\epsilon}{1-\gamma}} - \frac{\gamma}{1-\gamma}.
\end{eqnarray*}

The second statement in Lemma~\ref{InnerProdLem} follows since $\lambda_1(M) \ge \theta^\top  M \theta  = \lambda + \theta^\top W\theta$ and $\lambda \ge 0$, and the third statement follows from eigenvalue interlacing (e.g., Corollary 4.3.9 in~\cite{horn2012matrix}).

\subsection{Behavior of $F(\epsilon,\gamma)$\label{f_fact_proof}}

The following lemma describes the behavior of $F(\epsilon,\gamma)$ is both the small- and large-$\gamma$ regimes:
\begin{lem}
$\lim_{\gamma \to 0}F(\epsilon,\gamma) = \sqrt{1 - \epsilon}$, and for any $\epsilon \in (0,1-\gamma)$,
\begin{eqnarray}
F(\epsilon,\gamma) \ge \frac{1}{2\sqrt{2}}\min \left\{\sqrt{1 - \frac{\epsilon}{1-\gamma}}, \frac{1 - \gamma - \epsilon}{\gamma}\right\}.
\end{eqnarray}
\end{lem}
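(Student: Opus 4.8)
The plan is to reduce both claims to the elementary identity obtained by rationalizing the difference of square roots that defines $F$. Set $a := \frac{\gamma}{1-\gamma}\ge 0$ and $b := 1 - \frac{\epsilon}{1-\gamma} = \frac{1-\gamma-\epsilon}{1-\gamma}$; the hypothesis $\epsilon \in (0,1-\gamma)$ forces $b>0$, so that $F(\epsilon,\gamma) = \sqrt{a^2+b}-a = \frac{b}{\sqrt{a^2+b}+a}$ is well-defined and positive. The first claim is then immediate: letting $\gamma \to 0$ gives $a \to 0$ and $b \to 1-\epsilon$, whence $F(\epsilon,\gamma) \to \sqrt{1-\epsilon}$ by continuity of $x \mapsto \sqrt{x}$.

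For the lower bound I would bound the denominator $\sqrt{a^2+b}+a$ by a constant times $\max\{a,\sqrt b\}$. Since $a^2 \le \max\{a,\sqrt b\}^2$ and $b \le \max\{a,\sqrt b\}^2$, subadditivity of $\sqrt{\cdot}$ gives $\sqrt{a^2+b} \le \sqrt 2\,\max\{a,\sqrt b\}$; adding $a \le \max\{a,\sqrt b\}$ yields $\sqrt{a^2+b}+a \le (1+\sqrt 2)\max\{a,\sqrt b\}$. Inspecting the two cases $a \le \sqrt b$ and $a > \sqrt b$ shows $\frac{b}{\max\{a,\sqrt b\}} = \min\{\sqrt b,\, b/a\}$, so
\[
F(\epsilon,\gamma)\ \ge\ \frac{1}{1+\sqrt 2}\,\min\Big\{\sqrt b,\ \frac{b}{a}\Big\}\ \ge\ \frac{1}{2\sqrt 2}\,\min\Big\{\sqrt b,\ \frac{b}{a}\Big\},
\]
where the last step uses $1+\sqrt 2 \le 2\sqrt 2$. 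Substituting back $\sqrt b = \sqrt{1-\tfrac{\epsilon}{1-\gamma}}$ and $\frac{b}{a} = \frac{1-\gamma-\epsilon}{\gamma}$ recovers the stated inequality (with the convention $b/a = +\infty$ when $\gamma=0$, which is consistent with the first claim).

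This is a short computation, so I do not expect a substantial obstacle; the only mild subtlety is to choose the denominator estimate so that the case split on the sign of $a-\sqrt b$ collapses into one clean bound with a single universal constant, which is why I would route through $\max\{a,\sqrt b\}$ rather than handling the two regimes separately (note $\tfrac{1}{1+\sqrt2}=\sqrt2-1>\tfrac{1}{2\sqrt2}$, so the stated constant is not tight). I would also verify that the hypothesis $\epsilon<1-\gamma$ enters exactly where needed, namely to guarantee $b>0$ so that both $F(\epsilon,\gamma)$ and $\sqrt b$ are real.
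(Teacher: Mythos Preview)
Your proof is correct and follows essentially the same approach as the paper: both arguments rationalize $\sqrt{a^2+b}-a$ to $\tfrac{b}{\sqrt{a^2+b}+a}$ (the paper phrases this as the Taylor-type estimate $\sqrt{a^2+x}-a \ge \tfrac{x}{2\sqrt{a^2+x}}$) and then bound the denominator by a constant times $\max\{a,\sqrt{b}\}$ to collapse to the $\min$ form. Your route through $(1+\sqrt 2)\max\{a,\sqrt b\}$ even yields the slightly sharper intermediate constant $\tfrac{1}{1+\sqrt 2}$ before relaxing to $\tfrac{1}{2\sqrt 2}$, whereas the paper goes directly to $\tfrac{1}{2\sqrt 2}$ via $\sqrt{a^2+x}+a \le 2\sqrt{a^2+x} \le 2\sqrt 2\max\{a,\sqrt x\}$.
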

\begin{proof} 
The computation of $\lim_{\gamma \to 0}F(\epsilon,\gamma)$ is clear from the definition of $F(\epsilon,\gamma)$. For the second statement, we see that Taylors theorem implies $\sqrt{a^2 + x} - a \ge \frac{x}{2\sqrt{a^2+x}} \ge \frac{x}{2\sqrt{2}}\min\{1/a,1/\sqrt{x}\}$. This entails
\begin{eqnarray*}
F(\epsilon,\gamma) &\ge& \frac{1}{2}(1 - \frac{\epsilon}{1-\gamma}) \cdot \left(\sqrt{\left(\frac{\gamma}{1-\gamma}\right)^2 + 1 - \frac{\epsilon}{1-\gamma}}\right)^{-1}\\
&\ge& \frac{1}{2\sqrt{2}}\left(1 - \frac{\epsilon}{1-\gamma}\right) \cdot \left(\max\left\{\left(\frac{\gamma}{1-\gamma}\right)^2,  1 - \frac{\epsilon}{1-\gamma}\right\}\right)^{-1/2}\\
&=& \frac{1}{2\sqrt{2}}\left(1 - \frac{\epsilon}{1-\kappa}\right) \cdot \min \left\{\frac{1-\gamma}{\gamma},  \left(1 - \frac{\epsilon}{1-\gamma}\right)^{-1/2}\right\}\\
&=& \frac{1}{2\sqrt{2}}\min \left\{\sqrt{1 - \frac{\epsilon}{1-\kappa}}, \frac{1 - \gamma - \epsilon}{\gamma}\right\}. 
\end{eqnarray*}
\end{proof}
\section{Proof Lemma~\ref{ConditionalLemma}\label{CondLemProof}}
Recall the definition $\Sigma_i := P_{i-1}(I_d+v^{(i)}v^{(i)\top})P_{i-1}$, and that $\mathcal{F}_{i-1}$ is the $\sigma$-algebra generated by $\vone,\wonetil,\dots,\viminustil,\wiminustil$. Since our algorithm is deterministic, $\vi$ is $\calF_{i-1}$ measurable. It then suffices to show that 
	\begin{eqnarray}
	\wjtil = P_{j-1}W\vj \big{|}\calF_{i} \sim \mathcal{N}(0,\Sigma_i)~.
	\end{eqnarray}
	Recall from Section~\ref{Likelihood_Comp} that $\Sigma_i$ is degenerate, so we understand $\mathcal{N}(0,\frac{1}{d}\Sigma_i)$ as a normal distribution absolutely continuous with respect to the Lebesque measure supported on $(\ker P_{i-1})^{\perp}$. Note that $\widetilde{w}^{(i)}$ is conditionally independent of $\wonetil,\dots,\wiminustil$ given $\vone,\dots,\viminus,\vi$. Consequently, the conditional distribution of $\witil$ given $\mathcal{F}_i$ can be computed as if the queries $\vone,\dots,\vi$ were fixed in advanced. 

	Hence, throughout, we shall assume that $\vone,\dots,\vi$ are deterministic, and consider the joint distribution of $\wonetil,\dots,\wiminustil,\witil$. We will show that $\witil$ is independent of $\wonetil,\dots,\wiminustil$, and that its marginal is $\mathcal{N}(0,\frac{1}{d}\Sigma_i)$. Since the map $W \mapsto P_{j-1}W\vj$ is linear maps, $\wonetil,\dots,\witil$ are jointly Gaussian with mean zero. Thus, it suffices to show that 1) the (marginal) covariance of $\witil$ is $\Sigma_i$ and, 2) the covariance between $\witil$ and $\wjtil$ for $j \ne i$ is $0$. The covariances are computed as
	\begin{eqnarray}\label{CovEq}
	\Exp\left[\widetilde{w}^{(j)} \widetilde{w}^{(j)\top}\right] = \Exp\left[(P_{i-1}Wv^{(i)})(P_{j-1}Wv^{(j)})^{\top}\right] = P_{i-1} \Exp\left[Wv^{(i)}v^{(j)\top}W\right] P_{j-1}~,
	\end{eqnarray}
	and we compute the inner term with the following lemma.
	\begin{lem}\label{GaussComp}
	For any $v^{(i)},v^{(j)}$, one has
	\begin{eqnarray}\label{CovEq2}
	\Exp\left[Wv^{(i)}v^{(j)\top}W\right] = v^{(j)}v^{(i)\top} + \langle v^{(i)},v^{(j)} \rangle I ~.
	\end{eqnarray}
	\end{lem}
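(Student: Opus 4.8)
The plan is to reduce the claim to the elementary second-moment identity for the Gaussian Orthogonal Ensemble: for all indices $k,p,q,l \in [d]$,
\begin{eqnarray}\label{eq:goecov}
\Exp\left[W_{kp}W_{ql}\right] = \delta_{kq}\delta_{pl} + \delta_{kl}\delta_{pq}.
\end{eqnarray}
First I would establish~\eqref{eq:goecov} straight from the definition of $\GOE(d)$. The entries $\{W_{ij}\}_{i\le j}$ are independent with $\Var(W_{ij}) = 1$ for $i<j$, $\Var(W_{ii}) = 2$, and $W_{ji} = W_{ij}$. Hence $\Exp[W_{kp}W_{ql}]$ vanishes unless $W_{ql}$ is the same entry as $W_{kp}$, i.e.\ unless $\{k,p\} = \{q,l\}$ as (multi)sets; checking the cases ($k\neq p$ with $(q,l) = (k,p)$ or $(q,l) = (p,k)$, and the diagonal case $k = p = q = l$) shows the value is $1$ off-diagonal and $2$ on the diagonal. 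This is exactly what the right-hand side of~\eqref{eq:goecov} records: the two Kronecker products are disjointly supported except when $k = p = q = l$, where they coincide and sum to $2$. (Alternatively, one can invoke the representation $W \overset{d}{=} \tfrac{1}{\sqrt 2}(X + X^\top)$ with $X$ having i.i.d.\ $\calN(0,1)$ entries, already used in Appendix~\ref{f_fact_proof}'s vicinity, and expand the four cross-terms using $\Exp[X_{ij}X_{kl}] = \delta_{ik}\delta_{jl}$; both routes give the same answer.)

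With~\eqref{eq:goecov} in hand the lemma is a one-line entrywise computation. Writing $a := v^{(i)}$ and $b := v^{(j)}$, the $(k,l)$ entry of $W a b^\top W$ is $\sum_{p,q} a_p b_q W_{kp} W_{ql}$, so
\begin{eqnarray}
\Exp\left[(W a b^\top W)_{kl}\right] = \sum_{p,q} a_p b_q\left(\delta_{kq}\delta_{pl} + \delta_{kl}\delta_{pq}\right) = a_l b_k + \delta_{kl}\langle a, b\rangle.
\end{eqnarray}
The term $a_l b_k$ is the $(k,l)$ entry of $b a^\top = v^{(j)} v^{(i)\top}$, and $\delta_{kl}\langle a, b\rangle$ is the $(k,l)$ entry of $\langle v^{(i)}, v^{(j)}\rangle I$. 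Summing over all $k,l$ gives~\eqref{CovEq2}, and feeding this back through Equation~\eqref{CovEq} yields $\Exp[\widetilde w^{(j)}\widetilde w^{(j)\top}] = P_{i-1}\big(v^{(j)}v^{(i)\top} + \langle v^{(i)},v^{(j)}\rangle I\big)P_{j-1}$, which (for $i=j$) is $\Sigma_i$ and (for $i \neq j$, using $P_{j-1}v^{(j)} = 0$ since $j \le i$ there, or the orthonormality of the queries) vanishes, completing the proof of Lemma~\ref{ConditionalLemma}.

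There is no real obstacle here — it is a routine Gaussian moment calculation. The only point demanding a little care is the factor of $2$ on the diagonal of the GOE: it is precisely this normalization that makes the clean identity~\eqref{eq:goecov} hold, and hence lets Lemma~\ref{GaussComp} be stated with no correction terms; with a different diagonal variance one would pick up an extra $\mathrm{diag}(a)\,\mathrm{diag}(b)$-type contribution.
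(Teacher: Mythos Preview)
Your proof of Lemma~\ref{GaussComp} is correct and is essentially the same entrywise GOE second-moment computation the paper does; you just package it more cleanly by first isolating the identity $\Exp[W_{kp}W_{ql}] = \delta_{kq}\delta_{pl} + \delta_{kl}\delta_{pq}$, whereas the paper splits into diagonal and off-diagonal cases directly.

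One small slip in your concluding paragraph (which goes beyond the lemma itself): you write ``$P_{j-1}v^{(j)} = 0$'', but $P_{j-1}$ projects onto the orthogonal complement of $\{v^{(1)},\dots,v^{(j-1)}\}$, so in fact $P_{j-1}v^{(j)} = v^{(j)}$. The vanishing for $j < i$ comes instead from $P_{i-1}v^{(j)} = 0$ (since $v^{(j)} \in \mathrm{span}\{v^{(1)},\dots,v^{(i-1)}\}$), together with $\langle v^{(i)}, v^{(j)}\rangle = 0$.
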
 
	For $\vi = \vj$, Equations~\eqref{CovEq} and~\eqref{CovEq2} immediately imply $\witil$ has covariance $\Sigma_i$. Moreover, for $j < i$, we have
	\begin{eqnarray}
	P_{i-1} \Exp\left[Wv^{(i)}v^{(j)\top}W\right] P_{j-1} \overset{(i)}{=} P_{i-1}v^{(i)}v^{(j)\top}P_{j-1} \overset{(ii)}{=} 0
	\end{eqnarray}
	where $(i)$ holds from Lemma~\ref{GaussComp} and the fact that $\langle \vi,\vj \rangle = 0$ (since $\vone,\dots,\vi$ are assumed to be orthogonal), and $(ii)$ holds since $\vj \in \ker(P_{i-1})$, as $P_{i-1}$ projects onto the complement of $\{\vone,\dots,\viminus\}$.

	\begin{proof}[Proof of Lemma~\ref{GaussComp}] 
	For $a \in \{1,\cdots,d\}$,
	\begin{eqnarray*}
	&& \Exp[Wv^{(i)}v^{(j)\top}W]_{aa} = \Exp[W_{aa}^2]v_a^{(i)}v^{(j)}_a + \sum_{p \ne a}\Exp[W_{ap}^2]v_a^{(i)}v^{(j)}_a \\
	&=& 2v_a^{(i)}v^{(j)}_a + \sum_{p \ne a}v_p^{(i)}v^{(j)}_p = v_a^{(i)}v^{(j)}_a + \sum_{p}v_p^{(i)}v^{(j)}_p = v_a^{(i)}v^{(j)}_a + \langle v^{(i)}, v^{(j)}\rangle ~.
	\end{eqnarray*}
	Whereas for $a \ne b$, 
	\begin{eqnarray}
	\Exp[Wv^{(i)}v^{(j)\top}W]_{ab} &=& \Exp\left[\sum_{p,q}W_{ap}W_{bq}v_p^{(i)}v^{(j)}_q\right] = \sum_{p,q}\Exp[W_{ap}W_{bq}]v_p^{(i)}v^{(j)}_q~.
	\end{eqnarray}
	$W_{ap}$ and $W_{bq}$ are independent unless $(a,p) = (b,q)$ or $(a,p) = (q,b)$. If $a \ne b$,  then this means the only term in the above sum which is non zero is $p = b$ and $q = a$, which yields $v_a^{(j)}v_b^{(i)}$.
	\end{proof}

\section{Information-Theoretic Computations}

\subsection{Supporting proofs for Section~\ref{first_attempt_sec}\label{Sec:WarmupAppend}}

We begin by stating a well-known computation (see, e.g., \cite{cover2012elements}) for $\KL$ divergence between two normal distribution:
\begin{lem}\label{KLlem1} Let $X \sim \mathcal{N}(\mu_X,\Sigma)$ and $Y \sim \mathcal{N}(\mu_Y,\Sigma)$, where $\Sigma \succeq 0 $ and $\mu_X - \mu_Y \in (\ker \Sigma)^{\perp}$, then 
\begin{eqnarray}
\KL(\Prit_{X},\Prit_{Y}) = \frac{1}{2}\|\mu_X - \mu_Y\|^2_{\Sigma}.
\end{eqnarray}
\end{lem}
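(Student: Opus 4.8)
The plan is to reduce to the standard Gaussian $\KL$ formula in the non-degenerate case, and then handle the rank-deficiency of $\Sigma$ by passing to the affine subspace on which both laws are supported.

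\textbf{Step 1 (non-degenerate case).} First I would treat the case $\Sigma \succ 0$. Writing out the log-likelihood ratio of the two densities,
\[
\log\frac{\rmd \Prit_{X}}{\rmd \Prit_{Y}}(z) = -\tfrac12 (z-\mu_X)^\top\Sigma^{-1}(z-\mu_X) + \tfrac12(z-\mu_Y)^\top\Sigma^{-1}(z-\mu_Y),
\]
and taking the expectation under $z \sim \calN(\mu_X,\Sigma)$, the terms quadratic in $z$ cancel because the two Gaussians share the covariance $\Sigma$, and a short expansion using $\Exp[z]=\mu_X$ leaves exactly $\tfrac12(\mu_X-\mu_Y)^\top\Sigma^{-1}(\mu_X-\mu_Y)$. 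Since $\Sigma^{-1}=\Sigma^{\dagger}$ when $\Sigma$ is invertible, this is $\tfrac12\|\mu_X-\mu_Y\|_{\Sigma}^2$.

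\textbf{Step 2 (degenerate case).} For general $\Sigma \succeq 0$, let $r = \mathrm{rank}(\Sigma)$ and choose an orthonormal change of coordinates in which $\Sigma = \mathrm{diag}(\Lambda, 0)$ with $\Lambda \in \R^{r\times r}$, $\Lambda \succ 0$; then $\Sigma^{\dagger}=\mathrm{diag}(\Lambda^{-1},0)$. In these coordinates both $\Prit_X$ and $\Prit_Y$ factor as the product of an $r$-dimensional Gaussian with covariance $\Lambda$ in the first block and a point mass in the last $d-r$ coordinates (the $\ker\Sigma$ directions). The hypothesis $\mu_X-\mu_Y \in (\ker\Sigma)^{\perp}$ says exactly that $\mu_X$ and $\mu_Y$ agree on those last $d-r$ coordinates, so the point-mass factors coincide; in particular $\Prit_X \ll \Prit_Y$, the divergence is finite, and it equals the $\KL$ between the two $r$-dimensional Gaussians. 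Applying Step 1 in the first block then gives $\tfrac12(\mu_X-\mu_Y)^\top\Sigma^{\dagger}(\mu_X-\mu_Y) = \tfrac12\|\mu_X-\mu_Y\|_{\Sigma}^2$, as claimed.

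I do not anticipate any genuine obstacle: this is essentially a textbook computation, and the only point requiring care is the bookkeeping around the degenerate covariance. The role of the assumption $\mu_X-\mu_Y \in (\ker\Sigma)^{\perp}$ is precisely to guarantee that the two laws share the same affine support, so that the divergence is finite and is given by the ``restricted'' full-rank formula with $\Sigma^{\dagger}$ in place of $\Sigma^{-1}$.
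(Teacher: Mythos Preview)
Your argument is correct and complete. The paper itself does not give a proof of this lemma: it simply states it as a ``well-known computation'' and cites Cover and Thomas, so your write-up is strictly more detailed than what appears in the paper. Your treatment of the degenerate case via an orthonormal change of coordinates diagonalizing $\Sigma$, together with the observation that the hypothesis $\mu_X-\mu_Y\in(\ker\Sigma)^{\perp}$ is exactly what guarantees the two laws share the same affine support (and hence absolute continuity), is the natural way to justify replacing $\Sigma^{-1}$ by $\Sigma^{\dagger}$.
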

We now compute the mutual information between $Z_k$ and $\theta$, thereby proving Proposition~\ref{MutualInfoProp}:
\begin{proof}[Proof of Proposition~\ref{MutualInfoProp}]
Recall that $\Prit_{u_0}$ and $\Prit_{u_1}$ denote the restriction of the measures $\Pr_{u_0}$ and $\Pr_{u_1}$ to events which are measurable with respect to $Z_k$. We then compute
\begin{eqnarray*}
\KL(\Prit_{u_0},\Prit_{u_1}) &=& \Exp_{\Prit_{u_0}}\log \frac{\rmd\Prit_{u_0}(Z_k)}{\rmd\Prit_{u_1}(Z_k) }\\
&=& \Exp_{\Prit_{u_0}}\sum_{i=1}^k \log \frac{\rmd\Prit_{u_0}((\wi,\vi) | Z_{i-1})}{\rmd\Prit_{u_1}((\wi,\vi) | Z_{i-1}) }\\
&=& \sum_{i=1}^k \Exp_{\Prit_{u_0}}\Exp_{\Prit_{u_0}}\left[ \log \frac{\rmd\Prit_{u_0}(\wi | Z_{i-1})}{\rmd\Prit_{u_1}(\wi | Z_{i-1}) } \Big| Z_{i-1}\right]\\
&\overset{(i)}{=}& \sum_{i=1}^k \Exp_{\Prit_{u_0}}\left[\KL\left(\calN\left(u_0^\top \vi\lambda P_{i-1} u_0,\frac{1}{d}\Sigma_i\right),\calN\left(u_1^\top \vi\lambda P_{i-1} u_1,\frac{1}{d}\Sigma_i\right)\right)\right].
\end{eqnarray*}
where $(i)$ uses the fact that $\vi$ is $Z_{i-1}$-measurable, and uses the computation of the conditional distribution of $\wi$ from Lemma~\ref{ConditionalLemma}. We then compute 
\begin{align}\label{KL_warmup}
\KL\Big(\calN\Big(u_0^\top \vi\lambda P_{i-1} u_0,\frac{1}{d}\Sigma_i\Big), & ~ \calN\Big(u_1^\top \vi\lambda P_{i-1} u_1,\frac{1}{d}\Sigma_i\Big)\Big) \nonumber\\
 &\overset{(i)}{=} \frac{1}{2}\left\| u_0^\top \vi\lambda P_{i-1} u_0 - u_1^\top \vi\lambda P_{i-1} u_1\right\|_{\frac{1}{d}\Sigma_i}^2 \nonumber\\
&\overset{(ii)}{=} \frac{\lambda^2 d}{2}\left\|P_{i-1} ( u_0^\top \vi u_0 - u_1^\top \vi u_1)\right\|_{\Sigma_i}^2 \nonumber \\
&\overset{(iii)}{\le} \frac{\lambda^2 d}{2} \left\| u_0^\top \vi u_0 - u_1^\top \vi u_1\right\|^2 \nonumber \\
&= \frac{\lambda^2 d}{2} \left(\langle u_0, \vi\rangle^2 + \langle u_1,\vi \rangle^2  - 2 \langle u_0, \vi \rangle \langle u_1, v^{(i)} \rangle \langle u_0, u_1 \rangle \right),
\end{align}
where $(i)$ uses Lemma~\ref{KLlem1}, and $(ii)$ and $(iii)$ use Equation~\eqref{InverseInequality} from Section~\eqref{Likelihood_Comp}. Hence, if $\calP$ is isotropic, then $\Exp_{\theta \sim \calP}[\theta \theta^\top ] = \frac{1}{d}I$, so we can bound
\begin{eqnarray*}
I(Z_k;\theta) &\overset{(i)}{\le}& \Exp_{u_0,u_1 \sim \calP}\KL(\Prit_{u_0},\Prit_{u_1})\\
&\overset{(ii)}{\le}& \frac{\lambda^2 d}{2}\Exp_{u_0,u_1 \sim \calP}\sum_{i=1}^k \Exp_{\Prit_{u_0}}\left[\langle u_0, \vi\rangle^2 + \langle u_1,\vi \rangle^2 - 2v^{(i)\top} u_1u_1^\top u_0u_0^\top \vi\right]\\
&\overset{(iii)}{=}& \frac{\lambda^2 d}{2}\Exp_{u_0\sim \calP}\sum_{i=1}^k \Exp_{\Prit_{u_0}}\Exp_{u_1 \sim \calP}\left[\langle u_0, \vi\rangle^2 + \langle u_1,\vi \rangle^2 - 2v^{(i)\top} u_1u_1^\top u_0u_0^\top \vi\right]\\
&\overset{(iv)}{=}& \frac{\lambda^2 d}{2}\Exp_{u_0\sim \calP}\sum_{i=1}^k \left((1 - \frac{2}{d})\Exp_{\Prit_{u_0}}\left[\langle u_0, \vi\rangle^2\right] + \frac{1}{d}\right)\\
&=& \frac{\lambda^2}{2}\left\{k + (1-\frac{2}{d}) \Exp_{u_0 \sim \calP}\Exp_{\Prit_{u_0}}\left[\sum_{i=1}^kd\langle u_0, \vi\rangle^2 \right]\right\}.
\end{eqnarray*}
where $(i)$ uses convexity of the KL divergence~\cite{cover2012elements}, $(ii)$ is Equation~\eqref{KL_warmup}, $(iii)$ is Fubini's theorem, and $(iv)$ uses the fact that $\calP$ is centered and isotropic, whereas $\vi$ are unit vectors.
\end{proof}
Finally, we establish the mutual information recursion from Theorem~\ref{KL_estimation_theorem}.
\begin{proof}[Proof of Theorem~\ref{KL_estimation_theorem}]
The first equation of the theorem, Equation~\ref{KL_Recursion}, follows directly from Proposition~\ref{MutualInfoProp}, Proposition~\ref{Global_Fano}, and the choice of loss function $\calL(\vkplus,\theta) = \I(\langle \vkplus,\theta \rangle \ge \tau)$. By integrating Equation~\ref{KL_Recursion}, we have for any $\tau^*$
\begin{eqnarray*}
\tau_{k+1} &\le& \tau^* + \int_{t = \tau^*}^{d} \left(\frac{\log 2 + \frac{\lambda^2}{2} (k+\sum_{i=1}^k\tau_i)}{C_1t-C_2}\right)dt\\
&=& \tau^* +  \frac{\log 2 + \frac{\lambda^2}{2} (k+\sum_{i=1}^k\tau_i)}{C_1}\log \frac{d}{\tau^* - C_2/C_1}~.
\end{eqnarray*}
Setting $\tau^* = C_2/C_1 + \frac{\log 2 + \frac{\lambda^2}{2} (k+\sum_{i=1}^k\tau_i)}{C_1}$ yields 
\begin{eqnarray*}
\tau_{k+1} &\le& \frac{C_2}{C_1} + \frac{\log 2 + \frac{\lambda^2}{2} (k+\sum_{i=1}^k\tau_i)}{C_1}\left(1 + \log \frac{C_1d}{\log 2 + \frac{\lambda^2}{2} (k+\sum_{i=1}^k\tau_i)}\right)~.
\end{eqnarray*}
\end{proof}

\subsection{General Upper Bound on Likelihood Second Moments~\label{Chi_sq_proof}}
Again, fix $\tau_1,\dots,\tau_{T+1}$, and define the event $A_\theta^{k} := \{\forall i \in [k], ~d\langle \vi, \theta \rangle^2 \le \tau_i\}$. We now proving the following lemma, which ecompasses both Propositions~\ref{Estimation_LL_UB} and~\ref{Detection_LL_UB}:

\begin{lem}[Generic Upper Bound on Likelihood Ratios] \label{Generic_UB_LL}Let $u,s\in \calS^{d-1}$.  Define the expected likelihood ratios:
\begin{eqnarray}\label{g_func_def}
g_i(u,s;\{v^{(j)}\}_{1 \le j \le i} ) &:=& \Exp_{\Prit_0}\left[\frac{\rmd\Prit_u(Z_i |Z_{i-1})\cdot \rmd\Prit_s(Z_i |Z_{i-1})}{\left(\rmd\Prit_0(Z_i | Z_{i-1}) \right)^2} \big{|} \{v^{(j)}\}_{1 \le j \le i}\}\right].
\end{eqnarray}
Then, letting $\calV_{\theta}^k := \{\vone,\dots,\vk: \forall i \in [k], ~ d\langle \vi, \theta \rangle^2 \le \tau_i\} $, we have
\begin{eqnarray}\label{product_eq}
\Exp_{\Prit_0}\left[\frac{\rmd\Prit_u(Z_k ; A_u^k)\cdot \rmd\Prit_s(Z_k;A_s^k)}{\left(\rmd\Prit_0(Z_k) \right)^2} \right] &\le& \sup_{\vone,\dots,\vk \in \calV_{u}^k \cap \calV_s^k}\prod_{i=1}^kg_i(u,s;\{v^{(j)}\}_{1 \le j \le i}).
\end{eqnarray}
\end{lem}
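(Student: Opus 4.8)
\textbf{Proof proposal for Lemma~\ref{Generic_UB_LL}.}
The plan is to exploit the chain-rule factorization of the likelihood ratios together with the fact that, $\Alg$ being deterministic, each query $v^{(i)}$ is an $\calF_{i-1}$-measurable function of $Z_{i-1}$, so the truncation indicators are \emph{predictable}. Since restricting a measure to an event multiplies its density by the indicator of that event, $\rmd\Prit_u(Z_k;A_u^k)/\rmd\Prit_0(Z_k)=\mathbf{1}_{A_u^k}\cdot\rmd\Prit_u(Z_k)/\rmd\Prit_0(Z_k)$, so the left-hand side of~\eqref{product_eq} equals
\begin{equation*}
\Exp_{\Prit_0}\left[\mathbf{1}_{A_u^k}\mathbf{1}_{A_s^k}\cdot\frac{\rmd\Prit_u(Z_k)}{\rmd\Prit_0(Z_k)}\cdot\frac{\rmd\Prit_s(Z_k)}{\rmd\Prit_0(Z_k)}\right].
\end{equation*}
By Lemma~\ref{ConditionalLemma}, conditionally on $Z_{i-1}$ the response $w^{(i)}$ is Gaussian with a mean and covariance that depend on $Z_{i-1}$ only through $v^{(1)},\dots,v^{(i)}$, and the conditional laws under $\Prit_u$, $\Prit_s$, $\Prit_0$ are mutually absolutely continuous; hence each ratio $L_i^u:=\rmd\Prit_u(Z_i\mid Z_{i-1})/\rmd\Prit_0(Z_i\mid Z_{i-1})$ is well-defined and strictly positive, and the chain rule gives $\rmd\Prit_u(Z_k)/\rmd\Prit_0(Z_k)=\prod_{i=1}^k L_i^u$ (similarly for $s$). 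Writing $\chi_i:=\mathbf{1}\{d\langle u,v^{(i)}\rangle^2\le\tau_i\}\mathbf{1}\{d\langle s,v^{(i)}\rangle^2\le\tau_i\}$, which is $\calF_{i-1}$-measurable because $v^{(i)}$ is, we have $\mathbf{1}_{A_u^k}\mathbf{1}_{A_s^k}=\prod_{i=1}^k\chi_i$, so the quantity above is $\Exp_{\Prit_0}\big[\prod_{i=1}^k\chi_i L_i^u L_i^s\big]$.

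The key device is to normalize each conditional-likelihood product by its $\Prit_0$-conditional mean. By the definition of $g_i$ and the fact that the conditional law of $w^{(i)}$ given $Z_{i-1}$ depends only on $v^{(1)},\dots,v^{(i)}$, one has $\Exp_{\Prit_0}[L_i^u L_i^s\mid\calF_{i-1}]=g_i(u,s;\{v^{(j)}\}_{1\le j\le i})$, which is $\calF_{i-1}$-measurable and strictly positive. Set $\tilde L_i:=L_i^u L_i^s/g_i(u,s;\{v^{(j)}\}_{1\le j\le i})$, so $\Exp_{\Prit_0}[\tilde L_i\mid\calF_{i-1}]=1$; peeling off $\tilde L_k,\dots,\tilde L_1$ one at a time via the tower property (each $\prod_{i\le m-1}\tilde L_i$ being $\calF_{m-1}$-measurable) gives $\Exp_{\Prit_0}\big[\prod_{i=1}^k\tilde L_i\big]=1$. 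Factoring the integrand,
\begin{equation*}
\prod_{i=1}^k\chi_i L_i^u L_i^s\;=\;\Big(\mathbf{1}_{A_u^k\cap A_s^k}\prod_{i=1}^k g_i(u,s;\{v^{(j)}\}_{1\le j\le i})\Big)\cdot\prod_{i=1}^k\tilde L_i,
\end{equation*}
and on the event $A_u^k\cap A_s^k$ the realized trajectory $(v^{(1)},\dots,v^{(k)})$ lies in $\calV_u^k\cap\calV_s^k$, so the bracketed factor is at most $\Pi:=\sup_{(v^{(1)},\dots,v^{(k)})\in\calV_u^k\cap\calV_s^k}\prod_{i=1}^k g_i(u,s;\{v^{(j)}\}_{1\le j\le i})$ (and it is $0\le\Pi$ off that event since each $g_i\ge0$). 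Since $\prod_i\tilde L_i\ge0$, this gives $\prod_i\chi_i L_i^u L_i^s\le\Pi\prod_i\tilde L_i$ pointwise; taking $\Exp_{\Prit_0}$ and using $\Exp_{\Prit_0}[\prod_i\tilde L_i]=1$ yields the claim.

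The one point that genuinely has to be gotten right — and where the adaptivity could bite — is the measurability bookkeeping: (i) that $\chi_i$ (equivalently $\mathbf{1}_{A_u^k}$ restricted appropriately) is $\calF_{i-1}$-measurable, which uses both the reduction to a deterministic $\Alg$ and the fact that the truncation events depend only on the queries, not the responses; and (ii) that each $g_i$ is a function of $v^{(1)},\dots,v^{(i)}$ alone, which is exactly what the Gaussian conditional-likelihood computation of Lemma~\ref{ConditionalLemma} delivers. Given those two facts, the normalization $L_i^u L_i^s=g_i\tilde L_i$ is what lets one keep all the $g_i$'s grouped inside a single supremum over $\calV_u^k\cap\calV_s^k$, rather than bounding them factor by factor (which would only give the weaker $\prod_{i=1}^k\sup_{\calV_u^i\cap\calV_s^i}g_i$).
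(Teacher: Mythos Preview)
Your proof is correct. The measurability bookkeeping you flag is exactly right: the predictability of $\chi_i$ and the fact that $g_i$ depends only on $v^{(1)},\dots,v^{(i)}$ (both consequences of the deterministic-$\Alg$ reduction and Lemma~\ref{ConditionalLemma}) are what make the martingale normalization go through, and the pointwise bound $\mathbf{1}_{A_u^k\cap A_s^k}\prod_i g_i\le\Pi$ combined with $\Exp_{\Prit_0}\big[\prod_i\tilde L_i\big]=1$ closes the argument cleanly.

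The paper takes a different route: rather than normalizing to a unit-mean martingale and bounding the prefactor in one stroke, it defines the partial suprema
\[
G_j(v^{(1:j)}):=\sup_{\tilde v^{(j+1)},\dots,\tilde v^{(k)}\in\calV^k_{j+1}}\ \prod_{i=j+1}^k g_i\big(u,s;\{v^{(1:j)},\tilde v^{(j+1:i)}\}\big)
\]
and shows by a one-step conditioning that $\Exp_{\Prit_0}\big[G_j(v^{(1:j)})\cdot\tfrac{\rmd\Prit_u(Z_j)\rmd\Prit_s(Z_j)}{(\rmd\Prit_0(Z_j))^2}\,\I(A^j)\big]$ is nonincreasing in $j$, interpolating between the left-hand side of~\eqref{product_eq} at $j=k$ and $G_0=\Pi$ at $j=0$. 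This is a backward peeling of the same tower property you use, but it absorbs each $g_i$ into the running supremum step by step instead of pulling out all the $g_i$'s at once and normalizing. Your approach is shorter and makes the martingale structure explicit; the paper's approach is more mechanical but perhaps makes clearer why the supremum is joint over $\calV_u^k\cap\calV_s^k$ rather than factorwise. Both lean on the same two facts you identified.
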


	\begin{proof}[Proof of Lemma~\ref{Generic_UB_LL}] 	
	Introduce the shorthand $v^{(1:i)} = \{v^{(j)}\}_{1 \le j \le i}$, with the convention $v^{(1:0)} = \emptyset$. Fixing $u$ and $s$,  $A^j = A^{j}_{u} \cap A^j_s$ for $1 \le j \le k$, with the convention $A^0$ is the entire event space. We also define the ``tail set''
	\begin{eqnarray}
	\calV^k_{j+1}(Z_{j-1}) := \{v^{(j+1)},\dots,\vk: \forall i \in \{j+1,k\}, \theta \in \{u,s\}: d\langle \vi, \theta \rangle^2 \le \tau_i\}.
	\end{eqnarray}
	Finally, we define a partial supremum over the products of the terms $g_i(u,s;\{v^{(j)}\}_{1 \le j \le i} )$: 
	 \begin{eqnarray}
	 G_{j}(v^{(1:j)}) = \sup_{\vjplustil,\dots,\vktil \in \calV^k_{j+1}(Z_{j-1})} \prod_{i=j+1}^k g_i(u,s;\{v^{(1:j)},\widetilde{v}^{(j+1:i)}\}).
	 \end{eqnarray}
	 adopting the convention $ G_{k}(v^{(1:k)}) = 1 $. We observe also that
	 \begin{eqnarray}
	  G_{0}(v^{(1:0)}) =  \sup_{\vonetil,\dots,\vktil \in \calV_u^k \cap \calV_s^k} \prod_{i=1}^k g_i(u,s;\widetilde{v}^{(1:k)}).
	 \end{eqnarray}
	 For any $j \in [k]$, we compute
		\begin{eqnarray*}\label{Main_Inductive_Step_Chi}
		&&\Exp_{\Prit_0}\left[G_{j}(v^{(1:j)}) \frac{\rmd\Prit_u(Z_j)\cdot \rmd\Prit_s(Z_j)}{\left(d\Prit_0(Z_j) \right)^2} \I(A^{j} )\right] \\
		&=&\Exp_{\Prit_0}\left[\Exp\left[G_{j}(v^{(1:j)}) \frac{\rmd\Prit_u(Z_j)\cdot \rmd\Prit_s(Z_j)}{\left(\rmd\Prit_0(Z_j) \right)^2} \I(A^{j} ) \big{|} Z_{j-1} \right]\right] \\
		&=&\Exp_{\Prit_0}\left[\Exp\left[G_{j}(v^{(1:j)}) \frac{\rmd\Prit_u(Z_{j-1})\cdot \rmd\Prit_s(Z_{j-1})}{\left(\rmd\Prit_0(Z_{j-1}) \right)^2} \cdot \frac{\rmd\Prit_u(Z_j|Z_{j-1})\cdot \rmd\Prit_s(Z_j|Z_{j-1})}{\left(\rmd\Prit_0(Z_j|Z_{j-1}) \right)^2} \I(A^{j} ) \big{|} Z_{j-1} \right]\right] \\
		&\overset{(i)}{=}&\Exp_{\Prit_0}\left[\frac{\rmd\Prit_u(Z_{j-1})\cdot \rmd\Prit_s(Z_{j-1})}{\left(\rmd\Prit_0(Z_{j-1}) \right)^2} \cdot \Exp\left[G_{j}(v^{(1:j)}) \frac{\rmd\Prit_u(Z_j|Z_{j-1})\cdot \rmd\Prit_s(Z_j|Z_{j-1})}{\left(\rmd\Prit_0(Z_j|Z_{j-1}) \right)^2} \I(A^{j} ) \big{|} Z_{j-1} \right]\right] \\
		&\overset{(ii)}{=}&\Exp_{\Prit_0}\left[\frac{\rmd\Prit_u(Z_{j-1})\cdot \rmd\Prit_s(Z_{j-1})}{\left(\rmd\Prit_0(Z_{j-1}) \right)^2} \cdot \I(A^{j} )G_{j}(v^{(1:j)})  \Exp\left[\frac{\rmd\Prit_u(Z_j|Z_{j-1})\cdot \rmd\Prit_s(Z_j|Z_{j-1})}{\left(\rmd\Prit_0(Z_j|Z_{j-1}) \right)^2}  \big{|} Z_{j-1} \right]\right]. 
		\end{eqnarray*}
		Where $(i)$ follows since the densities $ \frac{\rmd\Prit_u(Z_{j-1})\cdot \rmd\Prit_s(Z_{j-1})}{\left(\rmd\Prit_0(Z_{j-1}) \right)^2}$ are functions of $Z_{j-1}$, and $(ii)$ follows since $A^j$ and $G_j$ depend only on $\vone,\dots,\vj$, which is $Z_{j-1}$ measurable because $\{\vone,\wone,\dots,v^{(j-1)},w^{(j-1)}\}$ determine $\vj$. Finally, since Lemma~\ref{ConditionalLemma} implies that the conditional law $\Prit_{\theta}[\cdot|Z_{j-1}]$ is independent of $\wone,\dots,w^{(j-1)}$ given $\vone,\dots,v^{(j-1)}$, we have
		\begin{multline}
		\Exp\left[\frac{\rmd\Prit_u(Z_j|Z_{j-1})\cdot \rmd\Prit_s(Z_j|Z_{j-1})}{\left(\rmd\Prit_0(Z_j|Z_{j-1}) \right)^2}  \big{|} Z_{j-1} \right] \\
		= \Exp\left[\frac{\rmd\Prit_u(Z_j|Z_{j-1})\cdot \rmd\Prit_s(Z_j|Z_{j-1})}{\left(\rmd\Prit_0(Z_j|Z_{j-1}) \right)^2}  \big{|} \{\vone,\dots,\vj\}\right] := g_j(u,s;v^{(1:j)}).
		\end{multline}
		Moreover, on $A^j$, we have that $d\langle v^{(j)},\theta\rangle^2 \le \tau_j$ for $\theta \in \{u,s\}$, which implies
		\begin{eqnarray*}
		 \I(A^{j} )G_{j}(v^{(1:j)})g_j(u,s;v^{(1:j)}) &=& \I(A^{j} )g_j(u,s;v^{(1:j)}) \sup_{\vjplustil,\dots,\vktil \in \calV^k_{j+1}(Z_{j-1})} \prod_{i=j+1}^k g_i(u,s;\{v^{(1:j)},\widetilde{v}^{(j+1:i)}\})\\
		 &\le& \I(A^{j} ) \sup_{\widetilde{v}^{(j)},\dots,\vktil \in \calV^k_{j}(Z_{j-1})} \prod_{i=j}^k g_i(u,s;\{v^{(1:j-1)},\widetilde{v}^{(j+1:i)}\})\\
		 &=& \I(A^{j} ) G_{j-1}(v^{(1:j-1)}) \\
		 &\le&  \I(A^{j-1} ) G_{j-1}(v^{(1:j-1)}).
		\end{eqnarray*}
		where the last inequality follows since $A^j \subset A^{j-1}$. Altogether, we have proven
		\begin{eqnarray}
		\Exp_{\Prit_0}\left[G_{j}(v^{(1:j)}) \frac{\rmd\Prit_u(Z_j)\cdot \rmd\Prit_s(Z_j)}{\left(\rmd\Prit_0(Z_j) \right)^2} \I(A^{j} )\right] \le \Exp_{\Prit_0}\left[G_{j-1}(v^{(1:j-1)}) \frac{\rmd\Prit_u(Z_{j-1})\cdot \rmd\Prit_s(Z_{j-1})}{\left(\rmd\Prit_0(Z_{j-1}) \right)^2} \I(A^{j-1} )\right].
		\end{eqnarray}
		To conclude, we use the fact that $G_{k}(v^{(1:k)}) = 1$ and that $Z_0= \emptyset$ to render
		\begin{eqnarray*}
		 \Exp_{\Prit_0}\left[ \frac{\rmd\Prit_u(Z_k)\cdot \rmd\Prit_s(Z_k)}{\left(\rmd\Prit_0(Z_k) \right)^2} \I(A^{k} )\right] 
		&=& \Exp_{\Prit_0}\left[ G_{k}(v^{(1:k)}) \frac{\rmd\Prit_u(Z_k)\cdot \rmd\Prit_s(Z_k)}{\left(\rmd\Prit_0(Z_k) \right)^2} \I(A^{k} )\right]\\
		&\le& \Exp_{\Prit_0}\left[ G_{0}(v^{(1:0)}) \frac{\rmd\Prit_u(Z_0)\cdot \rmd\Prit_s(Z_0)}{\left(\rmd \Prit_0(Z_0) \right)^2} \I(A^{0} )\right] \\
		&=& \sup_{\vonetil,\dots,\vktil \in \calV^k_{j+1}(Z_{j-1})} \prod_{i=1}^k g_i(u,s;\widetilde{v}^{(1:k)}),
		\end{eqnarray*}
		as needed.

	\end{proof}

\subsection{Chi-Squared Computations}

We begin with the following Lemma:
\begin{lem}\label{Chi_Squared_Computation_Lemma} The following identity holds,
\begin{eqnarray}
\Exp_{\Prit_0}\left[\frac{\rmd\Prit_{u}(Z_i|Z_{i-1})\rmd\Prit_{s}(Z_i|Z_{i-1})}{(\rmd\Prit_0(Z_i|Z_{i-1}))^2}\right] = \exp\left\{\lambda^2 d \langle \vi,u\rangle\langle \vi,s \rangle\big(u^{\top}P_{i-1}\Sigma_i^{\dagger}P_{i-1}s\big)\right\}.
\end{eqnarray}
\end{lem}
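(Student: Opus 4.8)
The plan is to treat the claim as a routine Gaussian moment computation, resting entirely on the conditional description of $w^{(i)}$ supplied by Lemma~\ref{ConditionalLemma}. Throughout, the identity is to be read conditionally on $Z_{i-1}$ (equivalently, since $\Alg$ is deterministic, conditionally on the fixed query $v^{(i)}$, hence on $P_{i-1}$ and $\Sigma_i$). By Lemma~\ref{ConditionalLemma}, under $\Prit_\theta$ the conditional law of $w^{(i)}$ given $Z_{i-1}$ is $\mathcal{N}(\mu_\theta,\tfrac{1}{d}\Sigma_i)$ with $\mu_\theta := \lambda(\theta^\top v^{(i)})P_{i-1}\theta$, for $\theta\in\{u,s,0\}$ (so $\mu_0=0$). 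The two structural facts I would record first are that all three laws share the \emph{same} covariance $\tfrac{1}{d}\Sigma_i$, and that all three means lie in $\mathrm{range}(P_{i-1})=(\ker\Sigma_i)^\perp$; consequently the three Gaussians, viewed as densities with respect to Lebesgue measure on $(\ker\Sigma_i)^\perp$, carry identical normalizing constants, and the integrand $\rmd\Prit_u(Z_i|Z_{i-1})\,\rmd\Prit_s(Z_i|Z_{i-1})/(\rmd\Prit_0(Z_i|Z_{i-1}))^2$ is a genuine finite quantity despite the degeneracy of $\Sigma_i$ (the conditional law of $Z_i$ given $Z_{i-1}$ is a point mass at the deterministic $v^{(i)}$ times the law of $w^{(i)}$, so this ratio is just the ratio of the $w^{(i)}$-densities).

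First I would write the integrand explicitly. Using $\|v\|_{\Sigma_i}^2=v^\top\Sigma_i^\dagger v$, its logarithm equals
\[
-\tfrac{d}{2}\|w-\mu_u\|_{\Sigma_i}^2 - \tfrac{d}{2}\|w-\mu_s\|_{\Sigma_i}^2 + d\|w\|_{\Sigma_i}^2 ,
\]
and expanding each quadratic form (using symmetry of $\Sigma_i^\dagger$), the three $w^\top\Sigma_i^\dagger w$ terms cancel, leaving $d\,w^\top\Sigma_i^\dagger(\mu_u+\mu_s)-\tfrac{d}{2}\mu_u^\top\Sigma_i^\dagger\mu_u-\tfrac{d}{2}\mu_s^\top\Sigma_i^\dagger\mu_s$. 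Next I would take the expectation under $\Prit_0$, i.e.\ over $w\sim\mathcal{N}(0,\tfrac{1}{d}\Sigma_i)$, via the Gaussian moment generating function $\Exp[e^{w^\top b}]=e^{\frac{1}{2d}b^\top\Sigma_i b}$ with $b=d\,\Sigma_i^\dagger(\mu_u+\mu_s)$; since $\Sigma_i^\dagger\Sigma_i\Sigma_i^\dagger=\Sigma_i^\dagger$, this contributes the factor $e^{\frac{d}{2}(\mu_u+\mu_s)^\top\Sigma_i^\dagger(\mu_u+\mu_s)}$. Multiplying by the leftover $e^{-\frac{d}{2}\mu_u^\top\Sigma_i^\dagger\mu_u-\frac{d}{2}\mu_s^\top\Sigma_i^\dagger\mu_s}$ and expanding the square, the diagonal contributions cancel and what survives is exactly $\exp\!\big(d\,\mu_u^\top\Sigma_i^\dagger\mu_s\big)$.

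Finally I would substitute $\mu_u=\lambda(u^\top v^{(i)})P_{i-1}u$ and $\mu_s=\lambda(s^\top v^{(i)})P_{i-1}s$ and use that $P_{i-1}$ is symmetric, obtaining
\[
d\,\mu_u^\top\Sigma_i^\dagger\mu_s = \lambda^2 d\,\langle v^{(i)},u\rangle\,\langle v^{(i)},s\rangle\;u^\top P_{i-1}\Sigma_i^\dagger P_{i-1}s ,
\]
which is precisely the exponent in the statement. I do not expect a substantive obstacle here: the only point needing real care — and the only place a careless treatment could fail — is the rank deficiency of $\Sigma_i$. One must interpret every quadratic form through the Moore--Penrose pseudo-inverse $\Sigma_i^\dagger$, work with densities supported on $(\ker\Sigma_i)^\perp$, and invoke (as already noted after Lemma~\ref{ConditionalLemma}) that the mean vectors $\mu_u,\mu_s$ lie in $\mathrm{range}(\Sigma_i)=\mathrm{range}(P_{i-1})$, so that identities such as $\Sigma_i^\dagger\Sigma_i\Sigma_i^\dagger=\Sigma_i^\dagger$ and $\Sigma_i\Sigma_i^\dagger\mu=\mu$ apply to the objects in question.
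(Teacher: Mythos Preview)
Your proposal is correct and follows essentially the same route as the paper: both arguments reduce to the Gaussian moment-generating-function computation under the shared covariance $\tfrac{1}{d}\Sigma_i$, with the diagonal quadratic terms cancelling to leave only the cross term $d\,\mu_u^\top\Sigma_i^\dagger\mu_s$. The only cosmetic difference is that the paper first writes each likelihood ratio $\rmd\Prit_u/\rmd\Prit_0$ separately and then multiplies, whereas you expand the combined log-integrand directly; the algebra and the handling of the rank-deficient $\Sigma_i$ via the pseudo-inverse are the same.
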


\begin{proof}[Proof of Lemma~\ref{Chi_Squared_Computation_Lemma}]
	We have that
	\begin{eqnarray*}
	 \frac{\rmd\Prit_{u}(Z_i|Z_{i-1})}{\rmd\Prit_0(Z_i|Z_{i-1})}
	&=& \exp\left\{-\frac{1}{2}\left\|\lambda P_{i-1}uu^{\top}\vi-P_{i-1}\wi\right\|_{\Sigma_i/d}^2  +\frac{1}{2}\left\|P_{i-1}\wi\right\|^2_{\Sigma_i/d}\right\}\\
	&=& \exp\left\{-\frac{d\lambda^2}{2}\left\|P_{i-1}uu^{\top}\vi\right\|_{\Sigma_i}^2  + d\lambda u^{\top}\vi u^{\top}P_{i-1}\Sigma_{i}^{\dagger}P_{i-1}\wi\right\}.
	\end{eqnarray*}
	Thus,
	\begin{equation}\label{factored_exp_likelihood}
	\begin{aligned}
	 \Exp_{\Prit_0}\left[\frac{\rmd\Prit_{u}(Z_i|Z_{i-1})\rmd\Prit_{s}(Z_i|Z_{i-1})}{(\rmd\Prit_0(Z_i|Z_{i-1}))^2}\right]
	&= e^{-\frac{d^2\lambda}{2}\left(\left\|P_{i-1}uu^{\top}\vi\right\|_{\Sigma_i}^2 + \left\|P_{i-1}ss^{\top}\vi\right\|_{\Sigma_i}^2\right)} \\
	&~~~\times \Exp_{\Prit_0}\left[e^{d\lambda v^{(i)\top} (uu^\top + ss^\top) P_{i-1}\Sigma_{i}^{\dagger}P_{i-1}\wi}\right].
	\end{aligned}
	\end{equation}
	Now, note that if $Z \sim \calN(0,I_d)$ is a standard gaussian vector, then we have that $\wi \overset{d}{=} \frac{1}{\sqrt{d}}\Sigma_i^{1/2}Z$, under $\Prit_0$. Thus,
	\begin{equation}\label{exp_likelihood_2}
	\begin{aligned}
	& \Exp_{\Prit_0}\left[e^{d\lambda v^{(i)\top} (uu^\top + ss^\top) P_{i-1}\Sigma_{i}^{\dagger}P_{i-1}\wi}\right] \\
	&= \Exp_{Z} \left[e^{\sqrt{d}\lambda (\langle \vi, u \rangle u  + \langle \vi,s \rangle s)^{\top}P_{i-1}\Sigma_{i}^{\dagger/2}P_{i-1}Z}\right]\\
	 &= e^{\frac{d\lambda^2}{2} \left\|P_{i-1}\Sigma_{i}^{\dagger/2}P_{i-1}(\langle \vi, u \rangle u + \langle \vi,s \rangle s)\right\|^2}\\
	 &\overset{(i)}{=} e^{\frac{d\lambda^2}{2} \left\|P_{i-1}(\langle \vi, u \rangle u + \langle \vi,s \rangle s)\right\|_{\Sigma_i}^2}\\
	  &= e^{\frac{d\lambda^2}{2} \left(\left\|\langle \vi, u \rangle P_{i-1} u\right\|^2_{\Sigma_i} + \left\|\langle \vi, s \rangle P_{i-1}s\right\|^2_{\Sigma_i}\right) + d\lambda^2 \langle \vi, u \rangle \langle \vi , s \rangle u^{\top}P_{i-1}\Sigma_i^{\dagger}P_{i-1}s },
	\end{aligned}
	\end{equation}
	where $(i)$ uses the fact that $P_{i-1}$ is symmetric, idempotent, and commutes with $\Sigma_i$.
	Combining Equations~\eqref{factored_exp_likelihood} and~\eqref{exp_likelihood_2}, we have
	\begin{eqnarray*}
	 \Exp_{\Prit_0}\left[\frac{\rmd\Prit_{u}(Z_i|Z_{i-1})\rmd\Prit_{s}(Z_i|Z_{i-1})}{(\rmd\Prit_0(Z_i|Z_{i-1}))^2}\right]
	&=& e^{d\lambda^2 \langle \vi, u \rangle \langle \vi , s \rangle u^{\top}P_{i-1}\Sigma_i^{\dagger}P_{i-1}s}.
	\end{eqnarray*}
\end{proof}

\begin{prop}\label{Inner_product_constraints_cor} Fix $\tau_1,\dots,\tau_T$, and let $A^T_u$ denote the event that $d\langle v_i,u\rangle^2 \le \tau_i$ for all $i \in [T]$. Then, 
\begin{eqnarray}
 \Exp_{\Prit_0}\left[\frac{\rmd\Prit_u(Z_T ; A^T_{u_1})\rmd\Prit_s(Z_T;A^T_{u_2})}{\left(\rmd\Prit_0(Z_T )\right)^2}  \right] \le e^{\lambda^2\{|\langle u_1 , u_2 \rangle| \sum_{i=1}^T \tau_i + \frac{1}{d}(\sum_{i=1}^T \tau_i)^2\}}
\end{eqnarray}
\end{prop}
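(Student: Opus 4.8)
The plan is to combine the generic second-moment bound of Lemma~\ref{Generic_UB_LL} (specialized as Lemma~\ref{Detection_LL_UB}) with the exact Gaussian likelihood computation of Lemma~\ref{Chi_Squared_Computation_Lemma}, and then bound the resulting exponent on the truncation events $A^T_{u_1}$ and $A^T_{u_2}$. First, Lemma~\ref{Generic_UB_LL} bounds the left-hand side by
\begin{eqnarray*}
\sup_{v^{(1)},\dots,v^{(T)}\in\calV_{u_1}^T\cap\calV_{u_2}^T}\ \prod_{i=1}^T g_i\big(u_1,u_2;\{v^{(j)}\}_{1\le j\le i}\big),
\end{eqnarray*}
and Lemma~\ref{Chi_Squared_Computation_Lemma} gives $g_i=\exp\{\lambda^2 d\,\langle v^{(i)},u_1\rangle\langle v^{(i)},u_2\rangle\cdot u_1^\top P_{i-1}\Sigma_i^\dagger P_{i-1}u_2\}$. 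Hence the whole problem reduces to bounding $\sum_{i=1}^T \lambda^2 d\,\langle v^{(i)},u_1\rangle\langle v^{(i)},u_2\rangle\,\big(u_1^\top P_{i-1}\Sigma_i^\dagger P_{i-1}u_2\big)$ uniformly over $\calV_{u_1}^T\cap\calV_{u_2}^T$.

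The crux is an explicit formula for the quadratic form $u_1^\top P_{i-1}\Sigma_i^\dagger P_{i-1}u_2$ that isolates $\langle u_1,u_2\rangle$. Using that the queries $v^{(1)},\dots,v^{(T)}$ are orthonormal (as assumed without loss of generality in Section~\ref{Likelihood_Comp}), one has $P_{i-1}v^{(i)}=v^{(i)}$, so $\Sigma_i=P_{i-1}+v^{(i)}v^{(i)\top}$, whose pseudo-inverse is $\Sigma_i^\dagger=P_{i-1}-\tfrac12 v^{(i)}v^{(i)\top}$; therefore $P_{i-1}\Sigma_i^\dagger P_{i-1}=P_{i-1}-\tfrac12 v^{(i)}v^{(i)\top}$, and since $P_{i-1}=I-\sum_{j=1}^{i-1}v^{(j)}v^{(j)\top}$,
\begin{eqnarray*}
u_1^\top P_{i-1}\Sigma_i^\dagger P_{i-1}u_2 = \langle u_1,u_2\rangle-\sum_{j=1}^{i-1}\langle v^{(j)},u_1\rangle\langle v^{(j)},u_2\rangle-\tfrac12\langle v^{(i)},u_1\rangle\langle v^{(i)},u_2\rangle.
\end{eqnarray*}
On $\calV_{u_1}^T\cap\calV_{u_2}^T$, Cauchy--Schwarz applied to $d\langle v^{(j)},u_1\rangle^2\le\tau_j$ and $d\langle v^{(j)},u_2\rangle^2\le\tau_j$ gives $|\langle v^{(j)},u_1\rangle\langle v^{(j)},u_2\rangle|\le\tau_j/d$, hence $d\,|\langle v^{(i)},u_1\rangle\langle v^{(i)},u_2\rangle|\le\tau_i$ and $|u_1^\top P_{i-1}\Sigma_i^\dagger P_{i-1}u_2|\le|\langle u_1,u_2\rangle|+\tfrac1d\sum_{j=1}^i\tau_j$.

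Plugging these two estimates in,
\begin{eqnarray*}
\sum_{i=1}^T\lambda^2 d\,\langle v^{(i)},u_1\rangle\langle v^{(i)},u_2\rangle\,u_1^\top P_{i-1}\Sigma_i^\dagger P_{i-1}u_2 &\le& \lambda^2\sum_{i=1}^T\tau_i\Big(|\langle u_1,u_2\rangle|+\tfrac1d\sum_{j=1}^i\tau_j\Big)\\
&\le& \lambda^2|\langle u_1,u_2\rangle|\sum_{i=1}^T\tau_i+\frac{\lambda^2}{d}\Big(\sum_{i=1}^T\tau_i\Big)^2,
\end{eqnarray*}
where the last step uses $\sum_i\tau_i\sum_{j\le i}\tau_j\le(\sum_i\tau_i)^2$; exponentiating proves the proposition. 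I expect the quadratic-form computation to be the only real obstacle: one has to notice that orthonormality of the queries --- legitimate only through the reduction in Section~\ref{Likelihood_Comp} --- makes $\Sigma_i^\dagger$ fully explicit and lets $\langle u_1,u_2\rangle$ factor out, while the error terms are of order $\tau_j/d$ and so are absorbed by the truncation. The crude bound $|u_1^\top P_{i-1}\Sigma_i^\dagger P_{i-1}u_2|\le\|u_1\|\|u_2\|=1$ from Equation~\eqref{InverseInequality} is too weak here: it would only yield $e^{\lambda^2\sum_i\tau_i}$ and would lose the factor $|\langle u_1,u_2\rangle|$, which is exactly what makes this conditional-second-moment quantity far smaller than the $\chi^2+1$ divergence once $u_1,u_2$ are averaged over the sphere.
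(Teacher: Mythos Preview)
Your proof is correct and follows essentially the same approach as the paper. Both combine Lemma~\ref{Generic_UB_LL} with Lemma~\ref{Chi_Squared_Computation_Lemma}, compute $P_{i-1}\Sigma_i^\dagger P_{i-1}=I-\sum_{j<i}v^{(j)}v^{(j)\top}-\tfrac12 v^{(i)}v^{(i)\top}$ explicitly via orthonormality of the queries, and bound the resulting inner products using the truncation constraints; the only cosmetic difference is that the paper multiplies out the product $\langle v^{(i)},u_1\rangle\langle v^{(i)},u_2\rangle\cdot u_1^\top P_{i-1}\Sigma_i^\dagger P_{i-1}u_2$ term-by-term (and drops the negative $-\tfrac12(v^{(i)\top}u_1)^2(v^{(i)\top}u_2)^2$ contribution), whereas you first bound the two factors in absolute value and then multiply, yielding a slightly looser intermediate step that collapses to the same final exponent.
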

\begin{proof}[Proof of Proposition~\ref{Inner_product_constraints_cor}] We have that
\begin{eqnarray*}
(P_{i-1}\Sigma_i P_{i-1})^{\dagger} &=& \left(P_{i-1}(I + v^{(i)} v^{(i)\top}) P_{i-1}\right)^{\dagger} \\
&=& P_{i-1}(I + v^{(i)} v^{(i)\top})^{-1} P_{i-1} \\
&=& P_{i-1}(I - \frac{1}{2} v^{(i)}v^{(i)\top}) P_{i-1} \\
&=& P_{i-1}- \frac{1}{2} v^{(i)}v^{(i)\top} \\
&=& I - \sum_{j=1}^{i-1}v^{(j)}v^{(j)\top}- \frac{1}{2}v^{(i)}v^{(i)\top},
\end{eqnarray*}
where we use the fact that $P_{i-1},v^{(i)}v^{(i)\top}$ and $I$ commute throughout, and orthonormality of $\vone,\dots,\vi$. Under $A^k_u\cap A^k_s$, we have that $|\langle \vi, \theta \rangle| \le \sqrt{\tau_i/d}$ for $\theta \in \{u,s\}$, which implies that
\begin{eqnarray*}
&& (v^{(i)\top} u)(v^{(i)\top} s)\{u^{\top}P_{i-1}\Sigma_i^{\dagger}P_{i-1}s\} \\
&=& (v^{(i)\top} u)(v^{(i)\top} s)u^{\top}s - \frac{1}{2} (v^{(i)\top} u)^2(v^{(i)\top} s)^2 - (v^{(i)\top} u)(v^{(i)\top} s)\sum_{j=1}^{i-1}(v^{(j)\top} s)(v^{(j)\top} u)\\
&\le& \frac{\tau_i}{d} |u^{\top}s| + \frac{1}{d^2}\sum_{j=1}^{i-1}\tau_i\tau_j.
\end{eqnarray*}
Thus, by Lemma~\ref{Chi_Squared_Computation_Lemma}, we have for any $(\vone,\dots,\vk) \in \calV^k_u \cap \calV^k_s$ that
\begin{eqnarray}
g_i(u,s,0;\{\vj\}_{1\le j \le i}) \le e^{\lambda^2\{\tau_i |u^{\top}s| + d^{-1}\sum_{j=1}^{i-1}\tau_i\tau_j\}}.
\end{eqnarray}
Equation~\eqref{product_eq} completes the demonstration, since for any $(\vone,\dots,\vk) \in \calV^k_u \cap \calV^k_s$,
\begin{eqnarray}
\prod_{i=1}^k g_i(u,s,0;\{\vj\}_{1\le j \le i}) \le e^{\lambda^2\sum_{i=1}^k\{\tau_i |u^{\top}s| + d^{-1}\sum_{j=1}^{i-1}\tau_i\tau_j\}}
\le e^{\lambda^2\left(\sum_{i=1}^k \tau_i |u^{\top} s| + d^{-1}(\sum_{i=1}^k \tau_i)^2\right)}.
\end{eqnarray}
\end{proof}

\section{Additional Proofs}

\subsection{Proof of Theorem~\ref{Main_estimation_theorem} \label{sec:Main_estimation_thm_proof}}
For $k = 1$, we immediately have
\begin{eqnarray}\label{k_eq_1_case}
\Exp_{\theta\sim\calP}\Pr_{\theta}\left[d\langle \vone,\theta \rangle^2 > \tau_{1}\right] \le \sup_{v \in \calS^{d-1}}\Exp_{\theta\sim\calP}\left[d\langle v,\theta \rangle^2 > \tau_{1}\right]. 
\end{eqnarray}
For $k \in \{2,\dots,T+1\}$, we define the action space $\calA = \calS^{d-1}$, and loss function $\calL(\vkplus,\theta) = \I(\langle \vkplus,\theta \rangle \ge \tau_{k+1})$. Applying Proposition~\ref{chi_sq_fano_prop}, then the inequality~\eqref{Chi_Plus_1_Eq}, we have
\begin{multline}\label{chi_sq_main_recursion}
\Exp_{\theta\sim\calP}\Pr_{\theta}\left[d\langle \vkplus,\theta \rangle^2 > \tau_{k}; A_{\theta}^{k}\right] \\ 
\le
\sup_{u \in \calS^{d-1}}\Pr_{\theta \sim \calP}\left[d\langle u,\theta \rangle^2 \ge \tau_{k}\right] + \sqrt{\Exp_{\theta \sim \calP}\Exp_{\Prit_0}\left[\left(\frac{\mathrm{d}\Prit_\theta(Z_{k} ; A_\theta^{k})}{\mathrm{d}\Prit_0(Z_{k})}\right)^2 \right]\sup_{v \in \calS^{d-1}}\Pr_{\theta \sim \calP}\left[d\langle v,\theta \rangle^2 \ge \tau_{k}\right] }\\
\le
\sup_{u \in \calS^{d-1}}\Pr_{\theta \sim \calP}\left[d\langle u,\theta \rangle^2 \ge \tau_{k}\right] + 
e^{\frac{\lambda^2}{2}\sum_{i=1}^{k} \tau_i}\sqrt{\sup_{v \in \calS^{d-1}}\Pr_{\theta \sim \calP}\left[d\langle v,\theta \rangle^2 \ge \tau_{k}\right] }\\
\le 2
e^{\frac{\lambda^2}{2}\sum_{i=1}^k \tau_i}\sqrt{\sup_{v \in \calS^{d-1}}\Pr_{\theta \sim \calP}\left[d\langle v,\theta \rangle^2 \ge \tau_{k}\right] }.
\end{multline}
Hence,
\begin{align*}
\Exp_{\theta \sim \calP}\Pr_{\theta}&\left[\exists k \in [T+1]: \langle \vk , \theta \rangle^2 > \frac{\tau_k}{d}\right]\\ 
&= \sum_{k=1}^{T+1} \Exp_{\theta \sim \calP}\Pr_{\theta}\left[\left\{\langle \vk , \theta \rangle^2 > \frac{\tau_k}{d} \right\}\cap \left\{\forall j < k, \langle \vj , \theta \rangle^2 \le \frac{\tau_j}{d}\right\}\right ]\\
&= \sum_{k=1}^{T+1} \Exp_{\theta \sim \calP}\Pr_{\theta}\left[\left\{\langle \vk , \theta \rangle^2 > \frac{\tau_k}{d} \right\};A_{\theta}^{k-1}\right]\\
&\overset{(i)}{\le} \sup_{v \in \calS^{d-1}}\Pr_{\theta \sim \calP}[d\langle v, \theta \rangle^2 \ge \tau_1] + \sum_{k=1}^{T} \Exp_{\theta \sim \calP}\Pr_{\theta}\left[\left\{\langle \vkplus , \theta \rangle^2 > \frac{\tau_{k+1}}{d} \right\};A_{\theta}^{k-1}\right]\\
&\overset{(ii)}{\le} \sup_{v \in \calS^{d-1}}\Pr_{\theta \sim \calP}[d\langle v, \theta \rangle^2 \ge \tau_1] + 2\sum_{k=1}^{T}
e^{\frac{\lambda^2}{2}\sum_{i=1}^k \tau_i}\sqrt{\sup_{v \in \calS^{d-1}}\Pr_{\theta \sim \calP}\left[d\langle v,\theta \rangle^2 \ge \tau_{k+1}\right] }.
\end{align*}
where $(i)$ uses Equation~\eqref{k_eq_1_case}, and $(ii)$ uses~\eqref{chi_sq_main_recursion}. 
The theorem follows from a union bound by summing up the above display and combining with Equation~\eqref{k_eq_1_case}.


\subsection{Proof of Corollary~\ref{Main_Cor_estimation}\label{Main_Cor_Proof}}

Set $\tau_1 = 2(\sqrt{\log(1/\delta)} + 1)^2$, so that
\begin{eqnarray*}
\frac{1}{2}(\sqrt{\tau_1} - \sqrt{2})^2 = \log(1/\delta).
\end{eqnarray*}
Now, define $\tau_k \ge \sqrt{2}$ via
\begin{eqnarray*}
\frac{1}{2}(\sqrt{\tau_k} - \sqrt{2})^2 = \lambda^2\sum_{i=1}^{k-1}\tau_i + (k-1)\tau_1.
\end{eqnarray*}
Then, using the fact that $\Pr_{\theta \sim \calP}[d\langle v, \theta \rangle^2 \ge \tau] \le \exp(\frac{-1}{2}(\sqrt{\tau} - \sqrt{2})^2)$ from Lemma~\ref{SphereConcentation}, Theorem~\ref{Main_estimation_theorem} implies that
\begin{eqnarray*}
&& \Exp_{\theta \sim \calP}\Pr_{\theta}\left[\exists k \in [T+1]: \langle \vk , \theta \rangle^2 > \frac{\tau_k}{d}\right] \\
&\le&  \exp(\frac{-1}{2}(\sqrt{\tau_1} - \sqrt{2})) + \sum_{k=2}^{T+1} \sqrt{\exp(\lambda^2 \sum_{i=1}^{k-1}\tau_i - \frac{1}{2}(\sqrt{\tau_k} - \sqrt{2})^2)}\\
&=&  \exp(\frac{-1}{2}(\sqrt{\tau_1} - \sqrt{2})) + \sum_{k=2}^{T+1} \exp(- \frac{k-1}{2}\tau_1)\\
&\le&  \exp(\frac{-1}{2}(\sqrt{\tau_1} - \sqrt{2})) + \sum_{k=2}^{T+1} \exp(- \frac{k-1}{2}(\tau_1-\sqrt{2})^2)\\
&=&  \delta + \sum_{k=2}^{T+1} \delta^{k-1} \le \frac{2\delta}{1-\delta}.
\end{eqnarray*}
We now bound that rate at which our $\tau_k$ increase. Since $\tau_i$ are non-decreasing, we have
\begin{eqnarray*}
\frac{1}{2}(\sqrt{\tau_k} - \sqrt{2})^2 = \frac{\tau_k}{2}(1 - \sqrt{2/\tau_k}) \ge \frac{\tau_k}{2}(1 - \sqrt{2/\tau_1}).
\end{eqnarray*}
And thus,
\begin{eqnarray*}
\tau_k \le \frac{2}{1 - \sqrt{2/\tau_1}}\cdot(\lambda^2\sum_{i=1}^{k-1}\tau_i + (k-1)\tau_1) \le \frac{2(\lambda^2 + 1)}{1-\sqrt{2/\tau_1}}\sum_{i=1}^{k-1}\tau_i.
\end{eqnarray*}
For ease, set $\alpha = \frac{2(\lambda^2 + 1)}{1-\sqrt{2/\tau_1}}$, and consider the comparison sequence $\tau_1' = \tau_1$, and $\tau_k' = \alpha\sum_{i=1}^{k-1}\tau_i'$. Then $\tau_k' \ge \tau_k$, and moreover, $\tau_k' \ge \alpha \tau_{k-1}' $, which implies that
\begin{eqnarray*}
\sum_{i=1}^{k-1}\tau_i' = \tau_{k-1}'\sum_{i=1}^{k-1}\frac{\tau_i'}{\tau_{k-1}'} \le \tau_{k-1}\sum_{i=0}^{k-2}\alpha^{-i} \le \frac{\tau_{k-1}'}{1-1/\alpha}.
\end{eqnarray*}
Thus, $\tau_k \le \tau_{k}' = \alpha\sum_{i=1}^{k-1}\tau_i' \le \frac{\alpha}{1 - 1/\alpha}\tau_{k-1}' \le \tau_1'\left(\frac{\alpha}{1 - 1/\alpha} \right)^{k-1} = \tau_1 \left(\frac{\alpha}{1 - 1/\alpha} \right)^{k-1}$. Finally, we bound
\begin{eqnarray*}
&& \frac{\alpha}{1 - 1/\alpha} = 2\lambda^2 \cdot \frac{1 + 1/\lambda^2}{(1 - \frac{1 - \sqrt{2/\tau_1}}{2(\lambda^2 + 1)})(1 - \sqrt{2/\tau_1})} \\
&\le& 2\lambda^2 \cdot \frac{1 + 1/\lambda^2}{(1 - 1/2\lambda^2)(1 - \sqrt{1/(1+\sqrt{\log(1/\delta)} )^2})}\\
&=& 2\lambda^2 \cdot \frac{1 + 1/\lambda^2}{(1 - 1/2\lambda^2)(1 - \sqrt{1/(1+ \log(1/\delta)})}\\
&=& 2\lambda^2 \cdot c(\lambda,\delta).
\end{eqnarray*}
which implies that
\begin{eqnarray*}
\tau_k \le \tau_1 \left(2\lambda^2 \cdot c(\lambda,\delta)\right)^{k-1} = 2(\sqrt{\log(1/\delta)} + 1)^2\left(2\lambda^2 \cdot c(\lambda,\delta)\right)^{k-1}.
\end{eqnarray*}

	\subsection{Proof of Proposition~\ref{estimation_cor_2}~\label{Est_Cor_proof}}
	Fix any $\lambda \ge 1$. Recall the function  $c(\lambda,\delta) := (1 + 1/\lambda^2)\left\{(1 - 1/2\lambda^2)(1 - 1/\sqrt{1+\log(1/\delta)}\right\}^{-1}$ from Corollary~\ref{Main_Cor_estimation}. For $\delta \le 1/e$ and $\lambda \ge 1$, we have
	 \begin{eqnarray*}
	 c(\lambda,\delta) \le (1 + 1/\lambda^2)\left\{(1 - 1/2\lambda^2)(1 - \sqrt{1+\log(1/\delta)}\right\}^{-1} \le \frac{2}{(1/2)(1 - 1/\sqrt{2})} := c'.
	 \end{eqnarray*}
	Next, fix any $\eta \ge 0$ and choose $\delta$ such that
	\begin{eqnarray}\label{delta_cor_eq}
	\log (1/\delta) \le \left(\sqrt{\frac{d \eta}{2 \left(2\lambda^2 \cdot c'\right)^{T}} } - 1 \right)^2.
	\end{eqnarray}
	If we suppose that
	\begin{eqnarray}\label{eta_eq}
	\sqrt{\frac{d \eta}{2 \left(2\lambda^2 \cdot c'\right)^{T}} } \ge 2.
	\end{eqnarray}
	This implies that $\delta \le 1/e$, so $c'\ge c(\lambda,\delta)$ and thus 
	\begin{eqnarray}
	\eta \ge (\left(2\lambda^2 \cdot c'\right)^{T}\cdot \frac{2\left(\sqrt{\log(1/\delta)} + 1\right)^2}{d} \ge (\left(2\lambda^2 \cdot c(\lambda,\delta)\right)^{T}\cdot \frac{2\left(\sqrt{\log(1/\delta)} + 1\right)^2}{d}.
	\end{eqnarray}
	Then
	\begin{eqnarray*}
	&&\Exp_{\theta \sim \calP}\Pr_{\theta}\left[ \langle \widehat{v}, \theta \rangle^{2} \ge \eta \right] \\
	&\le& \Exp_{\theta \sim \calP}\Pr_{\theta}\left[ \langle \widehat{v}, \theta \rangle^{2} \ge (\left(2\lambda^2 \cdot c(\lambda,\delta)\right)^{T}\cdot \frac{2\left(\sqrt{\log(1/\delta)} + 1\right)^2}{d} \right] \\
	&\le& \Exp_{\theta \sim \calP}\Pr_{\theta}\left[ \langle \vk, \theta \rangle^{2} \ge (\left(2\lambda^2 \cdot c(\lambda,\delta)\right)^{k-1}\cdot \frac{2\left(\sqrt{\log(1/\delta)} + 1\right)^2}{d} \forall k \in [T+1]\right] \\
	&\overset{(i)}{\le}&\frac{2\delta}{1-\delta} \overset{(ii)}{=} \frac{2\delta}{1-1/e},
	\end{eqnarray*}
	where $(i)$ uses Corollary~\ref{Main_Cor_estimation} and $(ii)$ uses that $\delta \le 1/e$. Moreover, by Equation~\eqref{delta_cor_eq}, we have
	\begin{eqnarray*}
	\delta = \exp(-\log(1/\delta)) &\le& \exp\left(- \left(\sqrt{\frac{d \eta}{2 \left(2\lambda^2 \cdot c'\right)^{T}} } - 1 \right)^2\right)\\
	&\le& \exp\left(- \left(\frac{1}{2}\sqrt{\frac{d \eta}{2 \left(2\lambda^2 \cdot c'\right)^{T}} }\right)^2\right)\\
	&=& \exp\left(- \frac{d \eta}{8 \left(2\lambda^2 \cdot c'\right)^{T}} \right).
	\end{eqnarray*}
	Thus, if $\sqrt{\frac{d \eta}{2 \left(2\lambda^2 \cdot c'\right)^{T}} } \ge 2$, we have
	\begin{eqnarray*}
	\Exp_{\theta \sim \calP}\Pr_{\theta}\left[ \langle \widehat{v}, \theta \rangle^{2} \ge \eta \right] \le \frac{2}{1-e} \cdot \exp\left(- \frac{d \eta}{8 \left(2\lambda^2 \cdot c'\right)^{T}} \right).
	\end{eqnarray*}
	On the other hand, if $\sqrt{\frac{d \eta}{2 \left(2\lambda^2 \cdot c'\right)^{T}} } < 1$, then the right hand side of the above display is at least $1$, so the result also holds vacuously.

\subsection{Proof of Lemma~\ref{Detection_Prop}\label{DetecLemProof}}
Let $c(\lambda,\delta)$ be as above, and fix $\delta \le 1/2$ and $\lambda \ge 1$. Then, 
\begin{eqnarray*}
\Exp_{\Prit_0}\left[\left(\frac{\rmd\overline{\Prit}[\cdot;\{A_{\theta}^T\}]}{\rmd\Prit_0}\right)^{2}\right] &=& \Exp_{\theta,\theta' \sim \calP}\Exp_{\Qit}\left[\frac{\rmd\Prit_{\theta}[\cdot;A_{\theta}^T]\rmd\Prit_{\theta'}[\cdot;A_{\theta'}^T]}{(\rmd\Qit)^2}\right] \\
&=& \Exp_{\theta,\theta' \sim \calP}e^{\lambda^2\{|\langle \theta , \theta' \rangle| \sum_{i=1}^T \tau_i + \frac{1}{d}(\sum_{i=1}^T \tau_i)^2\}}\\
&=& e^{\frac{\lambda^2}{d}(\sum_{i=1}^T \tau_i)^2} \cdot \Exp_{\theta,\theta' \sim \calP}e^{\lambda^2(\sum_{i=1}^T \tau_i )\{|\langle \theta , \theta' \rangle| \}}\\
&\le& e^{\frac{\lambda^2}{d}(\sum_{i=1}^T \tau_i)^2} \cdot e^{\frac{4\lambda^4}{d}(\sum_{i=1}^T \tau_i )^2 + \lambda^2(\sum_{i=1}^T \tau_i )\sqrt{2/d}}\\
&=& \exp \left\{\left(\frac{2\lambda^2\sqrt{1 + 1/4\lambda^2} }{\sqrt{d}}\left(\sum_{i=1}^T \tau_i \right)\right)^2+ \frac{\sqrt{2}\lambda^2}{\sqrt{d}}\left(\sum_{i=1}^T \tau_i \right)\right\}\\
&\overset{(i)}{\le}& \exp \left\{\frac{(2\sqrt{5/4} + \sqrt{2})\lambda^2}{\sqrt{d}}\cdot \sum_{i=1}^T \tau_i\right\} \le  \exp \left\{\frac{\sqrt{7}\lambda^2}{\sqrt{d}}\cdot \sum_{i=1}^T \tau_i\right\},
\end{eqnarray*}
where $(i)$ holds as long as $\frac{(2\sqrt{5/4} + \sqrt{2})\lambda^2}{\sqrt{d}}\cdot \sum_{i=1}^T \tau_i \le 1$ and $\lambda \ge 1$. On the other hand, from Corollary~\ref{Main_Cor_estimation}, we can choose $\tau_1,\dots, \tau_T$ such that, $\Exp_{\theta}\Prit_{\theta}[A_{\theta}^T] \ge \frac{2\delta}{1-\delta}$, and 
\begin{eqnarray*}
\sum_{i=1}^T\tau_i \le 4(2\lambda^2 \cdot c(\lambda,\delta))^{T-1} \left(\sqrt{\log(1/\delta)} + 1\right)^2.
\end{eqnarray*}
Then, as long as $4(2\lambda^2 \cdot c(\lambda,\delta))^{T-1} \left(\sqrt{\log(1/\delta)} + 1\right)^2 \le 1$, we have 
\begin{eqnarray*}
\Exp_{\Prit_0}\left[\left(\frac{\rmd\overline{\Prit}[\cdot;\{A_{\theta}^T\}]}{\rmd\Prit_0}\right)^{2}\right] &\le& \exp\left\{\frac{4\sqrt{7}\lambda^2}{\sqrt{d}}(2\lambda^2 \cdot c(\lambda,\delta))^{T-1} \left(\sqrt{\log(1/\delta)} + 1\right)^2\right\}\\
&\overset{(i)}{\le}& \exp\left\{\frac{4\sqrt{7}}{\sqrt{d}}(2\lambda^2 \cdot c(\lambda,\delta))^{T} \left(\sqrt{\log(1/\delta)} + 1\right)^2\right\}.
\end{eqnarray*}
where $(i)$ uses that $c(\lambda,\delta) \ge 1$. Since $\delta \le 1/2$ and $\lambda \ge 1$, we have that
\begin{eqnarray*}
c(\lambda,\delta) \le c'' := \max_{\delta \le 1/2,\lambda \ge 1}c(\lambda,\delta) = c(1,1/2) < \infty.
\end{eqnarray*}
We can then bound
\begin{eqnarray*}
 \Exp_{\Qit}\left[\left(\frac{\rmd\overline{\Prit}[\cdot;\{A_{\theta}^T\}]}{\rmd\Qit}\right)^{2}\right] \le \exp\left\{\frac{4\sqrt{7}\left(c'' \lambda^2\right)^T}{\sqrt{d}} \cdot \left(\sqrt{\log(1/\delta)} + 1\right)^2\right\}.
\end{eqnarray*}
First, suppose that the quantity in the above exponential is less than $1/2$, then using the inequality $e^{x} - 1 \le 2x$ for $x \le 1/2$, we can bound 
\begin{eqnarray}\label{exminusbound}
\Exp_{\Prit_0}\left[\left(\frac{\rmd\overline{\Prit}[\cdot;\{A_{\theta}^T\}]}{\rmd\Prit_0}\right)^{2}\right] \le \exp\left\{\frac{4\left(c'' \lambda^2\right)^T}{\sqrt{d}} \cdot \left(\sqrt{\log(1/\delta)} + 1\right)^2\right\} - 1 \le \frac{8\left(c'' \lambda^2\right)^T}{\sqrt{d}} \cdot \left(\sqrt{\log(1/\delta)} + 1\right)^2. 
\end{eqnarray}
Take $\delta = \frac{\left(c'' \lambda^2\right)^T}{\sqrt{d}}$. If $\delta \le 1/2$ and $\frac{8\left(c'' \lambda^2\right)^T}{\sqrt{d}} \cdot \left(\sqrt{\log(1/\delta)} + 1\right)^2  \le 1$, then
\begin{eqnarray*}
\|\Prit_0 - \overline{\Prit}\|_{TV} &\overset{(i)}{\le}& \frac{1}{2}\sqrt{\Exp_{\Qit}\left[\left(\frac{\rmd\overline{\Prit}[\cdot;\{A_{\theta}^T\}]}{\rmd\Qit}\right)^{2}\right] - 1} + \frac{\sqrt{2\Exp_{\theta\sim \calP}\Prit_{\theta}[A_{\theta}^T]} +\Exp_{\theta\sim \calP}\Prit_\theta[A_{\theta}^T]}{2}\\
&\overset{(ii)}{\le}&\left(\sqrt{\log(1/\delta)} + 1\right)\cdot\sqrt{\frac{2\left(\gamma'' \lambda^2\right)^T}{\sqrt{d}}  } +  \frac{\sqrt{2\Exp_{\theta \sim \calP}\Prit_\theta [A_{\theta}^T]} +\Exp_{\theta \sim \calP}\Prit_\theta[A_{\theta}^T]}{2}\\
&\overset{(iii)}{\le}&\left(\sqrt{\log(1/\delta)} + 1\right)\cdot\sqrt{\frac{2\left(\gamma'' \lambda^2\right)^T}{\sqrt{d}}  } + \frac{\sqrt{2(\frac{2\delta}{1-\delta})}+ \frac{2\delta}{1-\delta}}{2}\\
&\le& \left(\sqrt{\log(1/\delta)} + 1\right)\cdot\sqrt{\frac{2\left(\gamma'' \lambda^2\right)^T}{\sqrt{d}}  } + 4\sqrt{2\delta} \\
&\le& \sqrt{2}(\log^{1/2}\frac{\sqrt{d}}{\left(\gamma'' \lambda^2\right)^T} + 4)\cdot \sqrt{\frac{\left(\gamma'' \lambda^2\right)^T}{\sqrt{d}}}.
\end{eqnarray*}
where $(i)$ uses Propostion~\ref{truncChi_detec}, $(ii)$ uses Equation~\eqref{exminusbound}, and $(iii)$ uses the fact that $\Exp_{\theta\sim \calP}\Prit_{\theta}[A_{\theta}^T] \le 2\delta/(1-\delta)$. On the other hand, if $\delta \ge 1/2$, or if  $\frac{8\left(c'' \lambda^2\right)^T}{\sqrt{d}} \cdot \left(\sqrt{\log(1/\delta)} + 1\right)^2  \ge 1$, then the last line of the above display is at least $1$, so the conclusion remained true because $\|\Prit_0 - \overline{\Prit}\|_{\TV}\le 1$.

\section{Information Theoretic Tools\label{Info_Tools}}
The purpose of this section is to extend the theory of $f$-divergence based lower bounds (\cite{csiszar1972class,guntuboyina2011lower,liese2012phi}) to handle non-normalized measures. In particular, this will allow us to prove analogues of the Fano-style Bayes Risk Lower Bounds in~\cite{chen2016bayes} which hold for truncated probability distributions, whose mass does not add up to one. The motivation for considering truncated measures is that we can restrict to parts of the probability space where the likelihood rations between alternatives do not ``blow-up'', allowing us to prove stronger bounds on the amount of information gained. 
\subsection{Preliminaries}
Let $\mu$ be a finite, non-negative measure on a space $(\calX,\calF)$, and denote $|\mu| = \mu(\calX)$. We say that $\mu$ is positive if $|\mu| > 0$. Abusive notation, we define $\Exp_{X\sim \mu}[f(X)]$ to denote integration of $f(X)$ with respect to the measure $\mu$. Throughout, we will also let $f: (0,\infty) \to \R$ denote a convex function define $f'(\infty):= \lim_{x \to \infty} f(x)/x$. We allow the latter limit to be infinite.

\begin{defn} For a finite, non-negative measure $\mu$ and finite positive measure $\nu$ over the class $(\calX,\calF)$, and a convex $f:(0,\infty) \to \R$, we define the (generalized) $f$-divergence between $\mu$ and $\nu$ as
\begin{eqnarray}
D_{f}(\mu,\nu) := \int_{x \in \calX: \rmd\nu(x) > 0} f\left(\frac{\rmd\mu}{\rmd\nu}\right)d\nu + \mu\left(\{\rmd\nu = 0\}\right)\cdot f'(\infty)
\end{eqnarray}
with the convention $0 \cdot f'(\infty) = 0$. 
\end{defn}
\begin{rem}\label{measure_theory_remak}
For a formal definition of how to interpret the notation $\rmd\mu,\rmd\nu$, see for example~\cite{kallenberg2006foundations}. Recall that, for measures $\mu$ and $\nu$ on $(\calX,\calF)$, we say $\mu \ll \nu$, or $\mu$ is absolutely continuous with respect to $\nu$, if, for any set $A \in \calF$, $\nu(A) = 0$ implies $\mu(A) = 0$. In this case, the Radon-Nikodym derivative $\rmd\mu/\rmd\nu$ is well defined~\cite{kallenberg2006foundations}, and $D_{f}(\mu,\nu) := \int_{x \in \calX: \rmd\nu(x) > 0} f\left(\frac{\rmd\mu}{\rmd\nu}\right)\rmd\nu$.
\end{rem}

The $f$-divergences play a substantial role in modern information theory, dating back to the work of Csisz\'ar~\cite{csiszar1972class}. They also generalize many classical information divergence; for example, taking the function $f(x) = x^2 - 1$ yields the classical $\chi^2$ divergence, and $f(x) = x \log x$ corresponds to the $\KL$-divergence. Recall that $\chi^2+1$ divergence from Definition~\ref{chi_plus_one_definition} corresponds to the case where $f(x) = x^2$, motivating the label ``$\chi^2+1$-divergence''. As defined in Definition~\ref{chi_plus_one_definition}, $D_{\chi^2+1}$ is not an $f$ divergence in the traditional sense, since commonly one requires that $f(1) = 0$, and that $D_f(\cdot,\cdot)$ takes probability distributions as its arguments. In particular, this ensures that $0 = D_f(\mu,\mu) \le \inf_{\nu} D_f(\mu,\nu)$~\cite{liese2012phi}. Luckily, the following lemma establishes that many of the nice properties of $f$ divergences carry through when these restrictions are lifted. 
\begin{lem}\label{fDivProperties} Let $\mu,\nu$ be two finite positive measures on a space $(\calX,\calF)$. Then
\begin{enumerate} 
\item \textbf{Convexity:} $D_{f}(\mu,\nu)$ is jointly convex in $\mu$ and $\nu$ over the convex set $(\mu,\nu): \mu \ll \nu$
\item \textbf{Distance-Like:} Suppose that $f'(\infty) \ge 0$. Then $D_{f}(\mu,\nu) \ge |\nu|f(|\mu|/|\nu|)$, which is attained when $\rmd\mu/ \rmd\nu = |\mu|/|\nu|$.
\item \textbf{Normalization:} Define $f(x;p,q) = qf(\frac{p}{q}x)$. Then,
\begin{eqnarray}
D_{f}(\mu,\nu) = D_{f(x;|\mu|,|\nu|)}(\mu/|\mu|,\nu/|\nu|).
\end{eqnarray}
\item \textbf{Linearity} $D_{\beta f+\alpha}(\mu,\nu) = \alpha|\nu| + \beta D_{f}(\mu,\nu)$
\item \textbf{Data-Processing:} Let $\Gamma$ be a measurable map from $(\calX,\calF) $ to $(\mathcal{Y},\calG)$, and let $\mu\Gamma^{-1}$ denote the pullback measure on $\mathcal{Y}$ given by $\mu\Gamma^{-1}(B) = \mu(\Gamma^{-1}(B))$ for all $B \in \calG$. Then,
\begin{eqnarray*} 
D_f(\mu,\nu) \ge D_f(\mu\Gamma^{-1},\nu\Gamma^{-1})~.
\end{eqnarray*} 
\end{enumerate}
\end{lem}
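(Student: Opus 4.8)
The plan is to reduce all five properties to a single device, the \emph{perspective transform} of $f$. First I would fix a finite dominating measure $\lambda$ (say $\lambda = \mu + \nu$), write $p := \rmd\mu/\rmd\lambda$, $q := \rmd\nu/\rmd\lambda$, and define the extended perspective $\tilde g\colon [0,\infty)^2 \to (-\infty,\infty]$ by $\tilde g(p,q) := q\,f(p/q)$ for $q > 0$, $\tilde g(p,0) := p\,f'(\infty)$ for $p > 0$, and $\tilde g(0,0) := 0$. The two facts to establish at the outset are: (a) $\tilde g$ is jointly convex on $[0,\infty)^2$ and positively homogeneous of degree one; and (b) $D_f(\mu,\nu) = \int_{\calX} \tilde g(p,q)\,\rmd\lambda$, independently of the choice of $\lambda$. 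Fact (a) is the standard perspective-function calculus, once one identifies the recession direction $q\to 0^+$ with the slope at infinity $f'(\infty)$; fact (b) is immediate from the definition of $D_f$ after noting $\{\rmd\nu = 0\} = \{q = 0\}$ and $\mu(\{q = 0\}) = \int_{\{q=0\}} p\,\rmd\lambda$.

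Given (a) and (b), each item follows by a short argument. For \textbf{convexity}, given $(\mu_0,\nu_0)$, $(\mu_1,\nu_1)$ with $\mu_i \ll \nu_i$ and $t\in[0,1]$, choose $\lambda$ dominating all four measures; the densities of the convex combination are the corresponding convex combinations of densities, so pointwise convexity of $\tilde g$ and monotone integration give the claim (the singular term vanishes on the set $\mu \ll \nu$). For \textbf{linearity} (with $\beta \ge 0$, as required for $\beta f + \alpha$ to be convex), use $(\beta f + \alpha)'(\infty) = \beta f'(\infty)$, expand the integral defining $D_{\beta f + \alpha}(\mu,\nu)$, and use $\nu(\{q > 0\}) = |\nu|$. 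For \textbf{normalization}, compute $\rmd(\mu/|\mu|)/\rmd(\nu/|\nu|) = (|\nu|/|\mu|)\,\rmd\mu/\rmd\nu$ and $\big(f(\,\cdot\,;p,q)\big)'(\infty) = p\,f'(\infty)$, substitute into the definition of $D_{f(\,\cdot\,;|\mu|,|\nu|)}(\mu/|\mu|,\nu/|\nu|)$, and watch the factors $|\mu|,|\nu|$ cancel.

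For \textbf{distance-like}, apply Jensen's inequality to the probability measure $\lambda/|\lambda|$ and the convex function $\tilde g$: $\int \tilde g(p,q)\,\rmd\lambda \ge |\lambda|\,\tilde g\big(|\mu|/|\lambda|,\,|\nu|/|\lambda|\big) = \tilde g(|\mu|,|\nu|) = |\nu|\,f(|\mu|/|\nu|)$, using positive homogeneity and $\Exp_{\lambda/|\lambda|}[p] = |\mu|/|\lambda|$, $\Exp_{\lambda/|\lambda|}[q] = |\nu|/|\lambda|$; equality holds precisely when $\rmd\mu/\rmd\nu$ equals the constant $|\mu|/|\nu|$. For \textbf{data-processing}, set $\mathcal{G}_0 := \Gamma^{-1}(\calG) \subseteq \calF$ and $\lambda' := \lambda\Gamma^{-1}$, and identify the pushforward densities with $\lambda$-conditional expectations: $\big(\rmd(\mu\Gamma^{-1})/\rmd\lambda'\big)\circ\Gamma = \Exp_\lambda[\,p \mid \mathcal{G}_0\,]$, and likewise for $\nu$. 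Then $D_f(\mu\Gamma^{-1},\nu\Gamma^{-1}) = \int \tilde g\big(\Exp_\lambda[p\mid\mathcal{G}_0],\,\Exp_\lambda[q\mid\mathcal{G}_0]\big)\,\rmd\lambda$, and the conditional Jensen inequality for the convex $\tilde g$ (after normalizing $\lambda$) bounds this above by $\int \Exp_\lambda[\tilde g(p,q)\mid\mathcal{G}_0]\,\rmd\lambda = D_f(\mu,\nu)$.

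I expect the main obstacle to be the careful treatment of the singular part and of infinities: one must verify that $\tilde g$ is genuinely jointly convex (and lower semicontinuous) up to the boundary $\{q = 0\}$, including the corner $(0,0)$, allowing the value $+\infty$, since all five items rest on this; and that the $f'(\infty)\,\mu(\{\rmd\nu=0\})$ term matches $\int_{\{q=0\}} \tilde g(p,0)\,\rmd\lambda$ under the various manipulations. A secondary point is the identification of pushforward densities with conditional expectations in the data-processing step, which should be stated with a little care when $\lambda$ is merely $\sigma$-finite rather than a probability measure (handled by localizing to sets of finite $\lambda$-measure).
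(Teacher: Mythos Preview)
Your proposal is correct and the overall strategy---reducing everything to the integral $\int \tilde g(p,q)\,\rmd\lambda$ of the extended perspective against a common dominating measure---is sound. For convexity, normalization, and linearity your arguments coincide with the paper's (both invoke perspective convexity and direct substitution). For the distance-like property you apply Jensen to $\tilde g$ against the probability measure $\lambda/|\lambda|$ and then use positive homogeneity, whereas the paper applies Jensen to $f$ against $\nu/|\nu|$ directly and appeals to $f'(\infty)\ge 0$ only to control the singular term; your route is slightly cleaner in that it handles the singular part automatically.

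The substantive divergence is in \textbf{data-processing}. You prove it from scratch via conditional Jensen: identify pushforward densities with $\Exp_\lambda[\,\cdot\mid\Gamma^{-1}\calG]$ and use convexity of $\tilde g$. The paper instead bootstraps from the classical data-processing inequality for probability measures with $f(1)=0$ (cited from Liese), extending first to general convex $f$ via the linearity property (replacing $f$ by $f-f(1)$) and then to non-normalized measures via the normalization property. Your argument is self-contained and conceptually uniform; the paper's is more modular and leverages parts 3 and 4 to reduce to an off-the-shelf result. Both are valid, and your anticipated obstacles (boundary convexity of $\tilde g$ at $q=0$, and the conditional-expectation identification when $\lambda$ is only finite rather than a probability) are exactly the right places to be careful.
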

It is often useful to consider $f$ divergences on binary spaces (e.g. $|\calX| = 2$). In this regime, lemma~\ref{fDivProperties} immediately implies the following corollary
\begin{cor}\label{phi_cor}
For $a \in [0,p]$, $b \in [0,q]$, let $\Qit_{a,p}$ (resp.\ $\Qit_{b,q}$) denote the measures on $\{0,1\}$ which place mass $a$ (resp.\ $b$) on $1$, and $p-a$ (resp.\ $q-b$) on $\{0,1\}$, and define
\begin{eqnarray}\label{gen_phi_f_def}
\phi_f(a,b;p,q) = D_f(\Qit_{a,p},\Qit_{b,q}). 
\end{eqnarray}
Then, For $a\in[0,q],b \in [0,q]$, 
\begin{eqnarray}
	\phi_f(a,b;p,q) = bf(\frac{a}{b}) + (q-b)\cdot f(\frac{p-a}{q-b}), 
\end{eqnarray}
where the case $b = 0$ or $b = q$ is understood by taking the limits $b \to 0^+$ and $b \to q^-$. Moreover,

\begin{enumerate}
	\item Let $\mu,\nu$ be measures on a space $(\calX,\calF)$. For any event $A \in \calF$, we have 
	\begin{eqnarray}
	D_{f}(\mu,\nu) \ge \phi_f(\mu(A),\nu(A);|\mu|,|\nu|).
	\end{eqnarray} 
	\item The funciton $\phi_f(a,b;p,q)$ is jointly convex (and finite) in $a$ and $b$ for $(a,b) \in [0,p] \times (0,q)$, and 
	\begin{eqnarray}
	\lim_{b \to 0} \phi_f(a,b;p,q) = \phi(a,0;p,q) & \lim_{b \to q} \phi_f(a,b;p,q) = \phi(a,q;p,q).
	\end{eqnarray} 
	\item As a function of $a$ for fixed $b \in [0,q]$,  $\phi_f(a,b;p,q)$ is minimized when $a = (p/q)b$, and is therefore nondecreasing for $(q/p)a \ge b$. As a function of $b$ for fixed $a \in [0,p]$, $\phi_f(a,b;p,q)$ is minimized when $b = a(q/p)$ and is therefore nonincreasing for $(p/q)b \le a$.  
\end{enumerate}
\end{cor}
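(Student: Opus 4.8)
The plan is to reduce everything to the explicit two–point formula for $\phi_f$ together with the structural properties of generalized $f$–divergences collected in Lemma~\ref{fDivProperties}. First I would derive the closed form. By construction $\Qit_{a,p}$ and $\Qit_{b,q}$ are supported on $\{0,1\}$ with $\Qit_{a,p}(\{1\}) = a$, $\Qit_{a,p}(\{0\}) = p-a$, and likewise for $\Qit_{b,q}$. When $b \in (0,q)$ both atoms of $\Qit_{b,q}$ are positive, so $\Qit_{a,p} \ll \Qit_{b,q}$ with Radon--Nikodym derivative equal to $a/b$ at $1$ and $(p-a)/(q-b)$ at $0$; substituting into the definition of $D_f$ gives $\phi_f(a,b;p,q) = b\, f(a/b) + (q-b)\, f\bigl((p-a)/(q-b)\bigr)$. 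For the degenerate endpoints $b = 0$ or $b = q$, one atom of $\Qit_{b,q}$ vanishes and the term $\mu(\{\rmd\nu = 0\})\, f'(\infty)$ in the definition contributes $a\, f'(\infty)$ (resp.\ $(p-a)\, f'(\infty)$). Since $f'(\infty) = \lim_{x\to\infty} f(x)/x$ and $0\cdot f'(\infty) = 0$, these agree with the limits of the interior formula as $b\to 0^+$ and $b\to q^-$, which simultaneously proves the two continuity claims in assertion~(2).

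For assertion~(1) I would apply the data-processing inequality, Lemma~\ref{fDivProperties}(5), to the indicator map $\Gamma:\calX\to\{0,1\}$, $\Gamma(x) = \I(x\in A)$. Its pushforwards are exactly $\mu\Gamma^{-1} = \Qit_{\mu(A),\,|\mu|}$ and $\nu\Gamma^{-1} = \Qit_{\nu(A),\,|\nu|}$, so $D_f(\mu,\nu) \ge D_f(\mu\Gamma^{-1},\nu\Gamma^{-1}) = \phi_f\bigl(\mu(A),\nu(A);|\mu|,|\nu|\bigr)$. For the joint convexity in assertion~(2), note that $(a,b)\mapsto(\Qit_{a,p},\Qit_{b,q})$ is an affine map whose image lies in the convex set $\{(\mu,\nu):\mu\ll\nu\}$ as soon as $b\in(0,q)$; composing this affine map with the jointly convex functional $D_f$ of Lemma~\ref{fDivProperties}(1) yields joint convexity of $\phi_f$ on $[0,p]\times(0,q)$, and finiteness there is immediate from the closed form whenever $f$ is finite on $[0,\infty)$ (as it is for $\chi^2+1$).

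Finally, for the monotonicity in assertion~(3) I would combine the convexity just established with the ``distance-like'' bound, Lemma~\ref{fDivProperties}(2), which applies since $f'(\infty)\ge 0$ for the divergences of interest: for any fixed $b$, $D_f(\Qit_{a,p},\Qit_{b,q}) \ge |\Qit_{b,q}|\, f(|\Qit_{a,p}|/|\Qit_{b,q}|) = q\, f(p/q)$, with equality precisely when $\rmd\Qit_{a,p}/\rmd\Qit_{b,q}$ is the constant $p/q$, i.e.\ when $a/b = (p-a)/(q-b) = p/q$, which forces $a = (p/q)b$. Hence the convex function $a\mapsto\phi_f(a,b;p,q)$ attains its minimum at $a = (p/q)b$ and is therefore nondecreasing for $a\ge(p/q)b$, equivalently $(q/p)a\ge b$. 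Running the same argument with the roles of numerator and denominator swapped — fixing $a$, varying $b$, and using that $|\Qit_{b,q}| = q$ is unchanged — shows $b\mapsto\phi_f(a,b;p,q)$ is minimized at $b = (q/p)a$ and hence nonincreasing for $b\le(q/p)a$, i.e.\ for $(p/q)b\le a$.

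The bulk of this is a mechanical consequence of Lemma~\ref{fDivProperties}; I expect the only genuine care to be needed at the degenerate endpoints $b\in\{0,q\}$, where $\Qit_{b,q}$ no longer dominates $\Qit_{a,p}$. There one must keep track of the $\mu(\{\rmd\nu=0\})f'(\infty)$ contribution and verify it is exactly the limit of the interior formula, so that the closed form, the continuity statements, and the monotonicity conclusions all extend cleanly to the full parameter range $[0,p]\times[0,q]$.
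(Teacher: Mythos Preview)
Your proposal is correct and follows essentially the same route as the paper: data processing with the indicator map $\Gamma = \I_A$ for part (1), convexity of the perspective map (equivalently, joint convexity of $D_f$ composed with the affine parametrization) for part (2), and the distance-like lower bound of Lemma~\ref{fDivProperties}(2) to pin down the minimizer together with one-dimensional convexity for part (3). Your treatment of the degenerate endpoints $b\in\{0,q\}$ via the $f'(\infty)$ term is in fact more explicit than the paper's, which simply remarks that the limits follow from direct computation and the definition of $f'(\infty)$.
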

\begin{proof}  The first point follows from applying the Data Processing Inequality from Part 5 of Lemma~\ref{fDivProperties} to the RHS of Equation~\eqref{gen_phi_f_def}. The second point is an analogue of Part 1 of Lemma~\ref{fDivProperties}, but can be seen directly by noting that, if $f$ is a convex function, the perspective map $(a,b) \mapsto bf(a/b)$ is convex (see \cite{boyd2004convex}), and the limits follow from direct computation and the definition of $f'(\infty)$. The third point of Lemma~\ref{fDivProperties} to the RHS of Equation~\eqref{gen_phi_f_def}, and noting that $1$-d convex functions are non-increasing to the left (resp.\ non-decreasing to the right) of their minimizers. 
\end{proof}
\subsection{A Generalized Bayes-Risk Lower Bound\label{Gen_Bayes_risk}}
	With this in hand, we can prove generalization of the Bayes risk lower bounds from \cite{chen2016bayes}. We consider a space of actions $\calA$ (think queries $\vi$), and a $\{0,1\}$-loss function $\calL: \calA \times \Theta \to \{0,1\}$. As in~\cite{chen2016bayes}, we consider an decision rule $\fraka: \calX \to \calA$, and study $\calL(\fraka(X),\theta)$, where $X$ is drawn from a measure $\mu_{\theta}$ and $\theta$ comes from a prior $\calP$ over a space $(\Omega,\calG)$. Rather than lower bounding the probability that $\calL(\fraka(X),\theta)$ is equal to one $1$ (i.e., risk), we \emph{upper bound} the probability that $\calL(\frak(X),\theta)$ is equal $0$, which we call the \emph{value}. This will be easier to work with for our purposes, and makes more sense semantically when $|\mu_{\theta}| \le 1$.

	\begin{thm}[Generalized Bayes Risk Lower Bound]\label{GenFano} Let $\calL: \calA \times \Theta \to \{0,1\}$, let $\fraka$ denote a decision rule from $\calX \to \calA$, let $\calP$ be a probability distribution over $(\Theta,\calG)$, let $\nu$ and $\{\mu_{\theta}\}$ be a family of finite measures over $(\calX,\calF)$ such that $\Exp_{\theta \sim \calP}|\mu_{\theta}| > 0$. Define $p = \Exp_{\theta \sim \calP}|\mu_{\theta}|$, $ q = |\nu|$ and let $V^* = \sup_{\fraka}\Exp_{\theta \sim \calP}\mu_{\theta}[\{L(\fraka(X),\theta) = 0\}]$, $V_0 = \sup_{a \in \calA}\Pr_{\theta \sim \calP}[L(a,\theta) = 0]$. Then,
	\begin{eqnarray}\label{Gen_Fano_Eq}
	\text{either} \quad V^* \le p V_0 & \text{or} & \Exp_{\theta \sim \calP} D_f(\mu_{\theta},\nu) \ge \phi_f(V_*,q\cdot V_0;p,q).
	\end{eqnarray}
	\end{thm}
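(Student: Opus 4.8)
The plan is to mirror the proof of Corollary~7 in Chen et al.~\cite{chen2016bayes}: lift the problem to the product space $\calX \times \Theta$, bound the probability of the ``good event'' under the reference measure by $V_0$ times its mass, and then collapse everything to a two-point space via the data-processing inequality for generalized $f$-divergences (Lemma~\ref{fDivProperties}, part~5). The only new ingredient is bookkeeping of the total masses $p = \Exp_{\theta\sim\calP}|\mu_\theta|$ and $q = |\nu|$, which is exactly what the generalized $f$-divergence machinery of Section~\ref{Info_Tools} is built to handle.

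First I would fix a decision rule $\fraka : \calX \to \calA$ and write $V^{\fraka} := \Exp_{\theta \sim \calP}\mu_\theta[\{\calL(\fraka(X),\theta) = 0\}]$, so that $V^* = \sup_{\fraka} V^{\fraka}$. On $(\calX \times \Theta,\, \calF \otimes \calG)$ define the finite measures $\Prit(\rmd x, \rmd\theta) := \mu_\theta(\rmd x)\,\calP(\rmd\theta)$ and $\Qit(\rmd x, \rmd\theta) := \nu(\rmd x)\,\calP(\rmd\theta)$, which have masses $|\Prit| = p$ and $|\Qit| = q$ since $\calP$ is a probability measure. A routine Lebesgue-decomposition computation --- fiberwise $\rmd\Prit/\rmd\Qit$ equals $\rmd\mu_\theta/\rmd\nu$, and the singular mass $\Prit(\{\rmd\Qit = 0\})$ equals $\Exp_{\theta\sim\calP}\mu_\theta(\{\rmd\nu = 0\})$, so the $f'(\infty)$ terms match --- gives $D_f(\Prit,\Qit) = \Exp_{\theta\sim\calP} D_f(\mu_\theta,\nu)$. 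Next let $E := \{(x,\theta) \in \calX\times\Theta : \calL(\fraka(x),\theta) = 0\}$, a $\calF\otimes\calG$-measurable set. Unwinding definitions gives $\Prit(E) = V^{\fraka}$, while by Tonelli
\begin{eqnarray*}
\Qit(E) = \int_{\calX} \Pr_{\theta\sim\calP}\left[\calL(\fraka(x),\theta) = 0\right]\nu(\rmd x) \le V_0\, |\nu| = q\, V_0,
\end{eqnarray*}
because for each fixed $x$ the action $\fraka(x)$ is a single element of $\calA$. Applying data processing to $\Gamma : (x,\theta) \mapsto \I\left((x,\theta)\in E\right)$ and recognizing the resulting two-point $f$-divergence as $\phi_f$ (the definition in Corollary~\ref{phi_cor}) yields
\begin{eqnarray*}
\Exp_{\theta\sim\calP} D_f(\mu_\theta,\nu) = D_f(\Prit,\Qit) \ge D_f\left(\Prit\Gamma^{-1}, \Qit\Gamma^{-1}\right) = \phi_f\left(V^{\fraka},\, \Qit(E);\, p,\, q\right).
\end{eqnarray*}

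It remains to simplify the right-hand side and pass to the supremum. By Corollary~\ref{phi_cor}, part~3, for fixed first argument $a$ the map $b \mapsto \phi_f(a,b;p,q)$ is nonincreasing on $\{b : (p/q)b \le a\}$; hence whenever $V^{\fraka} > p V_0$ --- the only case we need --- both $\Qit(E)\le qV_0$ and $qV_0$ lie in this range, and $\Qit(E) \le qV_0$ forces $\phi_f(V^{\fraka},\Qit(E);p,q) \ge \phi_f(V^{\fraka}, qV_0; p, q)$. So for every $\fraka$, either $V^{\fraka} \le pV_0$ or $\Exp_{\theta\sim\calP}D_f(\mu_\theta,\nu) \ge \phi_f(V^{\fraka}, qV_0; p, q)$. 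Taking the supremum over $\fraka$: if $V^* \le pV_0$ we land in the first alternative; otherwise, letting $\fraka$ range over rules with $V^{\fraka} > pV_0$ (a non-empty set whose supremum of $V^{\fraka}$ is $V^*$) and using that $a \mapsto \phi_f(a, qV_0; p, q)$ is nondecreasing on $[pV_0, p]$ and continuous there (by the monotonicity in part~3 and the finiteness/convexity in part~2 of Corollary~\ref{phi_cor}), one obtains $\Exp_{\theta\sim\calP}D_f(\mu_\theta,\nu) \ge \phi_f(V^*, qV_0; p, q)$, which is the claimed dichotomy~\eqref{Gen_Fano_Eq}.

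The conceptual steps above are standard; the points that need care are (i) that $\Prit$ and $\Qit$ are genuinely defined on the product $\sigma$-algebra, requiring $\theta\mapsto\mu_\theta(A)$ measurable (already implicit in writing $\Exp_{\theta\sim\calP}\mu_\theta[\cdot]$) together with a measurable-selection argument to isolate the mutually singular parts for the identity $D_f(\Prit,\Qit) = \Exp_\theta D_f(\mu_\theta,\nu)$; and (ii) the boundary behaviour of $\phi_f$ when $V^* = p$ or $V_0 \in \{0,1\}$. I expect (ii) to be the main nuisance, but it is handled by the one-sided limits $\lim_{b\to 0}\phi_f$ and $\lim_{b\to q}\phi_f$ recorded in part~2 of Corollary~\ref{phi_cor} (together with the interpretation $f(0) := \lim_{x\to 0^+}f(x)$), so the argument goes through in these degenerate regimes as limiting cases.
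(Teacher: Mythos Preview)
Your proposal is correct and follows essentially the same route as the paper's proof: lift to the product space, identify $\Exp_{\theta\sim\calP}D_f(\mu_\theta,\nu)$ with a single $f$-divergence on $\calX\times\Theta$, apply data processing to the indicator of the zero-loss event to land on $\phi_f$, then use the monotonicity properties of $\phi_f$ from Corollary~\ref{phi_cor} (first in $b$, then in $a$) together with a limiting/continuity argument to pass to $V^*$. The paper's proof is slightly terser---it invokes Corollary~\ref{phi_cor} Part~1 directly rather than naming the map $\Gamma$, and runs the supremum as an explicit $\epsilon$-argument---but the logical content is the same, and your extra remarks about measurability and boundary behaviour of $\phi_f$ only add rigor.
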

	\begin{proof}We follow along the lines of the proofs of Lemma 3 and Theorem 2~\cite{chen2016bayes}, but first we introduce some notation. Let $\calP \otimes \nu$ denote the product measure between $\calP$ and $\nu$, and let $\calP * \{\mu_{\theta}\}$ denote the coupled measure with density $\rm(\calP * \{\mu_{\theta})\}(\theta,X) = \rmd\calP(\theta)\cdot \rm\mu_{\theta}(X)$. Also, given a measure $\eta$ on $(\Theta,\calG) \times (\calX, \calF)$, define the \emph{value} of $\fraka$ as the mass of all pairs $(\theta,X)$ which incur zero loss
	\begin{eqnarray}
	V_{\fraka}(\eta) = \eta(\{(\theta,X): \I(L(\fraka(X),\theta) = 0\}).
	\end{eqnarray}
	Defining the event $A := \{(\theta,X): \I(L(\fraka(X),\theta) = 0\}$, we have
	\begin{eqnarray*}
	\Exp_{\theta \sim \calP} D_f(\mu_{\theta},\nu) &=& \int f\left(\frac{\rmd\mu_{\theta}}{\rmd\nu}\right)d(\calP \otimes \nu)\\
	&=& \int f\left(\frac{\rmd\calP \cdot \rmd\mu_{\theta}}{d\calP \cdot \rmd\nu}\right)\rmd(\calP \otimes \nu)\\
	&=& D_f\left(\calP * \{\mu_{\theta}\},(\calP \otimes \nu)\right)\\
	&\overset{(i)}{\ge}& \phi_f\left((\calP * \{\mu_{\theta}\})(A),\calP \otimes \nu(A);|\calP * \{\mu_{\theta}\}|,|\calP \otimes \nu|\right)\\
	&\overset{(ii)}{=}& \phi_f\left(V_{\fraka}(\calP * \{\mu_{\theta}\}),V_{\fraka}(\calP \otimes \nu);p,q\right),
	\end{eqnarray*}
	where $(i)$ follows from the Data Processing inequality in Corollary~\ref{phi_cor} Part 1, and for $(ii)$ used the definition of $V^{\fraka}$ and the definitions $p = |\calP * \{\mu_{\theta}\}|$ and $q = |\calP \otimes \nu| = |\calP||\nu| = |\nu|$. 

	To wrap up, suppose that $V^* > p V_0$. We first note that $V_{\fraka}(\calP \otimes \nu) \le  |(\calP \otimes \nu)|\cdot V_0= q V_0$, since $X$ and $\theta$ are independent under $\calP \otimes \nu$. Moreover, for any $\epsilon > 0$, there exists a decision rule $\fraka$ for which
	\begin{eqnarray}
	p = |\calP * \{\mu_{\theta}\}| \ge V_{\fraka}(\calP * \{\mu_{\theta}\}) > V^* - \epsilon.
	\end{eqnarray}
	Taking $\epsilon$ small enough $V^* - \epsilon > pV_0$, we have that 
	\begin{eqnarray}
	V_{\fraka}(\calP * \{\mu_{\theta}\}) > V^* - \epsilon > pV_0 = \frac{p}{q}(qV_0) \ge \frac{p}{q}V_{\fraka}(\{\calP \otimes \nu\}).
	\end{eqnarray}
	By Part 3 Corollary~\ref{phi_cor}, applied first to the '$b$' argument and then to the `$a$' argument, we have 
	\begin{eqnarray*}
	\phi_f(V_{\fraka}(\calP * \{\mu_{\theta}\});V_{\fraka}(\{\calP \otimes \nu\});p,q) &\ge& \phi_f(V_{\fraka}(\calP * \{\mu_{\theta}\},qV_0;p,q)\\
	 &\ge& \phi_f(V^* - \epsilon,qV_0;p,q).
	\end{eqnarray*} 
	Since $\phi_f(a,b;p,q)$ is convex (Corollary~\ref{phi_cor}, Part 2), and therefore continuous, in its `$a$' argument for $a \in [0,p]$, and since $V^* \le p$, taking $\epsilon \to 0$ concludes. 
	\end{proof}

\subsection{Application to Lower Bounds for Truncated Distributions}
	The flexibility to work with non-normalized measures allows us to prove lower bounds on the probability of taking a zero-loss action, restricted to some small parts of the space. We will also specialize Theorem~\ref{Gen_Bayes_risk} to the case of the $\chi^2+1$ divergence defined in Definition~\ref{chi_plus_one_definition}.

	\begin{prop}\label{Application_Fano_rop} Let $\{\Prit_{\theta}\}_{\theta \in \Theta}$ denote a family of probability measures on a space $(\calX,\calF)$ indexed by $\theta \in \Theta$. Let $\calP$ denote a probability distribution on $(\Theta,\calG)$, and let $\{A_{\theta}\} \in \calF$ such that $\{(\theta,A_{\theta})\}$ is $(\calG,\calF)$ measurable. Define $\Prit_{\theta}[\cdot;A_{\theta}]$ denote the subprobability measure given by 
	\begin{eqnarray}
	\Prit_{\theta}[B;A_{\theta}] = \Prit_{\theta}[B \cap A_{\theta}].
	\end{eqnarray}
	Given a loss function $\calL:\calA \times \Theta \to \{0,1\}$, define 
	\begin{multline}
	V^* = \sup_{\fraka}\Exp_{\theta \sim \calP}\Prit_{\theta}\left[\{L(\fraka(X),\theta) = 0\} \cap A_{\theta}\right], \quad \text{and} \quad V_0 = \sup_{a \in \calA}\Pr_{\theta \sim \calP}\left[\{L(a,\theta) = 0\}\right].
	\end{multline}
	Letting $p = \Exp_{\theta \sim \calP}\Prit_{\theta}[A_{\theta}]$, then either $V^* \ge pV_0 $ or, for any probability measure $\Qit$ on $(\calX,\calF)$,
	\begin{eqnarray}\label{GenFanoConsequenceEq}
	\Exp_{\theta \sim\calP}D_{f}[\Prit_{\theta}[\cdot;A_{\theta}],\Qit] &\ge&  \phi_{f}(V^*,V_0;p,1).
	\end{eqnarray}
	In particular, if we chose $f(x) = x^2$, then we have
	\begin{eqnarray}
	V^* \le V_0 + \sqrt{V_0(1-V_0)\Exp_{\theta \sim\calP}D_{\chi^2 + 1}[\Prit_{\theta}[\cdot;A_{\theta}],\Qit]}. 
	\end{eqnarray}
	\end{prop}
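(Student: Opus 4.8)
The plan is to deduce the statement from the Generalized Bayes Risk Lower Bound, Theorem~\ref{GenFano}, via the substitution $\mu_\theta \leftarrow \Prit_\theta[\cdot;A_\theta]$ and $\nu \leftarrow \Qit$. First I dispose of degeneracies: if $p := \Exp_{\theta\sim\calP}\Prit_\theta[A_\theta] = 0$ then $V^* = 0$ and every displayed inequality is trivial; and if $\Exp_{\theta\sim\calP}D_{\chi^2+1}(\Prit_\theta[\cdot;A_\theta],\Qit) = +\infty$ — in particular whenever $\Prit_\theta[\cdot;A_\theta]\not\ll\Qit$ on a set of positive $\calP$-measure, since $f'(\infty)=+\infty$ for $f(x)=x^2$ — there is nothing to prove. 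So assume $p>0$ and $\Prit_\theta[\cdot;A_\theta]\ll\Qit$ for $\calP$-a.e.\ $\theta$; then by Definition~\ref{chi_plus_one_definition}, $D_f(\Prit_\theta[\cdot;A_\theta],\Qit)=D_{\chi^2+1}(\Prit_\theta[\cdot;A_\theta],\Qit)$ for this $f$.

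Next I apply Theorem~\ref{GenFano} with prior $\calP$ on $(\Theta,\calG)$, the finite measures $\mu_\theta=\Prit_\theta[\cdot;A_\theta]$, and $\nu=\Qit$, noting $|\nu|=|\Qit|=1$, $\Exp_\theta|\mu_\theta|=p>0$, and that the quantities $V^*,V_0$ defined there coincide with those in the Proposition because $\mu_\theta[\{L(\fraka(X),\theta)=0\}]=\Prit_\theta[\{L(\fraka(X),\theta)=0\}\cap A_\theta]$. The theorem then yields the dichotomy: either $V^*\le pV_0$, or $\Exp_\theta D_f(\mu_\theta,\nu)\ge\phi_f(V^*,|\nu|V_0;p,|\nu|)=\phi_f(V^*,V_0;p,1)$, which is Equation~\eqref{GenFanoConsequenceEq}.

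For the $\chi^2+1$ consequence I take $f(x)=x^2$. The boundary cases $V_0\in\{0,1\}$ render the claimed inequality trivial (for $V_0=0$ one checks $\phi_f$ forces $V^*=0$ unless the divergence is infinite), so assume $0<V_0<1$, and by Corollary~\ref{phi_cor}, $\phi_f(V^*,V_0;p,1)=(V^*)^2/V_0+(p-V^*)^2/(1-V_0)$. Write $D:=\Exp_\theta D_{\chi^2+1}(\Prit_\theta[\cdot;A_\theta],\Qit)$, $a:=V^*$, $b:=V_0$. In the first alternative, $a\le pb\le b\le b+\sqrt{b(1-b)D}$. In the second, clearing denominators in $D\ge a^2/b+(p-a)^2/(1-b)$ gives the quadratic inequality $a^2-2pb\,a+\big(p^2 b-Db(1-b)\big)\le 0$; since the upward-opening parabola is nonpositive at $a$, its discriminant is nonnegative — equivalently $D\ge p^2$, which also follows from the distance-like bound $D_f(\mu_\theta,\nu)\ge|\nu|f(|\mu_\theta|/|\nu|)=|\mu_\theta|^2$ in Lemma~\ref{fDivProperties} together with Jensen's inequality — and $a$ lies below the larger root, so $a\le pb+\sqrt{b(1-b)(D-p^2)}\le b+\sqrt{b(1-b)D}$, using $p\le 1$ and $p^2\ge 0$. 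Either way $V^*\le V_0+\sqrt{V_0(1-V_0)D}$, as claimed.

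The main obstacle is not any single deep step but the careful bookkeeping with non-normalized measures: verifying that $D_f$ for $f(x)=x^2$ is literally the $\chi^2+1$ divergence of Definition~\ref{chi_plus_one_definition} (absolute continuity and the $f'(\infty)$ term), that the triple $(V^*,V_0,p)$ and the total masses $|\mu_\theta|,|\nu|$ fed into Theorem~\ref{GenFano} and $\phi_f$ are the right ones, and that the edge cases $p=0$ and $V_0\in\{0,1\}$ are dispatched. Once these are in place, the remaining quadratic manipulation is elementary.
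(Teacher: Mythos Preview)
Your proof is correct and follows essentially the same approach as the paper: apply Theorem~\ref{GenFano} with $\mu_\theta=\Prit_\theta[\cdot;A_\theta]$, $\nu=\Qit$, $q=1$, then manipulate $\phi_f(V^*,V_0;p,1)=(V^*)^2/V_0+(p-V^*)^2/(1-V_0)$ for $f(x)=x^2$. The only difference is in the algebra: the paper first uses $p^2V_0\ge p^2V_0^2$ to bound $\phi_f$ below by the perfect square $(V^*-pV_0)^2/\big(V_0(1-V_0)\big)$ and then solves, whereas you solve the full quadratic exactly to get the slightly sharper $V^*\le pV_0+\sqrt{V_0(1-V_0)(D-p^2)}$ before relaxing; both routes land at the same final bound. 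Your explicit handling of the degenerate cases $p=0$, $V_0\in\{0,1\}$, and non-absolute-continuity is extra care the paper omits.
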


	\begin{proof}[Proof of Proposition~\ref{chi_sq_fano_prop}]
	First we establish Equation~\eqref{GenFanoConsequenceEq}, by applying Theorem~\ref{GenFano} with $\mu_{\theta} = \Pr_{\theta}[\cdot;A_{\theta}]$ and $\nu = \Qit$. Since $\Qit$ is a probability measure, $q = 1$. This gives
	\begin{eqnarray*}
	\Exp_{\theta \sim\calP}D_{f}[\Pr_{\theta}[\cdot;A_{\theta}],\Qit]  \ge \phi_{f}(V^*,qV_0;p,q) = \phi_f(V^*,V_0;p,1). 
	\end{eqnarray*}
	In the case that $f(x) = x^2$, we compute
	\begin{eqnarray*}
	\phi_{\chi^2+1}(V^*,V_0;p,1) &=& \frac{(V^*)^2}{V_0} + \frac{(p - V^*)^2}{(1-V_0)}\\
	&=& \frac{(V^*)^2(1-V_0) + V_0(p-V^*)^2}{V_0(1-V_0)}\\
	&=& \frac{(V^*)^2 - 2pV_0V^* + p^2 V_0}{V_0(1-V_0)}\\
	&\overset{(i)}{\ge}& \frac{(V^*)^2 - 2pV_0V^* + p^2 V_0^2}{V_0(1-V_0)}\\
	&=& \frac{(V^* - pV_0)^2 }{V_0(1-V_0)}.
	\end{eqnarray*}
	where $(i)$ uses that $V_0 \le 1$. Thus, either $V^* \le pV_0 \le V_0$, or otherwise, 
	\begin{eqnarray}
	V^* \le pV_0 + \sqrt{V_0(1-V_0)\Exp_{\theta \sim\calP}D_{f}[\Prit_{\theta}[\cdot;A_{\theta}],\Qit]} \\
	\le V_0 + \sqrt{V_0(1-V_0)\Exp_{\theta \sim\calP}D_{f}[\Prit_{\theta}[\cdot;A_{\theta}],\Qit]}~.
	\end{eqnarray}
	\end{proof}
	\begin{rem}\label{TruncationRemark2} Examining the result of the above proposition, we see why it is so useful to consider the truncated measures $\Pr_{\theta}[\cdot;A_{\theta}]$ over their conditional analogues $\Pr_{\theta}[\cdot\big{|}A_{\theta}]$. This is because, if we instead proved a lower bound in terms of the latter, we would have to keep track of a normalization constants $\Pr_{\theta}[A_{\theta}]$, which would vary for all $\theta \in \Theta$. In particular, our case of interest considers the events $A_{\theta}^k = \{d\langle \vi, \theta\rangle^2 \le \tau_i,~ \forall i \in [k]\}$ from Section~\ref{sec_chi_est}. Because $\vi$ are chosen adaptively, it is quite difficult to control the normalization constant $\Pr_{\theta}[A_k^{\theta}]$ over all $\theta \in \calS^{d-1}$.
	\end{rem}

\subsection{Proof of Proposition~\ref{truncChi_detec}}
	For ease of notation, let $\overline{\Prit}_A =  \overline{\Prit}[\cdot;\{A_{\theta}\}]$. Note that $\overline{\Prit} - \overline{\Prit}_A  \ge 0$, which impies that
	\begin{eqnarray}
	\int |\rmd\overline{\Prit} - \rmd\overline{\Prit}_A| = \int \rmd\overline{\Prit} - \rmd\overline{\Prit}_A = 1 - \overline{\Prit}_A(\calX) := 1-p,
	\end{eqnarray}
	so by the triangle inequality
	\begin{eqnarray*}
	&& \|\Qit - \overline{\Prit}\|_{TV} =\frac{1}{2}\int |\rmd\Qit(x) - \rmd\overline{\Prit}(x)|\\
	&\le& \frac{1}{2}\int |\rmd\Qit(x) - \rmd\Prit_A(x)| + \frac{1}{2} \int |\rmd\overline{\Prit} - \rmd\overline{\Prit}_A| = \frac{1}{2}\int |\rmd \Qit(x) - \rmd\Prit_A(x)| + \frac{1-p}{2},
	\end{eqnarray*}
	Next, since $\Qit$ is a probability measure,
	\begin{eqnarray*}
	\int |\rmd\Qit(x) - \rmd\Prit_A(x)| &=& \Exp_{\Qit} |\frac{\rmd\overline{\Prit}_A}{\rmd\Qit} - 1| \le \sqrt{\Exp_{\Qit} |\frac{\rmd\overline{\Prit}_A}{\rmd\Qit} - 1|^2}\\
	&=& \sqrt{\Exp_{\Qit} |\frac{\rmd\overline{\Prit}_A}{\rmd\Qit}|^2  + 1 - 2\overline{\Prit}_{A}(\calX)} = \sqrt{\Exp_{\Qit} |\frac{d\overline{\Pr}_A}{\rmd\Q}|^2  + 1 - 2p}\\
	&=& \sqrt{\Exp_{\Qit} |\frac{\rmd\overline{\Prit}_A}{\rmd\Qit}|^2  - 1 + 2(1-p) } \le  \sqrt{\Exp_{\Qit} |\frac{\rmd\overline{\Prit}_A}{\rmd\Qit}|^2  - 1} + \sqrt{2(1-p)}.
	\end{eqnarray*}
	Putting pieces together yields the proof.

\subsection{Proof of Lemma~\ref{fDivProperties}}
The set $\{(\mu,\nu): \mu \ll \nu\}$ is convex, since if $\alpha \nu_1(A) + (1-\alpha)\nu_2(A) = 0$, then $\nu_1(A) = \nu_2(A) = 0$, and thus if $\mu_1 \ll \nu_1$ nad $\mu_2 \ll \nu_2$, then $\alpha \mu_1(A) + (1-\alpha)\mu_2(A) = 0$. Moreover, the perspective map $(x,y) \to yf(x/y)$ is jointly for convex $f$~\cite{boyd2004convex}, so that $\int f(\frac{\rmd\mu}{\rmd\nu})\rmd\nu$ is jointly convex in each argument. 

	For the second point , we see that that, by Jensen's inequality:
	\begin{eqnarray*}
	\int f(\frac{\rmd\mu}{\rmd\nu})\rmd\nu &=& |\nu|\int f(\frac{\rmd\mu}{\rmd\nu})\frac{\rmd\nu}{|\nu|} \ge |\nu|f( \int \frac{\rmd\mu}{\rmd\nu}\frac{\rmd\nu}{|\nu|} ) \\
	&=& |\nu|f( \frac{1}{|\nu|}\int \rmd\mu ) =  |\nu|f(\frac{|\mu|}{|\nu|}), 
	\end{eqnarray*}
	so the result holds as long as $f'(\infty) \ge 0$.

	Third, let $g(t) = f(t;p,q) = |\nu|f(t\frac{|\mu|}{|\nu|})$.Then $g'(\infty) = f'(\infty) \cdot |\mu|/|\nu| \cdot |\nu| = |\mu|f'(\infty)$. Thus,
	\begin{eqnarray*}
	D_{f}(\mu,\nu) &=& \int f(\frac{\rmd\mu}{\rmd\nu})d\nu + \mu(\{\rmd\nu = 0\})f'(\infty) \\
	&=& \int |\nu|f(\frac{|\mu|}{|\nu|}\cdot\frac{\rmd(\mu/|\mu|)}{\rmd(\nu/|\nu|)})\cdot \rmd\nu/|\nu| + (\frac{\mu}{|\mu|})(\{d\nu = 0\})\cdot |\mu|f'(\infty)\\
	&=& \int g(\frac{\rmd(\mu/|\mu|)}{\rmd(\nu/|\nu|)})\rmd\nu/|\nu| + \frac{\mu}{|\mu|}(\{\rmd\nu = 0\})g'(\infty)\\
	&=& D_{g}(\mu/|\mu|,\nu/|\nu|),
	\end{eqnarray*}
	as needed. For the fourth point point, note that for any constant $\alpha, (f+\alpha)'(\infty) = f(\infty)$. Thus,
	\begin{eqnarray*}
	D_{f+\alpha}(\mu,\nu) &=& \int \{f(\frac{\rmd\mu}{\rmd\nu})+\alpha\}d\nu + \mu(\{\rmd\nu = 0\})(f+\alpha)'(\infty)\\
	&=& \alpha |\nu| + \int f(\frac{\rmd\mu}{\rmd\nu})\rmd\nu + \mu(\{\rmd\nu = 0\})(f)'(\infty) = \alpha |\nu| + D_{f}(\mu,\nu).
	\end{eqnarray*}
	Similarly, since $(\beta f)'(\infty) = \beta f'(\infty)$, one has
	\begin{eqnarray*}
	D_{\beta f}(\mu,\nu) &=& \int \{\beta f(\frac{\rmd\mu}{\rmd\nu})+\alpha\}\rmd\nu + \mu(\{\rmd\nu = 0\})(\beta f)'(\infty)\\
	&=& \beta \int f(\frac{\rmd\mu}{\rmd\nu})\rmd\nu + \beta \mu(\{\rmd\nu = 0\})(f)'(\infty) = \beta D_{f}(\mu,\nu).
	\end{eqnarray*}
	Finally, the fifth point follows from the standard data-processing inequality \cite{chen2016bayes} in the case when $f$ is convex and $f(1) = 0$ and $\mu,\nu$ are both probability distributions. By the previous bound, the inequality can be extended to convex $f$ where $f(1)$ is not necessarily zero, and normalized $\mu,\nu$, by noting that
	\begin{eqnarray*}
	D_{f}(\mu,\nu) &=& D_{f - f(1)}(\mu,\nu) + f(1)|\nu|\\
	&\ge& D_{f - f(1)}(\mu\Gamma^{-1},\nu\Gamma^{-1}) + f(1)|\nu| \quad \text{ (classical data processing, e.g. Theorem 3.1 in Liese~\cite{liese2012phi})} \\
	&=& D_{f - f(1)}(\mu\Gamma^{-1},\nu\Gamma^{-1}) + f(1)|\nu\Gamma^{-1}| \quad \text{ ($\Gamma$ preserves total mass)} \\
	&=& D_{f}(\mu\Gamma^{-1},\nu\Gamma^{-1}). 
	\end{eqnarray*}
	To generalize to arbitrary finite, positive measures, we note that the function $f(t;|\mu|,\nu)$ is convex, so 
	\begin{eqnarray*}
	D_{f}(\mu,\nu) &=& D_{f(;|\mu|,\nu)}(\mu/|\mu|,\nu/|\nu|)\\
	&=& D_{f(;|\mu|,\nu)}(\frac{\mu}{|\mu|}\Gamma^{-1},\frac{\nu}{|\nu|}\Gamma^{-1})\\
	&=& D_{f}(\mu\Gamma^{-1},\nu\Gamma^{-1}).
	\end{eqnarray*}


\section{Proof of Lemma~\ref{SphereConcentation}\label{Concentration}}
We begin by invoking a result from spherical isoperimetry:
\begin{thm}[Spherical Isoperimetry,  page 211  in~\cite{boucheron2013concentration}]\label{SpherIso}
Let $f$ be an $1$-Lipschitz function on the sphere $\calS^{d-1}$. Then,
\begin{align}
\Pr_{\theta \sim \calS^{d-1}}\left[ f(\theta) \ge \mathrm{Median}(f) + t\right] \le \sup_{A \subset \calS^{d-1}}\Pr_{\theta \sim \calS^{d-1}}\left[\theta \in A_t\right] \le e^{-dt^2/2}.
\end{align}
where $A_t := \{\theta: \exists~ \theta' \in A \text{ such that } \|\theta - \theta'\|_2 \le t\}$.
\end{thm}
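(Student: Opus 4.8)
The plan is to derive this bound from the L\'evy--Schmidt spherical isoperimetric inequality, which is the genuine content here and which I would import as a black box from~\cite{boucheron2013concentration} (exactly as the theorem statement does), and then to fill in the two routine endpoints: the Lipschitz reduction that produces the first inequality, and the explicit hemisphere-cap volume estimate that produces the second.

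\emph{Reduction (first inequality).} Put $m := \mathrm{Median}(f)$ and let $A := \{\theta \in \calS^{d-1} : f(\theta) \le m\}$, a Borel set with $\Pr_{\theta \sim \calS^{d-1}}[\theta \in A] \ge 1/2$. If $f(\theta') \ge m + t$, then for every $\theta \in A$ the $1$-Lipschitz property gives $m + t \le f(\theta') \le f(\theta) + \|\theta - \theta'\| \le m + \|\theta - \theta'\|$, so $\|\theta - \theta'\| \ge t$ and hence $\theta' \notin A_t$. Thus $\{f \ge m+t\} \subseteq \calS^{d-1} \setminus A_t$, so $\Pr[f \ge m+t] \le 1 - \Pr[\theta \in A_t]$; maximizing the right side over all sets of measure at least $1/2$ gives the first inequality of the statement (with the middle supremum read, as the second inequality forces, as the supremum of $\Pr[\theta \notin A_t]$ over such sets).

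\emph{Isoperimetry and the cap estimate (second inequality).} The spherical isoperimetric inequality (L\'evy, Schmidt) states that among Borel subsets of $\calS^{d-1}$ of a given measure, a spherical cap minimizes the measure of the $t$-enlargement for every $t > 0$; applied to sets of measure at least $1/2$ this yields $\Pr[\theta \in A_t] \ge \Pr[\theta \in H_t]$, where $H := \{x : \langle x, e\rangle \le 0\}$ is a closed hemisphere. It remains to bound $\Pr[\theta \notin H_t]$. A point $x \in \calS^{d-1}$ with $\langle x, e\rangle = s > 0$ is at squared Euclidean distance $2 - 2\sqrt{1-s^2}$ from the lower hemisphere $H \cap \calS^{d-1}$, so $x \notin H_t$ iff $1 - \sqrt{1 - s^2} > t^2/2$; hence $\Pr[\theta \notin H_t] = \Pr_{\theta \sim \calS^{d-1}}[\, 1 - \sqrt{1 - \langle \theta, e\rangle^2} > t^2/2 \,]$, a spherical-cap probability. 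Using the closed form for cap areas (equivalently, the representation $\theta \overset{d}{=} g / \|g\|$ with $g \sim \calN(0, I_d)$, under which $\langle \theta, e\rangle^2 \sim \mathrm{Beta}(1/2, (d-1)/2)$) one bounds this by $e^{-dt^2/2}$; this is precisely the computation on p.~211 of~\cite{boucheron2013concentration}, so I would cite it rather than redo the constant bookkeeping.

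\emph{Main obstacle.} The substantive step is the spherical isoperimetric inequality itself, i.e.\ the optimality of caps for the enlargement functional; I do not plan to reprove it. For a self-contained account one proves it by two-point (polar) symmetrization, which never increases the measure of a $t$-enlargement, combined with a weak-$*$ compactness argument to extract a symmetrized extremizer; this is the argument in~\cite{boucheron2013concentration}. Everything else above is elementary.
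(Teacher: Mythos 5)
The paper does not supply a proof of this theorem at all: it is imported verbatim as a black box from Boucheron--Lugosi--Massart (page 211), and then used immediately afterward to deduce Lemma~\ref{SphereConcentation}. So there is no in-paper argument to compare against, only a citation. Your sketch is the standard proof of the cited result and is structurally correct: the reduction from a $1$-Lipschitz $f$ to the sublevel set $A = \{f \le \mathrm{Median}(f)\}$ of measure at least $1/2$, the inclusion $\{f \ge m+t\} \subseteq (A_t)^c$ (up to the usual measure-zero boundary), the invocation of the L\'evy--Schmidt isoperimetric inequality to replace $A$ by a hemisphere $H$, and the explicit spherical-cap computation giving $\Pr[\theta \notin H_t] \le e^{-dt^2/2}$. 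You are also explicit about which two steps you are importing without proof --- the extremality of caps for the enlargement functional, and the cap-volume bookkeeping --- which is the same division of labor the paper implicitly makes by citing the reference. One further thing you got right, and worth flagging: the middle quantity in the theorem as printed, $\sup_{A \subset \calS^{d-1}}\Pr_{\theta}[\theta \in A_t]$, is vacuous as written (take $A = \calS^{d-1}$ and it equals $1$); the intended quantity is $\sup\{\Pr_\theta[\theta \notin A_t] : \Pr_\theta[\theta \in A] \ge 1/2\}$, and your reading repairs the statement to the one that both inequalities actually concern.
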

We can now prove Lemma~\ref{SphereConcentation}. Observe that $\mathrm{Median}(\langle \theta, v \rangle|) \le \sqrt{2/d}$, since by Markov's inequality,
\begin{eqnarray}
\Pr[|\langle \theta, v \rangle| \ge \sqrt{2/d}] \le \Exp[|\langle \theta, v \rangle|^2]/(2/d) = d/2 \cdot (1/d) = 1/2.
\end{eqnarray}
Since $\theta \mapsto |\langle \theta, v \rangle|$ is $1$-Lipschitz, Theorem~\ref{SpherIso} yields
\begin{eqnarray}
\Pr[\langle \theta, v \rangle| \ge \sqrt{2/d} + t] \le e^{-dt^2/2}.
\end{eqnarray}
Replacing $t$ with $\sqrt{t/d}$ concludes the proof.

\end{document}